\documentclass{article}

\title{Active Bipartite Ranking with\\ Smooth Posterior Distributions}

\usepackage{times}
\usepackage[scale=0.70]{geometry}
\usepackage{graphicx} 
\usepackage{notations}
\usepackage{mathtools}
\usepackage{amsfonts, amsmath, amssymb}
\usepackage{amsthm}
\usepackage{microtype}
\usepackage{subfigure}
\usepackage{booktabs}  
\usepackage[utf8]{inputenc} 
\usepackage[T1]{fontenc}    
\usepackage{hyperref}       
\usepackage{url}            
\usepackage{xcolor}         
\usepackage[utf8]{inputenc} 
\usepackage[T1]{fontenc}    
\usepackage{url}            
\usepackage{nicefrac}       
\usepackage{xcolor}
\usepackage{natbib}
\usepackage{subcaption}
\usepackage{algorithmic}

\usepackage{algorithm}
\usepackage{wrapfig}
\usepackage{mathtools}

\usepackage[textsize=tiny]{todonotes}

\begin{document}
\title{Active Bipartite Ranking with\\ Smooth Posterior Distributions}
\date{}

\author{%
  James Cheshire \\
  Telecom Paris\\
  \texttt{firstname.lastname@telecom-paris.fr} \\
  \and
  Stephan Clémençon \\
  Telecom Paris \\
  \texttt{\texttt{firstname.lastname@telecom-paris.fr}} \\
}

\newtheorem{lemma}{Lemma}
\newtheorem{theorem}{Theorem}
\newtheorem{corollary}{Corollary}
\newtheorem{property}{Property}
\newtheorem{proposition}{Proposition}
\newtheorem{definition}{Definition}
\newtheorem{assumption}{Assumption}
\newtheorem{remark}{Remark}
\newtheorem{example}{Example}
\bibliographystyle{apalike}

\maketitle

\begin{abstract}

In this article, bipartite ranking, a statistical learning problem involved in many applications and widely studied in the passive context, is approached in a much more general \textit{active setting} than the discrete one previously considered in the literature. While the latter assumes that the conditional distribution is piece wise constant, the framework we develop permits in contrast to deal with continuous conditional distributions, provided that they fulfill a Hölder smoothness constraint. We first show that a naive approach based on discretisation at a uniform level, fixed \textit{a priori} and consisting in applying next the active strategy designed for the discrete setting generally fails. Instead, we propose a novel algorithm, referred to as \klcrank and designed for the continuous setting, which aims to minimise the distance between the ROC curve of the estimated ranking rule and the optimal one w.r.t. the $\sup$ norm. We show that, for a fixed confidence level $\epsilon>0$ and probability $\delta\in (0,1)$, \klcrank is PAC$(\epsilon,\delta)$. In addition, we provide a problem dependent upper bound on the expected sampling time of \klcrank and establish a problem dependent lower bound on the expected sampling time of any PAC$(\epsilon,\delta)$ algorithm. Beyond the theoretical analysis carried out, numerical results are presented, providing solid empirical evidence of the performance of the algorithm proposed, which compares favorably with alternative approaches.

\end{abstract}
\section{Introduction}
\vspace{-0.2cm}Whether in the medical field (diagnostic assistance), in signal processing (anomaly detection), in finance (credit-risk screening) or in automatic document retrieval (search engines), the goal pursued is not always to learn how to predict a binary label $Y$, valued in $\{0,1\}$ say (\textit{e.g.} ill \textit{vs} healthy, abnormal \textit{vs} normal, default \textit{vs} repayment, irrelevant \textit{vs} relevant) based on some related random input information $X$ like in binary classification, the flagship problem in machine-learning. More often, the aim is to learn a (real valued) ranking function $f(x)$ so as to order all possible values $x$ for $X$ like any increasing transform of the posterior probability $\eta(x)=\mathbb{P}(Y=1\mid X=x)$ would do. This statistical learning problem is known as \textit{Bipartite Ranking} and, due to the wide range of its applications, has received much attention these last few years. Because of its global (and not local) nature, the gold standard to measure ranking performance is of functional nature, namely the $P$-$P$ plot referred to as the $\roc$ curve, \textit{i.e.} the plot of the true positive rate $\mathbb{P}(f(X)>t\mid Y=1)$ against the false negative rate $\mathbb{P}(f(X)>t\mid Y=0)$ as the threshold $t\in \mathbb{R}$ varies.  Most of the works documented in the literature consider the passive batch learning situation, where the ranking function $f(x)$ is first built based on the preliminary observation of $n\geq 1$ independent copies of the generic random pair $(X,Y)$ in the training stage and next applied to new (temporarily) unlabeled observations in the test/predictive phase. Various algorithmic approaches and dedicated theoretical guarantees have been elaborated, based on direct optimization of the empirical $\roc$ curve (see \cite{CV09ieee} or \cite{CV08} for instance) or on maximization of scalar summary criteria such as the $\auc$ (\textit{i.e.} the empirical Area Under the $\roc$ Curve), refer to \textit{e.g.} \cite{AGHHPR05}, \cite{Clemencon08Ranking} or \cite{MW16}. Various extensions of the original framework have been recently studied: \textit{Multipartite Ranking}, which corresponds to the case where the ordinal label $Y$ takes a (finite) number of values strictly larger than $2$, is considered in \cite{CRV13}, the situation, referred to as as \textit{Continuous Ranking}), where $Y$ is a continuous real valued r.v. is investigated in \cite{NIPS2017_97416ac0} while the case where no label is available, known as \textit{Unsupervised Ranking}, is investigated in \cite{10.1214/18-EJS1474}. 

In bipartite ranking, the statistical framework is as follows. One observes $n\geq 1$ independent copies $\{(X_1,Y_1),\; \ldots,\; (X_n,Y_n)\}$ of a generic random pair $(X,Y) \in [0,1]\times\{0,1\}$. We assume $X$ is draw in uniformly from $[0,1]$ and models some information hopefully useful to predict $Y$. Specifically, $Y \sim \cD_X$, for some unknown distribution $ \cD_X$. In contrast to classification, the goal pursued is of global (and not local) nature. It is not to assign a label, positive or negative, to any new input observation $X$ but to rank any new set of (temporarily unlabeled) observations $X'_1,\; \ldots,\; X'_{n'}$ by means of a (measurable) scoring function $s:[0,1]\rightarrow \mathbb{R}$, so that those with greater labels appear on top of the list (\textit{i.e.} are those with the highest scores) with high probability. More formally, the accuracy of any scoring rule can be evaluated through the $\roc$ curve criterion or its popular scalar summary, the $\auc$ (standing for the Area Under the $\roc$ Curve), and, as expected, optimal scoring functions w.r.t. these performance measures can be shown to be increasing transforms of the posterior probability $\eta(x) =\mathbb{P}(Y=1 \mid X=x)$. Though easy to formulate, this problem encompasses many applications, ranging from credit risk screening to the design of decision support tools for medical diagnosis through (supervised) anomaly detection. The vast majority of dedicated articles consider the \textit{batch} situation solely, where the learning procedure fully relies on a set of training examples given in advance, however, more recently in \cite{cheshire2023active}, bipartite ranking has been considered in an \textit{active learning} framework. In such an active setting the learner is able to formulate queries in a sequential manner, so as to observe the labels at new data points in order to refine progressively the scoring/ranking model. In the work of \cite{cheshire2023active} a key (restrictive) assumption is made that the regression function $\eta$ is piece wise constant on a grid of \textit{known} size $K$ on the feature space. In one dimension, this means the learner assumes there are some $(\mu_k)_{k\in [K]} \in [0,1]^K$ such that: 

\begin{equation}\label{eq:piececont}
\eta(x) = \sum_{i=1}^K \mu_k \Ind\big(x \in [i/K,(i+1)/K)\big)\;.
\end{equation}

where $\Ind$ denotes the indicator function. As pointed out in \cite{cheshire2023active}, we see that under the assumption of Equation \eqref{eq:piececont}, the problem becomes equivalent to a \textit{multi armed bandit}. In the multi armed bandit problem, the learner is presented with a set of $K$ "arms", each one corresponding to some unknown distribution. The learner then plays a game of several rounds, where in each round the learner must choose one of the $K$ arms from which they receive a sample from the corresponding distribution, see \cite{lattimore2020bandit} for an overview. Much of the literature on multi armed bandits concerns maximisation of cumulative reward, however, several strands of literature consider so called "pure exploration" bandit problems, where the learner does not care about there cumulative reward, but rather aims to uncover some underlying property of the arms. The classical example of a pure exploration bandit problem is best arm identification. We see that, as $K$, the size of the grid upon which $\eta$ is assumed to be piecewise constant, is taken as known to the learner, the problem under assumption of Equation \eqref{eq:piececont} is then equivalent to a $K$ armed pure exploration bandit problem - one can view each piece wise constant section of the grid as an arm. In this setup, as studied in \cite{cheshire2023active}, the goal of the learner is to uncover the ranking of the arms, with their regret given by the ROC criterion.

A fundamental problem, recognised in the bandit literature, is that, assuming one has a finite set of arms, which can all be sampled in reasonable time, is unsuitable in practice. There have been various responses to this problem, one being continuous armed bandits, otherwise known as $\cX$-armed bandits. Here, instead of a finite set of arms, one considers a function on some feature space $\cX$. The actions of the learner now correspond to querying points of the feature space and receiving noisy evaluations of the function at said points. Many works have considered the classical bandit problems of minimising cumulative regret and best arm identification, in the $\cX$-armed bandit setting, see \cite{bubeck2011x}, \cite{grill2015black},\cite{pmlr-v98-bartlett19a}, \cite{akhavan2020exploiting} and \cite{locatelli2018adaptivity}. As the above works have considered classical bandit problems in the $\cX$-armed setting, our goal in this paper is to consider the problem of bipartite ranking in a continuous setting. That is, we remove the piece wise constant assumption on $\eta$, which is rather assumed to be a continuous function, subject to certain smoothness constraints, as in the classic $\cX$-armed bandit setting. 

\paragraph{Our contributions} In this paper we consider the bipartite ranking problem under a smoothness constraint on the posterior $\eta$. Specifically, we remove the piece wise constant assumption of \cite{cheshire2023active} and instead assume that the regression function $\eta$ is $\beta$-Hölder smooth in each dimension, see Assumption \ref{ass:1}, where \textit{$\beta$ is known to the learner}. We describe an algorithm, \klcrank, which sequentially queries points of the feature space $\cX$. Given a confidence level $\epsilon>0$ and probability $\delta>0$, the goal of \klcrank is, in a few queries as possible, to output a ranking of $\cX$, such that the induced ROC curve of said ranking is within $\epsilon$ of the optimal ROC curve, in terms of the sup norm, with probability greater than $1-\delta$. Theorem \ref{thm:packlcrank} then shows that \klcrank satisfies the above statistical guarantee and provides an upper bound on it's expected total number of queries. In Theorem \ref{thm:lb}, we also demonstrate a lower bound on the expected number of queries of any PAC$(\epsilon,\delta)$ algorithm, matching the upper bound of Theorem \ref{thm:packlcrank}, up to log terms.

\section{Background and Preliminaries}\label{sec:background}

\paragraph{Notation}\label{sec:not} Here we introduce several dedicated notions that will be extensively used in the subsequent analysis. By $\lambda$ is meant the Lebesgue measure on $[0,1]^d$.  For any $a,\; b$ in $[0,1]$, $\Ber(a)$ refers to the Bernoulli distribution with mean $a$ and $\kl(a,b)$ to the Kullback Leibler divergence between
 the Bernoulli distributions $\Ber(a)$ and $\Ber(b)$. The indicator function
 of any event $\xi$ is denoted by $\Ind\left(\xi\right)$, for $d\in \mathbb{N}, \epsilon>0$, we write $\cE_d(\epsilon)$ for the $\epsilon$-net on $[0,1]^d$ and for $x \in [0,1]^d$, $r>0$, $\cB_x^d (r)$ for the $d$-dimensional ball of radius $r$ centered at $x$. 
 
\paragraph{Bipartite ranking}
A rigorous formulation of ranking involves functional performance measures. Let $\cS$ be the set of all scoring functions, any $s\in \cS$ defines a preorder $\preceq_s$ on $\cX$: for all $x,x'\in \cX$, $x\preceq_s x' \Leftrightarrow s(x)\leq s(x')$. From a quantitative perspective, the accuracy of any scoring rule, can be evaluated through the $\roc$ curve criterion, namely the PP-plot $t\in \mathbb{R}\mapsto (1-H_{s}(t),\; 1-G_{s}(t))$, where
$H_{s}(t)=\mathbb{P}\left\{ s(x)\leq t \mid Y =  0  \right\}$ and  $G_{s}(t)=\mathbb{P}\left\{ s(x)\leq t \mid Y = 1\right\}$, for all $t\in \mathbb{R}$. The curve can also be viewed as the graph of the c\`ad-l\`ag function $\alpha\in (0,1)\mapsto \roc (s,\alpha)=1-G_s\circ H^{-1}_{s}(1-\alpha)$. The notion of $\roc$ curve defines a partial order on the set of all scoring functions (respectively, the set of all preorders on $\cX$): $s_1$ is more accurate than $s_2$ when $\roc(s_2,\alpha)\leq \roc(s_1,\alpha)$ for all $\alpha\in (0,1)$.  As can be proved by a straightforward Neyman-Pearson argument, the set $\mathcal{S}^*$ of optimal scoring functions is composed of increasing transforms of the posterior probability $\eta(x)=\mathbb{P}\{Y=1 \mid X=x  \}$, $x\in \cX$. We have 
$$\mathcal{S}^*=\{s\in \cS: \; \;\forall x,\; x' \in[0,1]^d,\;\; \eta(x)<\eta(x') \Rightarrow s^*(x)<s^*(x') \}\;,$$ 
and 
$$\forall (s,s^*)\in \cS\times \cS^*,\; \forall \alpha\in (0,1),\; \roc(s,\alpha)\leq \roc^*(\alpha):=\roc(s^*,\alpha)\;.$$ 
The ranking performance of a candidate $s\in \cS$ can be thus measured by the distance in $\sup$-norm between its $\roc$ curve and $\roc^*$, namely $$d_{\infty}(s,s^*) := \sup_{\alpha \in (0,1)}\{ \roc^*(\alpha) - \roc(s,\alpha)\}\:.$$
An alternative convention to represent the ROC of a scoring function $s$, which we will use for the remainder of this paper, is to consider the broken line $\widetilde{\roc}(s,.)$, which arises from connecting the PP-plot by line segments at each possible jump of the cdf $H_s$. {\bf From here on out when referring to the $\roc$ of a scoring function $s$, we refer to the broken line $\widetilde{\roc}(s,.)$.}

\paragraph{The active learning setting}
Whereas in the batch mode, the construction of a nearly optimal scoring function (\textit{i.e.} a function $s\in \cS$ such that $d_{\infty}(s,s^*)$ is 'small' with high probability) is based on a collection of independent training examples given in advance, the objective of an \textit{active learner} is to formulate queries in order to recover sequentially the optimal preorder on the feature space $\cX$ defined by the supposedly unknown function $\eta$. That is, the active learner plays a game with multiple time steps, where, at time each step $t>0$, they must choose a point $a_t \in \cX$ to query, so as to observe the random label $Y_t \sim \cB er\left(\eta(a_t)\right)$ and refine the scoring model incrementally. After a sufficient number of rounds has elapsed, chosen at the learner's discretion, a final scoring function $\heta$, is output. We refer to the regret of the learner as the distance in the sup-norm between the ROC curves of $\heta$ and $\eta$, that is the regret of the learner is given as, $d_\infty(\eta,\heta)$.
\paragraph{Assumptions on the feature space and posterior}
We assume the feature space $\cX$ is of the form $[0,1]^d$ for some dimension $d\in \mathbb{N}$. We assume that the regression function, i.e. the posterior probability $\eta$ is $\beta$-Holder smooth in each dimension, see Assumption \ref{ass:1}.
\begin{assumption}\label{ass:1}
For some $\beta > 0$, known to the learner, there exists a constant $C> 0$ such that $\forall, x,y\in [0,1]^d$,

$$\forall i \leq d, |\eta(x_i) - \eta(y_i)| \leq C|x_i - y_i|^\beta \;.$$ 
\end{assumption}

For clarity we assume $C=1$, otherwise our results and algorithms would change only in the constant terms, which would then depend upon $C$. We write $\cB$ for the set of all problems, where each problem $\nu \in \cB$ is defined by a specific posterior probability $\eta : \cX \rightarrow [0,1]$ satisfying Assumption \ref{ass:1}. For simplicity, we suppress the dependency upon the smoothness constant $\beta$ and dimension $d$ in the notation $\cB$. 

\paragraph{Policies and fixed confidence regime.} We denote the outputted scoring function of the learner $\hat \eta \in S$. The way the learner interacts with the environment - i.e. their choice of points to query, how many samples to draw in total and their choice of $\hat \eta \in S$, we term the \textit{policy} of the learner. 
We write $\cC$ for the set of all possible policies of the learner. For a policy $\pi \in \cC$ and problem $\nu \in \cB$ we denote random variable $\tau_\nu^\pi$ as the stopping time of policy $\pi$. We write $\heta_\nu^\pi$ for the scoring function outputted by policy $\pi$ on problem $\nu$. Where obvious we may drop the dependency on $\pi,\nu$ in the notation, referring to the scoring function outputted by the learner as simply $\heta$.
We write $\mathbb{P}_{\nu,\pi} $ as the distribution on all samples gathered by a policy $\pi$ on problem $\nu$. We similarly define $\mathbb{E}_{\nu,\pi}$. For the duration of this paper we will work in the \textit{fixed confidence regime}. For a confidence level $\epsilon>0$ and $\delta>0$, a policy $\pi$ is said to be PAC$(\delta,\epsilon)$ (probably approximately correct), on the class of problems $\cB$, if, 
\begin{equation}\label{eq:pac}
\forall \nu \in \cB, \; \mathbb{P}_{\nu,\pi} \left(d_\infty(\heta,\eta)\leq \epsilon\right) \geq 1- \delta\;.    
\end{equation} The goal of the learner is to then obtain a PAC$(\delta,\epsilon)$ policy $\pi$, such that the expected stopping time in the worst case, $\sup_{\nu \in \cB} \mathbb{E}_{\nu,\pi}[\tau_\nu^\pi]$, is minimised. 

\paragraph{Problem complexity}
With the term "problem complexity", we refer to the minimum number of samples the learner can expect to draw, while remaining PAC$(\epsilon,\delta)$. It is a quantity that will depend upon the features the problem, i.e. the shape of the posterior $\eta$. Our problem complexity will follow from \cite{cheshire2023active} where the authors consider the problem under the assumption $\eta$ is piece wise constant on a uniform grid of known size $K$, where the sequence of means $\mu_1,...\mu_K$ denote the value of $\eta$ on each grid point, see Equation \eqref{eq:piececont}. In \cite{cheshire2023active}, the authors demonstrate that, for a grid point $i \in [K]$, a PAC$(\epsilon,\delta)$ learner must be able to correctly distinguish $\mu_i > \mu_j$ vs $\mu_j > \mu_i$ for all $j \in [K]: |\mu_i - \mu_k| \leq \Delta_i$, where, 
$$\Delta_i := \min\bigg\{z > 0 : z\sum_{i\neq j}  \Ind\big(|\mu_i - \mu_j| \leq x \big) \geq K\epsilon p (1 - \mu_i) \bigg\}\;.$$ 
Furthermore they show that this gap $\Delta_i$ is tight, in the sense that if the learner cannot correctly distinguish $\mu_i > \mu_j$ vs $\mu_j > \mu_i$ for all pairs $i,j : |\mu_j - \mu_i| \geq \Delta_i$, they are not PAC$(\delta,\epsilon)$, see Lemmas 2.2 and 2.3 in \cite{cheshire2023active}. We adapt the discrete gap $\Delta_i$ to our continuous setting as follows, for a point $x\in\cX$ define the following gap $\Delta(x)$,
\begin{equation}
\Delta(x) := \min\bigg\{z > 0 : z\lambda(\{y: |\eta(x) - \eta(y)| \leq z\} ) \geq \epsilon p (1 - \eta(x)) \bigg\}\wedge (1-\eta(x))\;, 
\end{equation}
where $p:= \int_{x\in [0,1]^d} \eta(x) \; dx$ and we remind the reader that $\lambda$ denotes the Lebesgue measure on $[0,1]^d$. The gap $\Delta(x)$ can be seen as the minimum radius of the ball around $x$, such that in the worst case, miss ranking all points in $\cB_{\Delta(x)}^d(x)$, in comparison to $x$, one suffers more than $\epsilon$ regret, i.e. $d_\infty(\heta,\eta) > \epsilon$. Naturally the gap $\Delta(x)$ decreases with the tolerance parameter $\epsilon$, the smaller our tolerance for error, the more precise we must be in our ranking. We note that $\Delta(x)$ also depends on several properties of the regression function $\eta$ in the neighborhood of $x$. Firstly, as the posterior probability at $x$ increases compared to the average posterior probability $p$, the gap $\Delta(x)$ will decrease. This follows our intuition, as points of the feature space with high posterior probability correspond to the start of the ROC curve, where the gradient is higher and small errors correspond to a greater regret. Secondly, when the set of points with similar posterior probability to $x$ is large, the gap $\Delta(x)$ will again decrease. This dynamic also follows our intuition, as misranking a large sections of the feature space, even if only by a small amount, can have a substantial effect on our regret. 

To correctly rank $x$ against all points $y: |\eta(x) - \eta(y)| \geq \Delta(x)$, with high probability, the learner will need an estimate of the posterior probability at $x$, with high probability and precision at least $\Delta(x)$. Referring to well known results from A/B testing, see \cite{kaufmann2014complexity} and the discussion in Section 2.2 of \cite{cheshire2023active}, we see that, to do this, the learner will need to draw at least roughly $\frac{1}{\kl(\eta(x) - \Delta(x), \eta(x) + \Delta(x))}$ samples from point $x$. Of course, the learner cannot sample all points in our continuous feature space, they must instead exploit the smoothness constraint of Assumption \ref{ass:1}, with this in mind, we now define the complexity of a point,

\begin{equation}\label{eq:samp}
H(x) := \frac{\Delta(x)^{-d/\beta}}{\kl(\eta(x) - \Delta(x), \eta(x) + \Delta(x))}\;,
\end{equation}

with the minimum number of samples needed by the learner then of the order, $\int_{x \in [0,1]^d} H(x)\;dx\;.$ Our sample complexity is seen to be well chosen, as it is attained by the upper and lower bounds of Theorems \ref{thm:packlcrank} and \ref{thm:lb} respectively, up to constant and log terms. 

\subsection{Related literature}\label{sec:lit}

\paragraph{Comparison to the discrete setting}
There is a wide range of literature concerning ranking in the discrete setting - that is, where one has a finite number of actions to rank. To the best of our knowledge the closest work to our own is that of \cite{cheshire2023active}, as they consider directly the discrete version of our setting. For completeness we will first mention other related strands of literature. Several works consider the problem of ranking $n$ experts based on there performance across $m$ tasks. Recently, in \cite{pilliat2024optimal} they consider this problem in the batch setting. Specifically they deal with the task of ranking the rows of a matrix with isotonic columns, from which the learner receives a batch of noisy observations. As pointed out in \cite{cheshire2023active}, algorithms designed for the batch setting perform poorly in active bipartite ranking. In \cite{pmlr-v202-saad23b} the authors tackle the problem of ranking experts in an active learning setting, wherein the learner makes sequential queries to pairs of actions and experts, receiving a noisy evaluation of an experts performance on said action. They make a monotonictiy assumption on the experts, that is for each pair of experts, one outperforms the other on all tasks. If one restricts to the setting $m=1$, their setting is very similar to that considered in \cite{cheshire2023active}. However, crucially, they only consider the case where the learner is required to return a perfect ranking with high probability. This fundamentally changes the problem when compared to \cite{cheshire2023active} and our own setting, as here the learner is only required to return an essentially "$\epsilon$ good ranking" - that is the the ROC curve of their estimated scoring function must be within epsilon of the optimal ROC curve, in the $d_\infty$ distance. 

Another stream of literature concerns pairwise comparisons. In the active setting, see e.g. \cite{jamieson2011active}, \cite{heckel2019active}. In \cite{heckel2019active}, each pair of actions $i,j\in [K]$ has a corresponding probability $M_{i,j}$, being the probability item $i$ beats item $j$. The correct ranking is then given by an ordering according to the Borda scores of the actions. Compared to our setting, differing strategies are required to deal with noisy pairwise comparisons. To estimate the Borda score of a particular action, one must sample many pairs. However, in our setting, one can estimate the score, i.e. the $\eta$ of an action by sampling it in isolation. Furthermore in \cite{heckel2019active}, the goal of the learner is to give a perfect ranking with high probability, which again distances their work from our own. In the \cite{NIPS2017_db98dc0d} the authors tackle the problem of finding a "$\epsilon$-good" Borda ranking of the arms, in the sense that the learner cannot incorrectly rank any two arms who's Borda scores differ by more than $\epsilon$.  In this setting, for a certain arm, one needs to ensure that either, it is well ranked against all others with high probability, or that the width of the confidence interval for the Borda score of said arm, is less than $\epsilon$. In our setting, on the other hand, the required confidence for a single point is more complex. Due to the global nature of the ROC curve, it does not suffice to simply have the level of confidence at $\epsilon$, across the feature space, but rather the confidence level must depend on local and global properties of the regression function, see Equation \eqref{eq:samp}. For similar reasons, considering a PAC$(\epsilon,\delta)$ setup with regards to either the maximum difference between missed ranked arms or the number of incorrectly ranked arms, as in \cite{inproceedings}, is of a very different nature to our work. 


\paragraph{Comparison to $\cX$ - armed bandits}
The $\cX$ armed bandit is well studied, particularly for the problems of optimisation and cumulative regret minimisation. As in our case, the majority of the literature makes some form of smoothness assumption on the regression function. For both optimisation and cumulative regret minimisation, it is natural to consider functions that are smooth around one of their global optima. This is in contrast to our setting, as ranking is a global problem, one needs a global smoothness constraint. In \cite{bubeck2011x} the authors propose an algorithm \texttt{HOO} which utilises a hierarchical partitioning of the feature space, by iteratively growing a binary tree. When choosing the next node to split, they follow an optimistic policy, choosing the node with the highest upper bound on its expected payoff. Naturally, the expected utility of a node differs fundamentally from the case of optimisation to that of ranking. Further more, such a tree based algorithm would be cumbersome in our setting, as the width of KL divergence based confidence intervals varies across the feature space. 
The algorithm \texttt{HOO} takes the smoothness parameters as known, in \cite{grill2015black}, and further developed in \cite{pmlr-v98-bartlett19a}, the authors consider the problem of optimisation and propose an algorithm \texttt{POO}, which takes \texttt{HOO} as a subroutine and runs across many smoothness parameters, this cross validation step is an attempt at dealing with unknown smoothness. In \cite{grill2015black} the authors also consider a none topological smoothness assumption, that relates directly to the hierarchical partition of the feature space. Essentially, they assume that in the cell of depth $h$ containing the optima, the regression function is close to the optima on some rate depending on $h$, this idea is further developed in \cite{pmlr-v98-bartlett19a}. It would be difficult to formulate our problem under such a hierarchical smoothness assumption. As the ranking problem is global, we would need to control the variance of the function in each cell, giving something similar to our Hölder smoothness assumption while being more cumbersome.

In addition to optimisation, the $\cX$ armed bandit has been studied under cumulative regret minimisation. In \cite{locatelli2018adaptivity} it is shown that optimal adaptation to the smoothness constraint is impossible when one wishes to minimise cumulative regret. The $\cX$ armed bandit has also seen interest for quantile and CVaR optimisation, see \cite{torossian2019x}. The $\cX$ armed bandit has also been considered in the noiseless setting, see \cite{pmlr-v70-malherbe17a}, \cite{kawaguchi2016global}. There are multiple works considering infinite armed bandit problems where there is no topological relation between the arm indices and means, i.e. when the learner draws arms from a reservoir, see \cite{chaudhuri2017pac},\cite{aziz2018pure},\citep{samuels2020complexity}, \cite{heide2021bandits}. The lack of such topological relation gives the above literature a very different flavour to our own. 

\paragraph{Novelty of our results in comparison to \cite{cheshire2023active}} Let us consider the performance of a naive adaptation of the \rankmessy algorithm of \cite{cheshire2023active} to our setting. The \rankmessy algorithm is designed to operate under the assumption that $\eta$ is piece wise constant on some uniform grid of known size $K$. Under our Hölder smoothness constraint on $\eta$, if one runs \rankmessy on a uniform grid of high enough level, i.e. set $K$ large enough, we can expect \rankmessy to be PAC$(\epsilon,\delta)$ in our setting. For this to be the case, roughly speaking, the discretisation error associated with our proposed adaptation must be at most $\min_{x\in [0,1]}\Delta(x)$. Consequently, for such a naive approach we would need to run the \rankmessy on the grid $\cE_d\left(\min_{x\in [0,1]^d}\Delta(x)^{1/\beta}\right)$, i.e. set $K = \min_{x\in [0,1]^d}\Delta(x)^{-d/\beta}$ which would then have roughly the following upper bound on expected sampling time, up to log terms,
\begin{equation}\label{eq:discrete}
\sum_{i \in \cE_d\left(\min\limits_{x\in [0,1]^d}\Delta(x)^{1/\beta}\right)} \max_{y\in [0,1]} \frac{1}{\kl(\eta(y) - \Delta_i, \eta(y) + \Delta_i)} \;,   
\end{equation}
where we let $\Delta_i$ be the maximum of $\Delta(x)$ for $x$ in the $ith$ section of the grid. The above bound can differ considerably from our sample complexity, Equation \eqref{eq:samp}, in the case where the gap $\Delta(x)$ varies across the feature space. In such cases, the level of discretisation may be far too high in certain parts of the feature space, resulting in the learner drawing an unnecessary number of samples. Such a cases are exactly the ones of interest, where one wishes to exploit the benefits of active learning. Also, to adapt the \rankmessy algorithm in such a way, one would need to have knowledge of $\min_{x\in [0,1]^d}\Delta(x)$, which is an unreasonable assumption in practice. For these two reasons, adapting the previous approach of \cite{cheshire2023active} to our setting is not suitable and a novel approach is required. Our proposed algorithm, \klcrank, is able to vary the level of discretisation across the interval, according to $\Delta(x)$, without requiring any knowledge of $\Delta(x)$ itself.  Furthermore, the theoretical bounds in \cite{cheshire2023active} fail to exploit the fact that KL divergence based confidence intervals change in width across the interval $[0,1]$, being tighter when closer to $0$ or 1. This problem is acknowledged and stated as an open question in \cite{cheshire2023active} and our Theorem \ref{thm:packlcrank} correctly captures this dependency. 

\section{Main Theoretical Results}

\subsection{The \klcrank algorithm}
We first establish some notation. We start at time $t=0$ and each time \klcrank draws a sample we progress to the next time step. The algorithm \klcrank exclusively samples points from the finite, active set of points $\cX_t \subset \cX$, where at each time step, there is the possibility of adding new points to $\cX_t$. At time $t$, for $i \in \cX_t$ we write $N_t (i)$ for the total number of samples drawn from point $i$ up to time $t$ and $\hat{\mu}^t_{i}$ for the empirical mean of point $i$, calculated from all samples drawn from point $i$ up to time $t$. For exploration parameter $\beta(t,i,\delta):\mathbb{N}
 \times [0,1]^d\times \mathbb{R_+} \rightarrow \mathbb{R_+}$, we then define the LCB and UCB index,
\begin{equation}\label{eq:lcb}
    \mathrm{LCB}(i,t) := \min\left\{q\in \left[0,\hat{\mu}^t_{i}\right]: \kl\left(\hat{\mu}^t_{i},q\right) \leq \frac{\beta(t,i,\delta)}{N_{t}(i)}\right\}\;,
\end{equation}

\begin{equation}\label{eq:ucb}
    \mathrm{UCB}(i,t) := \max\left\{q\in \left[\hat{\mu}^t_{i},1\right]: \kl\left(\hat{\mu}^t_{i},q\right) \leq \frac{\beta(t,i,\delta)}{N_{t}(i)}\right\}\;.
\end{equation}
For the empirical gap at time $t$ at point $i$, we write, $\hDelta_{i,t} := \ucb(i,t) - \lcb(i,t)$. For $t>0$, the algorithm \klcrank maintains an estimate $\hp_t$ of the proportion $p$, regularly updating $\hp_t$ by drawing additional samples uniformly from $[0,1]^d$, see line \ref{alg:p} of \klcrank. By slight abuse of notation, we write $N_t(0)$ for the number of samples used to generate the empirical mean $\hp_t$ at time $t$. We then define 
$$\mathrm{LCB}(0,t) := \min\left\{q\in \left[0,\hat{p}_t\right]: \kl\left(\hat{p}_t,q\right) \leq \frac{\beta(t,0,\delta)}{N_t(0)}\right\}\;,$$
as the lower confidence bound on our estimation of $p$ and similarly define $\mathrm{UCB}(0,t)$.
We write $\hDelta_{0,t} = \ucb(0,t) - \lcb(0,t)$ for the width and upper and lower bounds of the confidence interval centered at $\hp_t$. 

The algorithm \klcrank is an elimination algorithm, in that it maintains an active section of the feature space, denoted $S_t\subset [0,1]^d$ at time $t$, with $S_0 = [0,1]^d$ and sequentially eliminates sections of the active set, until time $t$ at which $S_t = \emptyset$. Sections are eliminated as follows, we maintain an active, finite, set of points $\cX_t\subset[0,1]^d$, and  \klcrank draws samples from the intersection of the sets $\cX_t \cap S_t$. At each time step we sample the point $\argmax_{i \in \cX_t \cap S_t}(\hDelta_{i,t})$, then, if a point $i$ in $\cX_t \cap S_t$ satisfies our elimination rule, see Equation \ref{eq:elim}, we remove it and the set of points in its neighborhood from $S_t$, specifically we remove the set, $\{x: \argmin_{j \in \cX_t}(   \lVert j - x \rVert_d)= i\}$ from $S_t$, see line \ref{alg:elim} of \klcrank. Note that we \textit{never remove points from} $\cX_t$, rather, once a point $i \in \cX_t$ does not exist in $S_t$, we no longer have the possibility of drawing samples from it. Our elimination rule is designed such that, we only eliminate sections of $S_t$, once we are sufficiently confident in their ranking. At all times we have a natural ordering on the points within $\cX_t$, according to their empirical means, we can then also provide a scoring function of the entire feature space,
$$\heta_t(x) = \hsigma_t\left(\argmin_{j \in \cX_t}(   ||j - x||_d)\right)\:\;,$$
where $\hsigma_t$ is the permutation sorting $(\hmu^t_i)_{i \in \cX_t}$ into ascending order. The regret incurred by such a scoring function, i.e. $d_\infty(\heta_t,\eta)$ can be controlled via our stochastic regret, that is the differences between the empirical and true means of points in $\cX_t$ and also our discretisation error, that is the error incurred by approximating the continuous function $\eta$ from a finite set of points $\cX_t$. For maximum efficiency we wish for our stochastic and discretisation errors to be roughly the same. We estimate our maximum stochastic error across our active set, at time $t$, via $\Delta_{(t)} := \max_{i\in\cX_t \cap S_t}(\hDelta_{i,t})$. We then ensure that $\cX_t$ provides a covering of level $\Delta_{(t)}^{1/\beta}$ on our active set $S_t$, adding new points to $\cX_t$ if necessary, see line \ref{alg:disc} of \klcrank. Thus via Assumption \ref{ass:1}, we maintain across all $t$, that our disctritisation error is of the same order as our estimated stochastic error. In this fashion, our algorithm \klcrank adds new points to $\cX_t$ across the running time, as $\Delta_{(t)}$ decreases with $t$. As a result, \klcrank will vary the final discretisation level across the feature space - the longer a section of the feature space remains in the active set $S_t$, the greater the ultimate level of disctretisation will be on that section. As the max size of $\cX_t$ is not fixed, the exploration parameter will need to grow with the size of $\cX_t$ we will set $\beta(t,i,\delta) = c\log(t^2\hDelta_{i,t}^{-d/\beta}/\delta)$, with $c>0$ an absolute constant. This is in contrast to the \rankmessy algorithm of \cite{cheshire2023active}, which does not have this problem as they have an upper bound $K$, on the number of points the algorithm will draw samples from.

\paragraph{Elimination rule}
Roughly speaking, we wish to design our elimination rule, such that, a point $i\in \cX_t$ satisfies our elimination rule when $\Delta_{(t)}\lesssim \Delta(i)$. To do this, we will use our estimates of the $\Delta(i)$, $\hat \Delta_{i,t}$, via their maximum $\Delta_{(t)}$ and define our elimination rule as follows. At time $t>0$ for $i \in \cX_t$, $z > 0$, define, $U_{i,t}(z) := \left\{j \in \cX_t\cap S_t :  |\hmu_i^t - \hmu_j^t| \leq z\right\}$ and then define the set of points to be eliminated at the end of round $t$ as,

\begin{equation}\label{eq:elim}
\cQ_t := \Bigg\{i \in \cX_t \cap S_t: \Delta_{(t)}\leq \left(\frac{\epsilon \hat p_t}{\Delta_{(t)}^{d/\beta}|U_{i,t}(6\Delta_{(t)})|}\wedge 1\right)  (1 - \hmu_{i}^t)\Bigg\}\;.   
\end{equation}

If a point $i$ is in $\cQ_t$, itself and its neighborhood, specifically the set $\{x: \argmin_{j \in \cX_t}(   \lVert j - x \rVert_d)= i\}$, is removed from the active set $S_t$, i.e. $S_{t+1} = S_t \setminus \bigcup_{i\in \cQ_t} \{x: \argmin_{j \in \cX_t}(   \lVert j - x \rVert_d)= i\}$.

\begin{algorithm}[H]
\caption{\klcrank}
\label{alg:klcrank}
\begin{algorithmic}[1]
\STATE {\bf Input:}  $\epsilon>0,\delta>0,\beta>0$
\STATE {\bf Initialise:} $S_0 = [0,1]^d$, $\cX_0 = \cE_d(0.5)$
\REPEAT
\IF{$\hDelta_{0,t} \geq \Delta_{(t)}$} 
\REPEAT
\STATE Update the empirical mean $\hp_t$ by $Y_t$ where $a_t$ is drawn uniformly from $[0,1]^d$ \label{alg:p}
\UNTIL{$\hDelta_{0,t} \leq \max_{i \in \cX_t \cap S_t}(\hDelta_{i,t})$}
\ELSE 
\STATE Sample point $\argmax_{i \in \cX_t \cap S_t}(\hDelta_{i,t})$ \label{alg:samp}
\ENDIF
\FOR{$i \in S_t \cap \cX_t$}
    \IF{$i \in \cQ_t,  \max_{i \in \cX_t \cap S_t}(\hDelta_{i,t}) \leq \hat p_t/4$}
  \STATE $S_t = S_t \setminus \{x: \argmin_{j \in \cX_t}(   \lVert j - x \rVert_d)= i\}$ \label{alg:elim}

     \ENDIF
\ENDFOR
\STATE Let $\tilde n_t = \min\left(n : 2^{-n} \leq \Delta_{(t)}^{1/\beta}\right)$. Add the points $\cE_d(2^{-\tilde n_t }) \cap S_t \setminus \cX_t$ to $\cX_t$. \label{alg:disc}

\UNTIL{ $S_t = \emptyset$ }
\STATE Let $\hsigma$ be the permutation sorting $(\hmu^t_i)_{i \in \cX_t}$ into ascending order. 
\STATE {\bf Output:} $\heta(x) = \hsigma\left(\argmin_{j \in \cX_t}(   ||j - x||_d)\right)$
\end{algorithmic}
\end{algorithm}

\begin{figure}
\begin{minipage}[c]{0.45\linewidth}
\includegraphics[width=\linewidth]{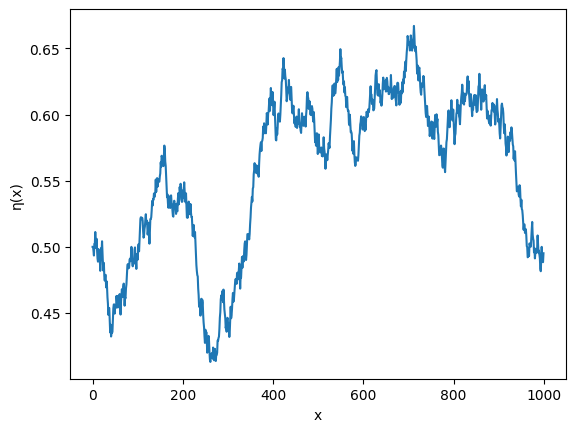}
\end{minipage}
\hfill
\begin{minipage}[c]{0.45\linewidth}
\includegraphics[width=\linewidth]{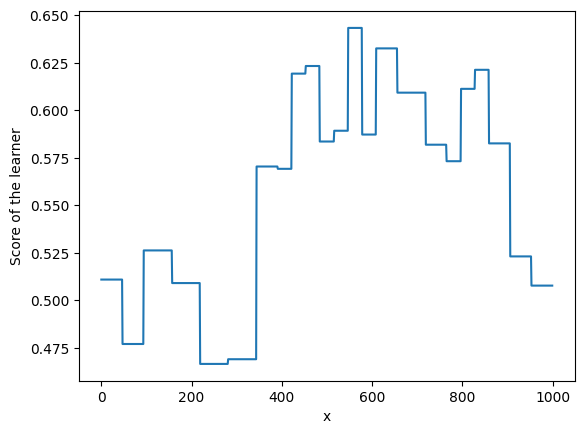}
\end{minipage}%
\caption{(Left) Regression function $\eta$, generated by random walk. (Right) Scoring function outputted by \klcrank.}\label{fig:2}
\end{figure}
As mentioned, for \klcrank, the final level of discretisation will differ across the feature space. The goal is for the local level of discretisation, to ultimately match the local gap $\Delta(x)$. Otherwise, our discretisation error is either too large, or we sample an unnecessary number of points. As an illustrative example, in Figure \ref{fig:2} we see both the true regression function $\eta$ and the scoring function outputted by \klcrank. We see the algorithm has a higher level of discretisation where $\eta$ is more flat, as the gap, $\Delta(x)$ will be smaller on this section.  In addition to our varying level of discretisation, our algorithm \klcrank is further distanced from \rankmessy as follows. Instead of pulling all points in the active set at each time step, it samples only the one with maximum gap. This change allows us to capture the true dependence on KL divergence based confidence intervals, an open question stated in \cite{cheshire2023active}. 
\subsection{Theoretical guarantees for \klcrank}
In the following Theorem we demonstrate that \klcrank is PAC$(\epsilon,\delta)$, as in Equation \eqref{eq:pac} and provide an upper bound on its expected sampling time.

\begin{theorem}\label{thm:packlcrank}
 For $\epsilon, \delta > 0$, with $\beta(t,i,\delta) = c\log(t^2\hDelta_{i,t}^{-d/\beta}/\delta)$ where $c>0$ is an absolute constant, on all problems $\nu \in \cB$, we have that \klcrank is PAC$(\epsilon,\delta)$, and it's expected sampling time is upper bounded by, 
   $$c '\int_{x\in [0,1]^d} H(x) \log\left(c''  H(x)/\delta\right)\;dx\;,$$
 where $c ', c'' >0$ are absolute constants. 

\end{theorem}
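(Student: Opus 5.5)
The proof splits into a correctness part, giving the PAC$(\epsilon,\delta)$ property, and a sample complexity part. Both are carried out on a single good event $\xi$. On $\xi$, for every $t$ and every $i$ that ever enters $\cX_t$ (including the index $0$ used for $p$), we have $\lcb(i,t)\le \eta(i)\le \ucb(i,t)$. To control $\mathbb{P}(\xi^c)$, we apply the standard KL deviation inequality (Chernoff/Garivier--Cappé) to each point and each time, and take a union bound. The number of points ever present at confidence width $\hDelta$ is at most of order $\hDelta^{-d/\beta}$, because line \ref{alg:disc} only adds grid points of the dyadic net $\cE_d(2^{-\tilde n_t})$. The choice $\beta(t,i,\delta)=c\log(t^2\hDelta_{i,t}^{-d/\beta}/\delta)$ absorbs exactly this count together with a $\sum_t t^{-2}$ factor, so $\mathbb{P}(\xi^c)\le\delta$. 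From here on we work on $\xi$.

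\textbf{Correctness.} We compare $\heta$ with the ranking induced by $\eta$. The first step is a deterministic regret decomposition in the spirit of Lemmas 2.2--2.3 of \cite{cheshire2023active}, adapted to the broken-line $\roc$. The quantity $d_\infty(\heta,\eta)$ is bounded by a supremum over $x$ of $\frac{1}{p(1-\eta(x))}$ times the mass of the set of points $y$ that are misranked relative to $x$, weighted by $|\eta(x)-\eta(y)|$.

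The second step uses the fact that each cell $\{x:\argmin_j\|j-x\|=i\}$ is frozen at elimination time $t_i$. By Assumption \ref{ass:1} and line \ref{alg:disc}, its discretisation error is at most of order $\Delta_{(t_i)}$. Stochastic errors are at most $\hDelta_{j,t}\le\Delta_{(t)}$, and these are nonincreasing along the run. Consequently two points can be misranked only if their $\eta$ values lie within $6\Delta_{(t_i)}$ of each other; this is where the constant in $U_{i,t}(6\Delta_{(t)})$ comes from.

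The third step combines these with the elimination rule \eqref{eq:elim}, together with the condition $\hDelta_{0,t}\le\Delta_{(t)}\le \hat p_t/4$, which makes $\hp_t$ comparable to $p$. Note that $|U_{i,t}|\,\Delta_{(t)}^{d/\beta}$ estimates $\lambda(\{y:|\eta(i)-\eta(y)|\le 6\Delta_{(t)}\})$. The rule therefore certifies $\Delta_{(t)}\cdot\lambda(\text{misrankable set})\lesssim \epsilon p(1-\eta(i))$, and the contribution of each $x$ to the regret is at most $\epsilon$ once constants are fixed. A subtlety is that points already eliminated no longer appear in $U_{i,t}$. This is handled because their own elimination was certified at a coarser scale, so that pair is charged to the earlier-eliminated point.

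\textbf{Sample complexity.} The first step is to show, on $\xi$, that any $i$ with $\Delta_{(t)}\le c_0\Delta(i)$ satisfies $i\in\cQ_t$. This follows by reversing the above comparisons: $\Delta(i)$ is defined as a minimal $z$, and by Hölder continuity $\Delta(y)\asymp\Delta(i)$ on the cell. Hence the cell of $x$ is eliminated by the time $\Delta_{(t)}\approx\Delta(x)$, and its final discretisation radius is about $\Delta(x)^{1/\beta}$. The number of such cells covering a region is about $\int \Delta(x)^{-d/\beta}dx$.

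The second step uses the max-gap sampling rule. Before a point $i$ is eliminated, it is sampled only while its width is the largest one. Since KL widths shrink like $\sqrt{\beta/N}$ adapted to the local mean, a point near $\eta(i)$ reaches width $\Delta(i)$ after $N_i\lesssim \beta(t,i,\delta)/\kl(\eta(i)-\Delta(i),\eta(i)+\Delta(i))$ samples. Points in coarser cells that were later refined must also be charged. Their cost is at most that of the coarsest ancestor, and a geometric sum over the dyadic levels keeps it within constant factors.

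The third step bounds the cost of estimating $p$. The samples used for $\hp_t$ are only drawn while $\hDelta_{0,t}\ge\Delta_{(t)}$, so their number is dominated by that of the most-sampled active point and is lower order. Summing over cells and converting the sum to an integral gives the bound $\int H(x)\log(c''H(x)/\delta)\,dx$. Here the $\log t$ inside $\beta$ is replaced by $\log H$, using the fact that $t$ is at most the total bound, via a standard inversion lemma.

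\textbf{Main obstacle.} I expect the hardest step to be the regret decomposition linking \eqref{eq:elim} to $d_\infty$ when cells are eliminated at different scales and times. In particular, the misranked mass must be controlled in terms of the surviving set $U_{i,t}$ rather than the full level set. The first thing I would try is an induction over elimination times in which each misranked pair is charged to whichever member was eliminated later.
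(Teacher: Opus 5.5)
The correctness half of your proposal follows the paper: the confidence event via a union bound over geometrically growing grids (Lemma~\ref{lem:goodev}), the elimination rule \eqref{eq:elim} forcing $\Delta_{(t)}$ below the local gap near $\hat\mu_i^t$ (Proposition~\ref{prop:St}), induction over elimination times (Proposition~\ref{prop:St2}), and the ROC computation via the definition of $\Delta(x)$.

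\textbf{The genuine gap is in the sample-complexity half.} You run it on the same $\delta$-level event $\xi$, so at best you get $\mathbb{P}(\tau\le B)\ge 1-\delta$. The theorem, however, bounds $\mathbb{E}[\tau]$. Nothing in your plan bounds $\tau$ on $\xi^c$, where elimination certificates may be wrong. So $\mathbb{E}[\tau\mathbf{1}_{\xi^c}]$ is left uncontrolled; it is not small merely because $\mathbb{P}(\xi^c)\le\delta$ with $\delta$ fixed.

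The paper builds its expectation bound on different events. For a region $W$ with comparable gaps, it does the following:
\begin{itemize}
\item It fixes deterministic per-point counts $t_i$, calibrated to resolution $2^{-i}\Delta_W/120$.
\item It lets $T_i$ be the global times at which a point of $W$ first reaches $t_i$ samples.
\item It uses accuracy events $\xi_{W,i}$ (all active empirical means within $\Delta_W/120$ at time $T_i$), with $\mathbb{P}(\xi_{W,i}^c)\le c2^{-i}t_i^{-3}$ (Proposition~\ref{prop:probxi}).
\item On $\xi_{W,i-1}\cap\xi_{W,i}$, all of $W$ is eliminated at $T_i$ (Proposition~\ref{prop:xi}), and $\Delta_{(T_i)}\gtrsim 2^{-i}\Delta_W$, so the grid is not over-refined (Proposition~\ref{prop:sizeD}). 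Hence $N(W)\le\lambda(\bar W)2^i\Delta_W^{-d/\beta}t_i$ (Proposition~\ref{prop:barW}).
\item Since $t_{i+1}\le 4t_i$ (Proposition~\ref{prop:ti}), the tail sum $\sum_i 2^i t_i\,\mathbb{P}(\xi_{W,i}^c)$ converges. This gives $\mathbb{E}[N(W)]\lesssim\lambda(\bar W)\Delta_W^{-d/\beta}t_0$, which is then summed over the cells $G_{n,k}$.
\end{itemize}
Your per-point argument could replace the paper's region bookkeeping. That argument says a point is sampled only while its width is maximal, and is eliminated once it is sampled at width below $c_0\Delta(i)$. But to yield an expectation bound it needs two additions. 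First, a family of events indexed by the sample count, with summable failure probabilities, on which elimination is forced. Second, a lower bound on $\Delta_{(t)}$ at elimination time; your claim that the final radius is about $\Delta(x)^{1/\beta}$ silently relies on it.

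\textbf{Two smaller issues.}

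First, the charging direction. A misranked pair must be charged to the \emph{earlier}-eliminated cell, as your correctness paragraph says and as Proposition~\ref{prop:St2} does. The alternative in your last paragraph, charging to whichever member was eliminated later, fails. When the later cell is eliminated, the other point is no longer in $U_{i,t}$, so that elimination certifies nothing about the pair.

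Second, the logarithmic factor. Bounding the global $t$ inside $\beta(t,i,\delta)$ by the total budget only gives a uniform factor $\log(\text{total}/\delta)$. It does not give the pointwise $\log(c''H(x)/\delta)$ in the statement, which can be much smaller where $H(x)$ is small. The paper gets the pointwise factor by placing the per-point count inside the logarithm that defines $t_i$.
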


Note that the rate exhibited in Theorem \ref{thm:packlcrank} depends upon the dimension $d$, through the sample complexity $H(x)$. 
\subsection{Sketch proof of Theorem \ref{thm:packlcrank}}

The proof of Theorem \ref{thm:packlcrank} can be split into two parts. The first is to show that the algorithm \klcrank is PAC$(\epsilon,\delta)$, see Lemma \ref{lem:pac} in the Appendix, the second is to then upper bound the expected stopping time of \klcrank,
see Lemma \ref{lem:exp} in the Appendix. We will now provide a sketch of proof. All referenced Propositions and Lemmas, can be found in the Appendix, along with their respective proofs.

\paragraph{Proving \klcrank is PAC$(\epsilon,\delta)$}To prove \klcrank is PAC$(\epsilon,\delta)$, we will consider the favourable event, where at each time step, the true values of $\eta$ at the sampled points, are within their respective confidence intervals. We define and bound the probability of such an event in the following Lemma. 
\newtheorem*{lem:goodev}{Lemma \ref{lem:goodev}}
\begin{lem:goodev}\label{lem:goodevm}
We have that the event,
\begin{equation}
\mathcal{E}= \bigcap_{t \in \mathbb{N}}\bigcap_{i \in [\cX_t]} \bigg\{\eta(i)\in [\lcb(t,i),\ucb(t,i)]\bigg\} \;\cap  \bigg\{p \in [\lcb(t,0),\ucb(t,0)]\bigg\}  \;,
\end{equation}
occurs with probability greater than $1-\delta$. 
\end{lem:goodev}

The proof of Lemma \ref{lem:goodevm} can be found in Appendix \ref{app:pac}. While similar in nature to Lemma 3.1 of \cite{cheshire2023active}, our Lemma \ref{lem:goodevm} does not follow immediately from a concentration inequality and a union bound, the reason being our exploration parameter grows with the size of the grid. Instead we apply multiple union bounds over many grids, which increase in size geometrically. 

To suffer less than $\epsilon$ regret, we know intuitively that, for any given $x$, the learner must be able to correctly rank the pair $x,y$ for all $y : |\eta(x) - \eta(y)| \leq \Delta(x)$. Our careful choice of elimination rule ensures that points are only removed from the active set $S_t$, when we are confident that this condition is satisfied. The following Lemma demonstrates this property. 
\newtheorem*{lem:welldist}{Lemma \ref{lem:welldist}}
\begin{lem:welldist}
 Upon execution of \klcrank, on event $\cE$, for all $x \in [0,1]^d$, we have that, $\forall y : |\eta(x) - \eta(y)|\leq \Delta(x),$
$$
\mathrm{sign}(\heta(x) - \heta(y)) = \mathrm{sign}(\eta(x) - \eta(y))\;.
$$
\end{lem:welldist}

The proof of Lemma \ref{lem:welldist} follows from Propositions \ref{prop:St} and \ref{prop:St2}, found in the Appendix. The remainder of proof that \klcrank is PAC$(\epsilon,\delta)$, then follows as in \cite{cheshire2023active}, from analysis of the ROC curve of $\hat \eta$, under the condition of Lemma \ref{lem:welldist}.

\paragraph{Upper bounding the expected sampling time of \klcrank} 
Having demonstrated that \klcrank is PAC$(\epsilon,\delta)$, to complete the proof of Theorem \ref{thm:packlcrank}, it remains to upper bound the expected total number of samples \klcrank draws across its running time. 

In the discrete setting, as studied in \cite{cheshire2023active}, $\eta$ is piece wise constant on some uniform grid of known size $K$ and one can then upper bound the total expected number of samples, by upper bounding the expected number of samples on each individual section of the grid. In our continuous setting this strategy is no longer viable, as our level of discretisation will not be fixed across the feature space. However, we can expect certain subsections of the feature space, on which the gap $\Delta(x)$ does not vary too much, to have similar levels of discretisation. To upper bound the expected number of samples drawn by \klcrank on the feature space $[0,1]^d$, we will first upper bound the expected number of total samples on a given subset of the feature space $W \subset [0,1]^d$. We write $N(W)$ for the total number of samples the learner draws on $W$ and $\Delta_W = \min_{x\in W}(\Delta(x))$. In what follows, $c,c'$ are absolute constants which change line by line. First we define the sequence $(t_i)_{i\in \mathbb{N}}$, where 
\begin{equation}
t_i := \argmin\Bigg\{s : \frac{\log(s^2\Delta_W^{-d/\beta}/\delta)}{s} \leq \max_{x \in W}\kl\left(\eta(x),\eta(x) + 2^{-i}\Delta_{W}/120\right)\Bigg\} \;,
\end{equation}
we suppress the dependence on $W$ in the notation $t_i$. Roughly speaking once points in $W$ have been sampled $t_i$ times, we can expect their confidence intervals to be tight enough, such that they are removed from the active set with probability quickly converging to 1 as $i$ increases. As the actions of the algorithm are indexed by a global time $t$, we define $T_i$ as the global time at which a point in $W$ is first sampled $t_i$ times, that is, $T_i := \min\left(s: \exists j \in \cX_s\cap W, N_{j}(s) \geq t_{i}\right)$. We are now ready to define our "good event" $\xi_{W,i}$. Essentially on this event, at time $T_i$, when a point in $W$ is first sampled $t_i$ times, the empirical means of all active points, $\cX_{T_i}\cap S_t$,  will be within a distance $\Delta_W$ multiplied by a small constant, to their true means, specifically we define, 
$$\xi_{W,i} := \{\forall j \in \cX_{T_i}\cap S_{T_i}, |\hmu_{j}^{T_i} - \eta(j)| \leq \Delta_{W}/120  \}.$$ 
First, we upper bound the probability of the event $\xi_{W,i}$ not occurring, 
\begin{equation}\label{eq:xim}
  \P(\xi_{W,i}^c)  \leq c 2^{-i}t_i^{-3}\;,
\end{equation} 
where $c>0$ is an absolute constant, see Proposition \ref{prop:probxi}. We then go on to show that on the event $\xi_{W,i}$, all points in $W$ are eliminated from the active set by time $T_i$, i.e. $S_{T_i} \cap W = \emptyset$ see Proposition \ref{prop:xi}. While Proposition \ref{prop:xi} uses similar techniques to the proof of Lemma A.6 in \cite{cheshire2023active}, these techniques themselves adapted from the fixed confidence best arm identification literature, our event $\xi_{W,i}$ is fundamentally different to that considered in \cite{cheshire2023active}. The remaining components of the proof, up to Proposition \ref{prop:fin} deal with difficulties specific to the continuous setting and have no counterpart in \cite{cheshire2023active}.

The total number of samples the learner draws on $W$ does not only depend upon the number of times individual points in $W$ are sampled, but also the number of distinct points sampled in $W$. The number of points in $W$ depends upon the discretisation level of the algorithm, $\Delta_{(T_i)}$ in that we can upper bound the number of points in $W$ as follows,
$$|\cX_t \cap W| \leq \Delta_{(T_i)}^{-d/\beta}\lambda( W)\;,$$
however, we would like our upper bound to depend on $\Delta_W$, not $\Delta_{(T_i)}$ as the latter is a random variable. Thus, we demonstrate, that on the event $\xi_{W,i}$ we  $\Delta_{(T_i)}$ is close to $2^{-i}\Delta_{W}$, see Proposition \ref{prop:sizeD}. This result then gives us our upper bound on the number of points in $W$,
$$|\cX_t \cap W| \leq (2^{-i}\Delta_{W})^{-d/\beta}\lambda(\bar W)\;,$$
and from this a bound on the total number of samples drawn from points in $W$. 
\begin{equation}\label{eq:he}
N(W) \leq \lambda( W)(2^{-i}\Delta_W)^{-d/\beta} t_{i}\;,
\end{equation}
Via a combination of Equations \eqref{eq:xim} and \eqref{eq:he}, we have that for all $i>0$
$$\P \left(N(W) \leq \lambda( W)(2^{-i}\Delta_W)^{-d/\beta} t_{i} \right) \leq c 2^{-i}t_i^{-3}\;.$$
Considering the above for all $i \in \mathbb{N}$, we are then able to demonstrate the following upper bound on $\E[N(W)]$,
\begin{equation}\label{eq:10}
\frac{\E[N(W)]}{\lambda(W)}\leq c\Delta^{-d/\beta}_{W}\max_{x\in W}  H(x)\log(c'  H(x)/\delta) \;,
\end{equation}
see the proof Proposition \ref{prop:Wbound} for the exact derivation of Equation \eqref{eq:10}. 

Finally we consider the sequence of sets, indexed by $n,k \in \mathbb{N}$, as, 
$$G_{n,k} =\left\{x : \Delta(x) \in [2^{-n},2^{-n-1}], H(x) \in[2^{-k},2^{-k-1}] \right\}\;.$$ 
We then use \eqref{eq:10}, for the following upper bound,
$$\sum_{n,k=1}^\infty \E[N(G_{n,k})] \leq c\sum_{n,k\in \lambda(G_{n,k})\mathbb{N}}\max_{x\in G_{n,k}}\Delta(x)^{-d/\beta}H(x)\log(c'  H(x)/\delta)\;,$$
where $c>0$ is an absolute constant, see Proposition \ref{prop:fin}. It then remains to lower bound the integral, 
$$ c\sum_{n,k\in \lambda(G_{n,k})\mathbb{N}}\max_{x\in G_{n,k}}\Delta(x)^{-d/\beta}H(x)\log(c'  H(x)/\delta)\leq c'\int_{x\in [0,1]^d} \frac{ \Delta(x)^{-d/\beta}\log(H(x))}{\kl(\eta(x) - \Delta(x), \eta(x) + \Delta(x))}\;dx\;,$$
where $c,c'>0$ are absolute constants. 

\subsection{Lower bound}

In the following theorem we demonstrate a lower bound on the expected sampling time of any PAC$(\epsilon,\delta)$ algorithm, for the 1 dimensional case, the proof can be found in section \ref{app:lb}.
\begin{theorem}\label{thm:lb}
Let $\epsilon \in [0,1/4), 0<\delta < 1-\exp(-1/8), d=1$ and $\nu \in \cB$. For any PAC$(\epsilon, \delta)$ policy $\pi$,
there exists a problem $\bar \nu \in \cB$ such that, for all $x \in [0,1]$, $\bar \Delta(x) \geq \Delta(x)/2$ where $\bar \Delta(x)$ is the gap of point $x$ on problem $\bar \nu$, where the expected stopping time of policy $\pi$ on problem $\bar \nu$ is bounded as follows,

$$c'\int_{x\in[0,1]^d}\bar H(x)\; dx\;,$$

where $c'>0$ is an absolute constant and $\bar H(x)$ is the complexity of point $x$ on problem $\bar \nu$.

\end{theorem}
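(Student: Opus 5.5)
The proof will be a change-of-measure argument in the style of \cite{kaufmann2014complexity}, localised to small intervals of $[0,1]$. The per-interval lower bounds will then be summed. The statement says the bound holds on some $\bar\nu$ whose gaps are comparable to those of $\nu$, so we have freedom to choose $\bar\nu$. We take $\bar\nu$ to be a slight modification of $\nu$: for instance, $\eta$ is replaced by a version that is locally flattened to constant values on a partition of $[0,1]$. The partition consists of intervals $I_k$ of length of order $\Delta(x_k)^{1/\beta}$ around centres $x_k$. Flattening changes $\eta$ by at most order $\Delta(x_k)$ on $I_k$, by Assumption \ref{ass:1}. Choosing constants carefully gives $\bar\Delta(x)\geq \Delta(x)/2$. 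It also preserves the Hölder constraint, up to smoothing transitions between neighbouring pieces.

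\textbf{Step 1: alternative problems.} For each interval $I_k$ we build an alternative $\nu^{(k)}\in\cB$. It agrees with $\bar\nu$ outside $I_k$. Inside $I_k$, $\bar\eta$ is perturbed by a Hölder bump of height of order $\bar\Delta(x_k)$ and support $I_k$; the support width $\bar\Delta(x_k)^{1/\beta}$ is exactly what makes this bump admissible. The sign of the bump is chosen so that the ranking of $I_k$ relative to the set $\{y:|\bar\eta(y)-\bar\eta(x_k)|\le\bar\Delta(x_k)\}$ is reversed. By the definition of $\bar\Delta$ (the lower-bound direction, Lemmas 2.2--2.3 of \cite{cheshire2023active}), any scoring function that is $\epsilon$-accurate on $\bar\nu$ is then not $\epsilon$-accurate on $\nu^{(k)}$. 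So the event $A=\{d_\infty(\heta,\bar\eta)\le\epsilon\}$ has probability at least $1-\delta$ under $\bar\nu$ and at most $\delta$ under $\nu^{(k)}$.

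\textbf{Step 2: transportation.} We apply the transportation lemma of \cite{kaufmann2014complexity}, extended to continuous query sets, to the pair $\bar\nu,\nu^{(k)}$. This gives
\[
\mathbb{E}_{\bar\nu,\pi}\Big[\sum_{t\le\tau}\kl\big(\bar\eta(a_t),\eta^{(k)}(a_t)\big)\Big]\ \ge\ \kl_{\mathrm{Ber}}(1-\delta,\delta)\ \ge\ c,
\]
where the last inequality uses $\delta<1-e^{-1/8}$. The summand vanishes outside $I_k$. Inside $I_k$ it is at most $\kl(\bar\eta(x_k)-\bar\Delta(x_k),\bar\eta(x_k)+\bar\Delta(x_k))$ up to a constant factor, since $\bar\eta$ is nearly constant there. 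Hence $\mathbb{E}[N(I_k)]\gtrsim 1/\kl(\bar\eta(x_k)-\bar\Delta(x_k),\bar\eta(x_k)+\bar\Delta(x_k))$.

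\textbf{Step 3: summation.} Summing over the disjoint intervals $I_k$ gives
\[
\mathbb{E}[\tau]\ \ge\ \sum_k \mathbb{E}[N(I_k)]\ \gtrsim\ \sum_k \lambda(I_k)\,\bar\Delta(x_k)^{-1/\beta}\big/\kl(\cdot,\cdot),
\]
because $\lambda(I_k)\approx\bar\Delta(x_k)^{1/\beta}$. The right-hand side is a Riemann-type lower bound for $c'\int\bar H(x)\,dx$, which is the claimed bound with $d=1$.

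\textbf{Main obstacle.} The hardest part is Step 1. We must verify that one local bump forces regret above $\epsilon$ for any output. A bump on a set of measure only about $\bar\Delta^{1/\beta}$ must misrank a set of mass about $\epsilon p(1-\eta)/\bar\Delta$ against it. We plan to handle this by letting the bump move the whole flattened level set near $x_k$ jointly, as in the two-level construction of \cite{cheshire2023active}. We must also check that this modification keeps $\bar\Delta$ within a factor $2$ and preserves the Hölder condition.
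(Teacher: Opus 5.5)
There is a genuine gap. It sits where you placed the main obstacle, and the fix you propose reopens it in Step 2.

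\textbf{Why Step 1 fails.} Misranking a set of measure $L$ by an amount of order $\bar\Delta$ around level $\bar\eta(x_k)$ costs regret of order $\bar\Delta L/(p(1-\bar\eta(x_k)))$; this is the computation at the end of the paper's PAC proof. By the definition of $\bar\Delta(x_k)$, this reaches $\epsilon$ only when $L$ is comparable to $L_k:=\lambda(\{y:|\bar\eta(y)-\bar\eta(x_k)|\le\bar\Delta(x_k)\})$. A single bump on $I_k$ has $L=\lambda(I_k)\approx\bar\Delta(x_k)^{1/\beta}$. The level set contains about $L_k\bar\Delta(x_k)^{-1/\beta}\approx \epsilon p(1-\bar\eta(x_k))\bar\Delta(x_k)^{-1-1/\beta}$ such intervals, which is typically large. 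In that case the $\bar\nu$-optimal ranking is itself $\epsilon$-accurate on $\nu^{(k)}$. So nothing forces $\mathbb{P}_{\nu^{(k)}}(A)\le\delta$, and the transportation inequality yields nothing. Lemmas 2.2--2.3 of the discrete paper concern misranking a point against the \emph{whole} level set, not a single cell, so they do not rescue this.

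\textbf{Why your fix fails.} If the alternative instead moves the whole level set jointly, then $\bar\nu$ and $\nu^{(k)}$ differ on all of it. The summand in Step 2 no longer vanishes outside $I_k$, and you only get $\mathbb{E}[N(\text{level set})]\gtrsim 1/\mathrm{kl}(\cdot,\cdot)$, one factor per level set. Step 3 then loses exactly the factor $L_k\bar\Delta(x_k)^{-1/\beta}$, which is the $\bar\Delta^{-d/\beta}$ in $\bar H$. No family of two-point comparisons gives both ``regret above $\epsilon$'' and ``only $I_k$ is informative''.

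\textbf{The missing idea.} You need a multi-hypothesis, Assouad-type argument, which is what the paper's Steps 1--5 implement.
\begin{itemize}
\item On each level-set segment $W_m$, the paper places $\mathcal{G}_m\approx\lambda(W_m)\Delta(U_m)^{-1/\beta}$ disjoint intervals of the admissible Hölder width. Each carries its own bump or dip of height $\Delta(U_m)$, with signs given by a vertex $Q$ of a hypercube, restricted to balanced patterns.
\item Regret exceeds $\epsilon$ only when a constant fraction of the per-interval events $\xi_{i,m}$ go the wrong way (the events $\mathcal{E}^m_0,\mathcal{E}^m_1$). PAC therefore constrains only the average over $i$ of $\mathbb{P}_Q(\xi_{i,m})$, not each term.
\item Pair $Q$ with $Q^{(i)}$, which differ only on interval $i$. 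Data processing bounds the divergence between $\mathbb{P}_Q(\xi_{i,m})$ and $\mathbb{P}_{Q^{(i)}}(\xi_{i,m})$ by $\mathbb{E}_Q[\tau_{m,i}]$ times the per-sample divergence.
\item Averaging over $i$ and $Q$, using convexity of kl, turns the collective constraint into $\mathbb{E}_Q[\tau_{(m)}]\gtrsim\mathcal{G}_m/\mathrm{kl}(\eta(U_m)-c\Delta(U_m),\eta(U_m)+c\Delta(U_m))$. This is a cost per interval, which is what Step 3 needs.
\end{itemize}
Consequently $\bar\nu$ must be a worst-case vertex of this cube, carrying bumps and dips, not a flattened copy of $\nu$. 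The grouping into $U_m,D_m$ is what lets the paper perturb each segment without changing the gaps on the others.
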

We see that the above lower bound matches our upper bound on the expected sampling time of \klcrank, see Theorem \ref{thm:packlcrank}, up to logarithmic terms. 
\subsection{Continuous label fixed threshold setting}
In the continuous label fixed threshold setting, the setup is as follows. We no longer restrict the labels $Y$ to take a binary value in $\{0,1\}$, but rather allow for continuous labels, taking a value in $\mathbb{R}$. That is, when sampling a point $x\in [0,1]^d$, the learner observes the realisation of some random variable, with distribution function $F_x$. For a fixed threshold $\rho\in (0,1)$, known to the learner, we can then define the regression function $\eta_\rho(x) = 1-F_x(\rho)= \P(Y\geq \rho | X=x)$. For $\rho\in (0,1)$, the ROC curve of a scoring function $s$, is then the PP-plot $t\in \mathbb{R}\mapsto (1-H_{s}^\rho(t),\; 1-G_s^\rho(t))$, where $H_{s}(t)=\mathbb{P}\left\{ s(x)\leq t \mid Y < \rho  \right\}$ and  $G_{s}(t)=\mathbb{P}\left\{ s(x)\leq t \mid Y \geq \rho\right\}$, for all $t\in \mathbb{R}$. Now the set $\mathcal{S_\rho}^*$ of optimal scoring functions is composed of increasing transforms of the posterior probability $\eta_\rho(x)=\mathbb{P}\{Y\geq \rho \mid X=x  \}$. The setup is then exactly the same as in the binary label setting, the learner must ensure, given threshold $\rho>0$, their estimated scoring function $\heta_\rho$, satisfies, $d_\infty(\eta_\rho,\heta_\rho)\leq \epsilon$, with probability greater than $1-\delta$, while minimising their expected number of samples. 

Our results extend to this setting, see Appendix section \ref{app:cont} where we propose the algorithm\newline \kltcrank, which is similar to \klcrank but makes it's decisions on confidence intervals based on the Dvoretzky–Kiefer–Wolfowitz inequality as opposed to the KL divergence. 

\section{Numerical Experiments}

\begin{figure}
\begin{minipage}[b]{0.49\linewidth}
\includegraphics[width=\linewidth]{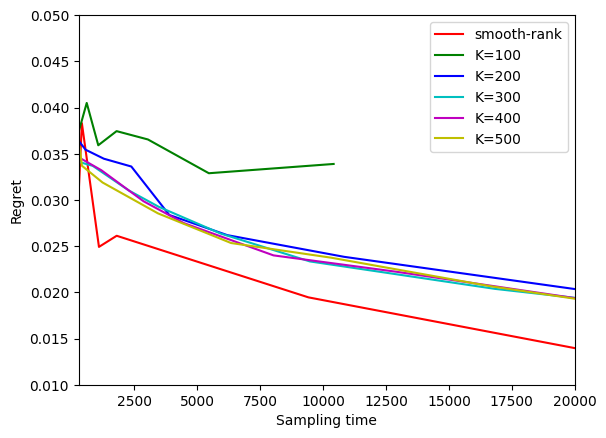}
\end{minipage}
\hfill
\begin{minipage}[b]{0.49\linewidth}
\includegraphics[width=\linewidth]{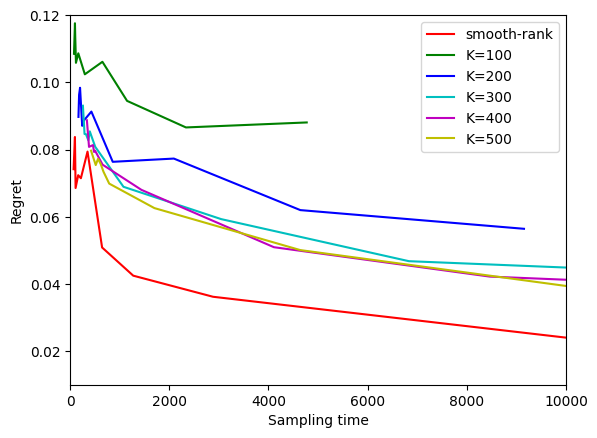}
\end{minipage}%
\vfill
\begin{minipage}[b]{0.49\linewidth}
\includegraphics[width=\linewidth]{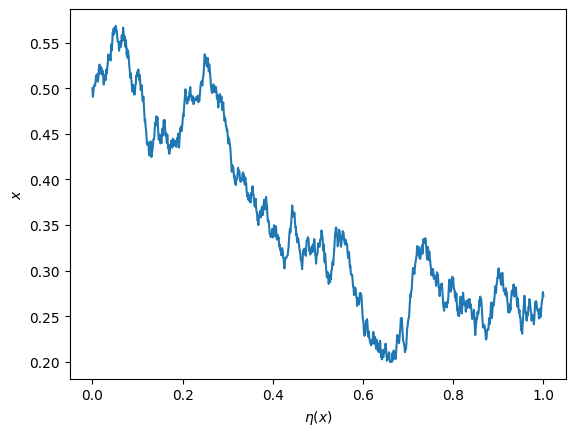}
\end{minipage}
\hfill
\begin{minipage}[b]{0.49\linewidth}
\includegraphics[width=\linewidth]{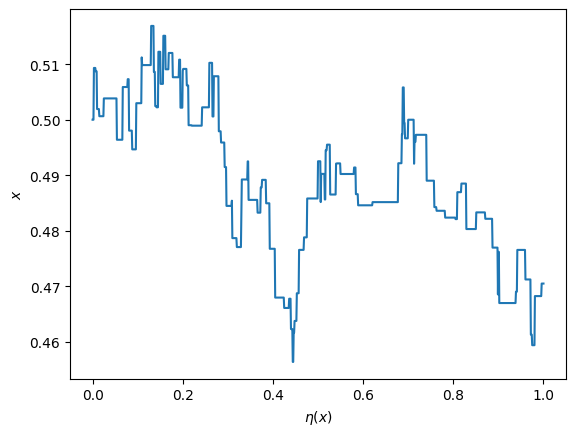}
\end{minipage}%
  \caption{(Top left, top right respectively) Performance of \klcrank compared with \rankmessy, for scenario 1 and scenario 2, for K=(100,200,300,400,500), regret estimated by 50 Monte Carlo realisations of each algorithm. (Bottom left, bottom right respectively) Example simulation of scenario 1 and 2. }\label{fig:3}
\end{figure}

To the best of our knowledge, ranking an $\cX$-armed bandit has not been considered in the literature. We can instead compare with discrete methods, adapted to the $\cX$-armed setting. The most suitable candidate is the \rankmessy algorithm of \cite{cheshire2023active}, as they consider directly the discrete version of our setting and the \rankmessy algorithm is shown to perform well against other potential competitors. The algorithm \rankmessy takes as a parameter $K$, relating to the piece wise constant assumption of \cite{cheshire2023active}, see Equation \eqref{eq:piececont}. As in our setting we make no such assumption, we will run the \rankmessy algorithm across multiple values of $K$. We compare the performance of the algorithms \rankmessy and \klcrank, on 1-Hölder smooth regression functions $\eta$, where each $\eta$ is a step function, where the step values correspond to the path taken by some random walk on the interval $[0,1]$. We consider two scenarios: in scenario 2, $\eta$ is generated by a classical random walk, that is, if $b_t$ is the state of the walk at time $t$, $b_{t+1} = b_t + \phi_t$, where $\phi_t$ is iid, centered truncated gaussian noise. In scenario 1 the random walk used to generate $\eta$, when progressing from time $t$ to $t+1$, will remain constant with probability 0.9, i.e. with probability 0.9, $b_{t+1} = b_t$ and otherwise, $b_{t+1} = b_t + \phi_t$, as in Scenario 2. The result is that, for scenario 1 the gap $\Delta(x)$ will remain constant across large sections of the feature space, as opposed to scenario 2 where it will change more frequently. Thus, one could expect algorithms designed for the discrete case to perform better on scenario 1 than 2. This appears to be the case, Figure \ref{fig:3} shows how the regret, i.e. $d_\infty(\heta,\eta)$, of \klcrank and \rankmessy evolve with increasing sampling time. We see that \klcrank performs favourably against \rankmessy, especially for small sample times, with the difference being more striking in scenario 2. 

\subsection{Experiments on simulated credit risk data}
Carrying out experiments on real world data, in an active setting has inherent difficulties, i.e. one cannot simply train a classifier on already labeled data. As such, we propose to simulate an active setting, via a real world data set. Credit risk evaluation has been a key motivating application for bipartite ranking in the batch setting. We will use the \href{https://www.kaggle.com/code/rishabhrao/home-credit-default-risk-extensive-eda}{Home Credit Default Risk Extensive EDA} data set, which shows the credit default for over 30000 users, with variety of features. We focus on the features credit and annuity. The regression function, i.e. the posterior probability $\eta$ being approximated via kernel regression, with the optimal bandwidth chosen via cross validation. The performance of our smooth-rank and the active-rank of \cite{cheshire2023active} are then to be evaluated for said $\eta$. As with our synthetic experiments, we run the active-rank of \cite{cheshire2023active} for a variety of $K$, this essentially gives the active-rank algorithm the "benefit of the doubt" and assumes one is somehow able to access the optimal or near optimal value of $K$, which in practice would require knowledge of the gaps $\Delta(x)$, see the discussion in Section \ref{sec:lit}. In Figure \ref{fig:4} we see that \klcrank  performs well against active-rank when we simulating credit default rate, given user credit. Increasing the grid size $K$ appears to have little effect on the performance of \rankmessy, suggesting a fixed level of discretisation is unsuitable in this setting. In the case of simulating credit default given user annuity, active-rank again outperforms \rankmessy for in for low sampling times, however, as the sampling times of the algorithms increases, the performance of \klcrank begins to flat line, while that of \rankmessy continues to decrease, eventually outperforming \klcrank. We conjecture that the limiting factor in the performance of \klcrank is down to it's fixed assumption on the smoothness constraint.  The connection between the bandwidth of the kernel regression and the smoothness parameter passed to the algorithm is purely heuristic. It would be of interest to consider practical settings for which the smoothness parameter is known exactly and choosing a reasonable smoothness parameter $\beta$ to pass to \klcrank, remains an interesting question for the general case. 

In the above experiments, we only consider a small subset of the features, this is to keep running time manageable. Ideally one would do feature selection via say PCA, again, we leave this for a more application oriented work. As no attempt was made to minimise the constants appearing in Theorem \ref{thm:packlcrank}, they are likely to be largely overestimated. In the running of our experiments, the constants used in \klcrank, differ from their theoretical counterparts.
\begin{figure}
\begin{minipage}[b]{0.49\linewidth}
\includegraphics[width=\linewidth]{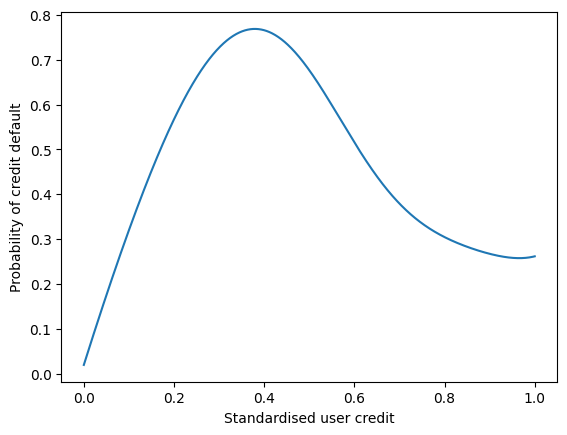}
\end{minipage}
\hfill
\begin{minipage}[b]{0.49\linewidth}

\includegraphics[width=\linewidth]{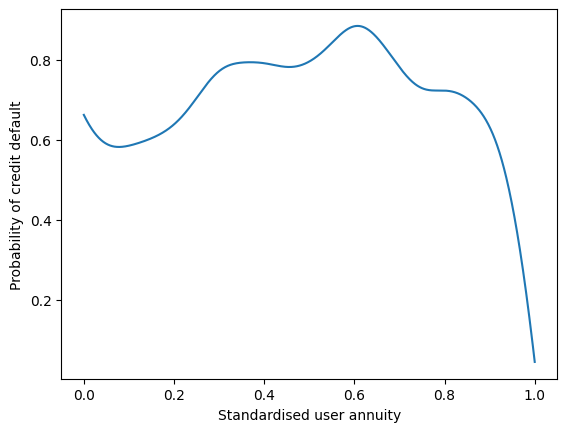}

\end{minipage}
\vfill
\begin{minipage}[b]{0.49\linewidth}
\includegraphics[width=\linewidth]{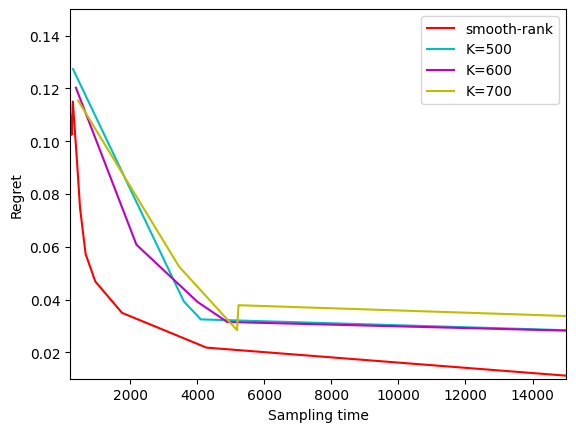}
\end{minipage}
\hfill
\begin{minipage}[b]{0.49\linewidth}
\includegraphics[width=\linewidth]{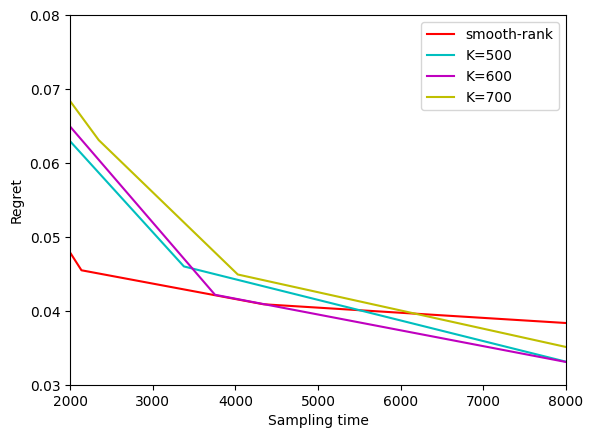}
\end{minipage}%
  \caption{(Top right, top left respectively) 
  Result of KDE for modeling credit default risk given user credit and annuity on EDA data.  (Bottom right, bottom left respectively) Performance of \klcrank compared with \rankmessy, for credit default given user credit and annuity, for K=(500,600,700), regret estimated by 50 Monte Carlo realisations of each algorithm.}\label{fig:4}
\end{figure}

\section{Discussion and Perspectives}\label{sec:conc}

\paragraph{Adaptation to unknown smoothness}
Following the work done in optimisation, e.g. \cite{grill2015black} and \cite{pmlr-v98-bartlett19a}, where the authors consider optimisation of the $\cX$-armed bandit under unknown smoothness, a natural question is what happens when the smoothness parameter $\beta$ is unknown in our bipartite ranking setting. However, in optimisation, adaptation to an unknown smoothness parameter is somewhat easier than in ranking. The reason being, one can take a sub routine, which requires a known smoothness parameter and run many such sub routines in parallel for varying degrees of smoothness. The output of the sub routine with best performance can then be chosen. However, in ranking - as opposed to optimisation, it is not so simple to judge the best ranking rule from a variety of contenders, or more specifically to our setting, judge when a ranking rule guarantees less than epsilon regret. Thus in our opinion, adaptation to smoothness constraints in active ranking, represents a unique and challenging problem, distinct from that faced in optimisation of continuous armed bandits.


\paragraph{Multi partite and continuous label ranking} Our results extend to the case where one observes a continuous label $Y$, and ranks the feature space according to the probability $Y$ is above some fixed threshold $\rho$, known to the learner, see section \ref{app:cont} of the appendix. The case where the threshold $\rho$ is not fixed - instead, for instance, equipped with some prior, remains an open problem. Also of interest is multipartite ranking, in this case the label $Y$ takes a finite number of values, as opposed to being binary as in our setting. An active multipartite ranking would be closely related the ranking experts problem, see \cite{pmlr-v202-saad23b}, but with the modification that, for each query the learner observes the performance of an expert on a single problem, chosen at random, as apposed to the experts performance on all problems simultaneously. 
\subsubsection*{Acknowledgements}
The work of J.~Cheshire is supported by the FMJH, ANR-22-EXES-0013. 
\newpage

\bibliography{bibly.bib}

\appendix

\newpage
\onecolumn

\section{Extension of results to continuous label fixed threshold setting}\label{app:cont}
In the continuous label fixed threshold setting, the setup is as follows. We no longer restrict the labels $Y$ to take a binary value in $\{0,1\}$, but are rather allow for continuous labels, taking a value in $\mathbb{R}$. That is, when sampling a point $x\in [0,1]^d$, the learner observes the realisation of some real valued random variable, with distribution function $F_x$. For a fixed threshold $\rho\in [0,1]$, known to the learner we can then define the regression function $\eta_\rho(x) = 1-F_x(\rho)= \P(Y\geq \rho | X=x)$. The setting then follows as in the bipartite case, indeed bipartite ranking can be seen as a restricted version of the continuous label setting. Our approach in the continuous label setting, will mirror that for bipartite ranking, the main difference being that our algorithm will base its decisions on confidence bounds constructed via the Dvoretzky–Kiefer–Wolfowitz (DKW) inequality, as opposed to the KL divergence. The DKW inequality controls the probability that an empirical distribution function differs from its true counterpart. Specifically, for some iid real valued random variables $X_1,...,X_n$ with distribution function $F$, we can define the empirical distribution, 

$$\hF^n(x) = \frac{1}{n}\sum_{i=1}^n \Ind(X_i \geq x)\;,$$
the DKW inequality then states, for $\epsilon > 0$,

$$\P\left(\sup_{x\in\mathbb{R}}|\hF^n(x) - F(x)| \geq \epsilon\right) \leq c\exp\left(-2n\epsilon^2\right)\;.$$
As the DKW is known to be tight in the worst case, it is reasonable to then update our sample complexity for the continuous label case as follows,

$$H(x) = \Delta(x)^{-d/\beta-2}\;.$$

As one can see, the DKW inequality is not locally dependent, i.e. for a fixed number of samples the width of the generated confidence bound is not dependent on the value of the respective empirical mean, as opposed to KL divergence based confidence intervals which become tighter as one approaches 0 or 1. As a result, our algorithm for the continuous label case is simpler than \klcrank. We no longer sample the point in the active set with largest confidence interval, but rather progress round by round, where at each round all all points in the active set are sampled. That is, $t$ no longer tracks the time step of the algorithm, with the algorithm taking one sample per time step. Instead, $t$ tracks the number of rounds, with the algorithm drawing many samples per round. As a result, for a given round $t$, the widths of the confidence bound, on all points in the active will be constant, denoted $\Delta_{(t)}$, where, 

$$\Delta_{(t)} := \argmin\left\{\Delta \in [0,1] : \Delta \geq  \sqrt{\frac{\log(t^2 \Delta^{-d/\beta}/\delta)}{t}}\right\}\;.$$
With our new confidence interval in mind, the elimination rule then follows as for \klcrank,
$$\cQ_t := \left\{i \in \cX_t \cap S_t: \Delta_{(t)}\leq  \left(\frac{\epsilon \hat p}{\Delta_{(t)}^{d/\beta}|U_{i,t}(6\Delta_{(t)})|}\wedge 1\right)  (1 - \hmu_{i}^t)\right\}\;.$$  
At round $t$, for a point $i\in \cX_t$, let $\hF_i^t(\rho)$ denote the empirical distribution function of the point $i$, based on all samples collected from point $i$ thus far. As in \klcrank, we will maintain an estimate of the overall proportion $\P(Y\geq \rho)$, denoted $\hF^t_{0}(\rho)$. 
\begin{algorithm}[H]
\caption{\kltcrank}
\label{alg:kltcrank}
\begin{algorithmic}
\STATE {\bf Input:}  $\epsilon,\delta,\beta,\rho$
\STATE {\bf Initialise:} $S_0 = [0,1]^d$, $\cX_0 = \cE_d(0.5)$, $t=1$
\REPEAT

\STATE  Update the empirical mean $\hF_0^t(\rho)$ by $x_t$ be a point drawn uniformly from $[0,1]^d$,\\ 


\STATE Sample all points $i \in \cX_t$ once and update empirical means $\hF_i^t(\rho)$ .

\FOR{$i \in S_t \cap \cX_t$}
    \IF{$i \in \cQ_t, \Delta_{(t)} \leq \hF_0^t/4$}
  \STATE $S_t = S_t \setminus \{x: \argmin_{j \in \cX_t}(   \lVert j - x \rVert_d)= i\}$

     \ENDIF
\ENDFOR
\STATE Let $\tilde n = \min\left(n : 2^{-n} \leq \Delta_{(t)}^{1/\beta}\right)$. Add the points $\cE_d(2^{-\tilde n }) \setminus \cX_t$ to $\cX_t$.

\STATE $t=t+1$
\UNTIL{ $S_t = \emptyset$ }
\STATE Let $\hsigma$ be the permutation sorting $(\hF^t_i(\rho))_{i \leq |\cX_t|}$ into ascending order. 
\STATE {\bf Output:} $\heta(x) = \hsigma\left(\argmin_{j \in \cX_t}(   ||j - x||_d)\right)$
\end{algorithmic}
\end{algorithm}
The following Lemma mirrors Lemma \ref{lem:goodev}, adapted to the continuous label setting,
\begin{lemma}\label{lem:goodevT}
We have that the event,
    $$\mathcal{E}= \bigcap_{t \in \mathbb{N}}\bigcap_{i \in [\cX_t]} \{|\hF_i^t(\rho) - F_i(\rho)| \leq \Delta_{(t)}\} \cap \{|\hF_{0}^t(\rho)- \P(Y>\rho) | \leq  \Delta_{(t)}\}  \;,$$
    occurs with probability greater than $1-\delta$. 
\end{lemma}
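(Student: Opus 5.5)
The plan is a DKW bound at each round, a union bound over the grid, and a sum over rounds. What makes this work in \kltcrank is that the quantities involved are deterministic. The width $\Delta_{(t)}$ depends only on $t,\delta,d,\beta$. The refinement step adds $\cE_d(2^{-\tilde n})\setminus\cX_t$ \emph{without} intersecting with $S_t$, and $\tilde n$ is a function of $\Delta_{(t)}$ only. Hence the active grid $\cX_t$ is a deterministic set: since $\Delta_{(t)}$ is nonincreasing in $t$, it is the finest net $\cE_d(2^{-\tilde n})$ reached so far. Its cardinality satisfies $|\cX_t|\le c_d\,\Delta_{(t)}^{-d/\beta}$, because $2^{-\tilde n}\ge \Delta_{(t)}^{1/\beta}/2$. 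So no union bound over random index sets is needed; this is the simplification relative to Lemma \ref{lem:goodev}.

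\textbf{Step 1: sample counts.} I first check that at round $t$ every $i\in\cX_t$ has a round-indexed number of samples, namely $t$ i.i.d.\ draws from $F_i$. The estimate of $\P(Y>\rho)$ also has $t$ i.i.d.\ draws, one uniform point per round. For the proportion estimate this is immediate. For grid points it is the step I expect to be the main obstacle. A point first added at round $s$ would naively have only $t-s$ samples at round $t$, and the round-indexed width $\Delta_{(t)}$ in the statement would then be too small. I would handle this by making explicit the convention implicit in ``sample all points $i\in\cX_t$ once'': a newly added point is brought up to the current round count when it enters, so that $\hF_i^t$ is built from exactly $t$ samples. I expect this to cost at most a constant factor in the sampling budget, since the grid sizes grow geometrically. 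Under this convention, $\hF_i^t(\rho)$ is an empirical CDF of $t$ i.i.d.\ variables for every $i\in\cX_t$.

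\textbf{Step 2: fixed $t$ and $i$.} For fixed $t$ and $i$, apply the DKW inequality with $n=t$ and accuracy $\Delta_{(t)}$:
$$\P\big(|\hF_i^t(\rho)-F_i(\rho)|>\Delta_{(t)}\big)\le c\exp(-2t\Delta_{(t)}^2).$$
By the defining property $\Delta_{(t)}^2\ge \log(t^2\Delta_{(t)}^{-d/\beta}/\delta)/t$, this is at most $c\,\delta\,\Delta_{(t)}^{d/\beta}/t^2$. For the exponent to absorb the constants $c$, $c_d$ and $\sum_t t^{-2}$, the logarithm in $\Delta_{(t)}$ should carry a constant factor inside, e.g.\ $\log(c_0 t^2\Delta^{-d/\beta}/\delta)$. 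I would make this explicit, as the paper does for $\beta(t,i,\delta)$. The identical bound $c\,\delta/t^2$ holds for the proportion estimate $\hF_0^t(\rho)$.

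\textbf{Step 3: union bound over $i$.} Union bounding over the deterministic set $\cX_t$ gives, at round $t$, a failure probability at most
$$c_d\,\Delta_{(t)}^{-d/\beta}\cdot c\,\delta\,\Delta_{(t)}^{d/\beta}/t^2+c\,\delta/t^2=c'\delta/t^2.$$
The grid size exactly cancels the $\Delta^{-d/\beta}$ inside the logarithm.

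\textbf{Step 4: sum over $t$.} Summing over $t\ge1$ gives $c'\delta\pi^2/6$, which is at most $\delta$ once the constant $c_0$ is chosen large enough. The complement event is exactly $\mathcal{E}$, so $\mathcal{E}$ occurs with probability at least $1-\delta$.
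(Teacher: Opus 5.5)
Your argument is correct, and its engine is the same as the paper's. For each fixed round and grid point you use a single-point DKW (really Hoeffding) bound of order $\delta\Delta_{(t)}^{d/\beta}/t^2$. You then take a union bound in which the grid cardinality cancels the $\Delta^{-d/\beta}$ inside the logarithm.

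The difference is in the bookkeeping. The paper defers to the proof of Lemma~\ref{lem:goodev}. That proof groups rounds into blocks with $|\cX_t|\in[2^k,2^{k+1}]$, union bounds over the points sampled in each block, and sums a geometric series. The device was built for \klcrank, where grids and per-point widths are random, so the paper has to treat random index sets as if they were fixed. You observe instead that in \kltcrank\ both $\Delta_{(t)}$ and $\cX_t$ are deterministic, because the refinement step is not intersected with $S_t$. Hence one union bound over $\{(t,i): i\in\cX_t\}$ with $|\cX_t|\le c_d\Delta_{(t)}^{-d/\beta}$ is fully rigorous and sidesteps that issue.

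Your Step~1 also exposes something the paper's one-line proof takes for granted. Under the pseudo-code as written, a point added at round $s$ has only about $t-s$ samples at round $t$. After its first sample, $|\hF_i(\rho)-F_i(\rho)|$ equals $F_i(\rho)$ or $1-F_i(\rho)$, which is typically far above $\Delta_{(t)}$. So the event generally fails for the literal algorithm. Your top-up rule is therefore a necessary modification of \kltcrank, not a reading of ``sample all points once'', and you should state it as one. The alternative, per-point widths, changes the event.

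Similarly, the constant $c_0$ you add inside the logarithm is missing from the paper's definition of $\Delta_{(t)}$. Note that the exponent $2t\Delta_{(t)}^2$ actually gives the square of $\delta\Delta_{(t)}^{d/\beta}/t^2$. So $c_0$ is only needed when $\delta$ is not small compared with $1/c_d$.
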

\begin{proof}
Let $t>0$ and $i \in \cX_t$, via the DKW inequality we have that, 

$$\P\left(|\hF_i^t(\rho) - F_i(\rho)| \leq \Delta_{(t)}\right) \leq \frac{\delta \Delta_{(t)}^{d/\beta}}{t^2}$$
The proof of Lemma \ref{lem:goodevT} then follows as in the proof of Lemma \ref{lem:goodev}.\end{proof} With Lemma \ref{lem:goodevT} in mind, as our elimination rule remains unchanged, the proof that \kltcrank is PAC$(\epsilon,\delta)$, demonstrated in the following Lemma, follows as in the proof of \ref{lem:pac}.

\begin{lemma}\label{lem:tpac}
For $\epsilon, \delta > 0$, on all problems $\nu \in \cB$, we have that \kltcrank is PAC$(\epsilon,\delta)$.
\end{lemma}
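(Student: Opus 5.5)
The proof mirrors the PAC argument for \klcrank (Lemma \ref{lem:pac}): it works on the favourable event of Lemma \ref{lem:goodevT}, which holds with probability at least $1-\delta$. We then show that on this event the output ranking respects the ordering of $\eta_\rho$ for every pair $x,y$ with $|\eta_\rho(x)-\eta_\rho(y)|\le \Delta(x)$. Finally, we invoke the ROC analysis (as in \cite{cheshire2023active}) converting this pairwise property into $d_\infty(\heta,\eta_\rho)\le\epsilon$.

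\textbf{Step 1: event $\cE$.} All confidence intervals of the sampled points have common width $\Delta_{(t)}$ at round $t$, and the estimate $\hF^t_0(\rho)$ of $p_\rho:=\P(Y\geq\rho)$ is within $\Delta_{(t)}$ of its target. Since $\Delta_{(t)}$ is now deterministic in $t$, the geometric-grid union bound of Lemma \ref{lem:goodev} applies directly, which is exactly Lemma \ref{lem:goodevT}. On $\cE$ we have $\hF^t_0\ge p_\rho-\Delta_{(t)}$. Combined with the elimination precondition $\Delta_{(t)}\le \hF_0^t/4$, this gives $p_\rho\le \tfrac{5}{3}\hF_0^t$ and $\hF_0^t\le \tfrac{5}{4}p_\rho$, so $\hat p$ in $\cQ_t$ is within constant factors of $p_\rho$.

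\textbf{Step 2: analogues of Propositions \ref{prop:St} and \ref{prop:St2}.} These are the core of the argument. Take $i\in\cQ_t$ eliminated at round $t$.
\begin{itemize}
\item \emph{Discretisation error.} The covering added at the end of each round has mesh $2^{-\tilde n}\le\Delta_{(t)}^{1/\beta}$, so every $x$ in the cell of $i$ satisfies $|\eta_\rho(x)-\eta_\rho(i)|\le\Delta_{(t)}$ (with $C=1$, and up to a $d$-dependent constant absorbed in the mesh choice).
\item \emph{Comparison with $\Delta(x)$.} Any $y$ with $|\eta_\rho(x)-\eta_\rho(y)|\le \Delta_{(t)}$ has a nearest active grid point $j$ with $|\hF_i^t-\hF_j^t|\le 4\Delta_{(t)}\le6\Delta_{(t)}$, so $j\in U_{i,t}(6\Delta_{(t)})$. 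Each such $j$ has a cell of volume at most of order $\Delta_{(t)}^{d/\beta}$, hence $\lambda\{y:|\eta_\rho(x)-\eta_\rho(y)|\le\Delta_{(t)}\}\lesssim \Delta_{(t)}^{d/\beta}|U_{i,t}(6\Delta_{(t)})|$.
\item \emph{Conclusion.} The elimination inequality then yields $\Delta_{(t)}\lambda(\cdots)\lesssim \epsilon p_\rho(1-\eta_\rho(x))$, i.e.\ $\Delta_{(t)}\lesssim\Delta(x)$ by the definition of $\Delta(x)$ as a minimum. The constants $6$ and $1/4$ are chosen so that this inequality comes out with constant $\le 1$.
\end{itemize}
The main obstacle is that points already eliminated from $S_t$ are no longer in $U_{i,t}$, since it only counts active points. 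This is handled as in Proposition \ref{prop:St2}: a region eliminated earlier was eliminated with a larger $\Delta_{(s)}$ but already satisfied its own condition. For $y$ with $|\eta_\rho(x)-\eta_\rho(y)|\ge$ the relevant width, the frozen empirical means are separated by the confidence intervals on $\cE$, so $\hsigma$ orders the cells of $x$ and $y$ correctly.

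\textbf{Step 3: from pairwise correctness to ROC.} This is identical to the bipartite case, since the ROC of $\heta$ for threshold $\rho$ depends only on $\eta_\rho$ and $p_\rho$ exactly as before. Misranking only pairs within distance $\Delta(x)$ of each other costs at most $\epsilon$ in $d_\infty$. We therefore conclude, as in the proof of Lemma \ref{lem:pac}, that $\P(d_\infty(\heta,\eta_\rho)\le\epsilon)\ge1-\delta$.
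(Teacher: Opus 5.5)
Your proposal follows the paper's route: the paper proves this lemma in one line, noting that once Lemma \ref{lem:goodevT} is available and the elimination rule is unchanged, Propositions \ref{prop:phat}, \ref{prop:St}, \ref{prop:St2}, Lemma \ref{lem:welldist} and the ROC computation in the proof of Lemma \ref{lem:pac} carry over, which is exactly your Steps 1--3. A few points to tighten: the pairwise property you need is correct ordering whenever $|\eta_\rho(x)-\eta_\rho(y)|\ge\Delta(x)$ (as you use in Step 3, not $\le$ as in your opening paragraph); separating $\hat F_{j_x}^t$ from $\hat F_{j_y}^t$ costs two discretisation errors plus two confidence widths, so Step 2 must deliver $4\Delta_{(t)}\le\Delta(x)$ rather than just $\Delta_{(t)}\lesssim\Delta(x)$, which Proposition \ref{prop:St} obtains by working with the $4\Delta_{(t)}$-level set around $\eta_\rho(x)$; and on $\cE$ one gets $\hat F_0^t\le\tfrac43 p_\rho$, not $\tfrac54 p_\rho$.
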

In the following Lemma we demonstrate an upper bound on the expected sampling time of \kltcrank. 
\begin{lemma}\label{lem:samp}
For $\epsilon, \delta > 0,\rho\in (0,1)$, on all problems $\nu \in \cB$, we have that the expected sampling time of \kltcrank is upper bounded by, 
   $$c '\int_{x\in [0,1]^d} H(x) \log\left(c''  H(x)/\delta\right)\;dx\;,$$
 where $c '$, $c''$ are absolute constants. 

\end{lemma}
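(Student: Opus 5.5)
The plan is to mirror the sampling-time analysis of \klcrank (the argument sketched after Theorem \ref{thm:packlcrank}), with one simplification: in \kltcrank all active points are sampled once per round, and the confidence width $\Delta_{(t)}$ is deterministic and shared by all points. Throughout we work on the event $\cE$ of Lemma \ref{lem:goodevT}.

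\textbf{Step 1 (elimination time of a point).} We fix a subset $W\subset[0,1]^d$ on which $\Delta(x)$ varies by at most a factor $2$. Write $\Delta_W=\min_{x\in W}\Delta(x)$, and let $t_W$ be the first round at which $\Delta_{(t)}\le c\,\Delta_W$ for a small absolute constant $c$ (say $1/120$). By the definition of $\Delta_{(t)}$,
$$t_W\lesssim \Delta_W^{-2}\log\big(\Delta_W^{-d/\beta}/\delta\big).$$
We then show that on $\cE$, once $t\ge t_W$, every $i\in\cX_t\cap W$ belongs to $\cQ_t$, and that the additional condition $\Delta_{(t)}\le \hF^t_0/4$ also holds. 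This is the analogue of Proposition \ref{prop:xi}, and the argument goes as follows.
\begin{itemize}
\item Empirical means are within $\Delta_{(t)}$ of the true values $\eta_\rho$.
\item By Assumption \ref{ass:1}, a grid of mesh $\Delta_{(t)}^{1/\beta}$ has each point representing a cell on which $\eta_\rho$ varies by at most $\Delta_{(t)}$.
\item Hence $\Delta_{(t)}^{d/\beta}|U_{i,t}(6\Delta_{(t)})|$ is bounded, up to constants, by $\lambda(\{y:|\eta(i)-\eta(y)|\le c'\Delta_{(t)}\})$.
\item The elimination inequality then reduces to the defining inequality of $\Delta(i)$, after using that $\hat p$ and $1-\hmu_i^t$ are within constant factors of $p$ and $1-\eta(i)$. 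This holds because $\Delta_{(t)}\le \Delta(i)\le 1-\eta(i)$ and $\Delta_{(t)}\le \hF_0^t/4$.
\end{itemize}
The monotonicity of $z\mapsto z\lambda(\{|\eta(x)-\eta(y)|\le z\})$ then turns $\Delta_{(t)}\lesssim\Delta(i)$ into membership in $\cQ_t$.

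\textbf{Step 2 (counting samples in $W$).} A point created in $W$ is sampled at most $t_W$ times, one sample per round. The number of grid points ever placed in $W$ is at most $\lambda(W)(c\Delta_W)^{-d/\beta}$, up to boundary cells: points are added at mesh $\Delta_{(t)}^{1/\beta}$, and refinement in $W$ stops once $W$ leaves $S_t$, which happens by round $t_W$ when $\Delta_{(t)}\gtrsim\Delta_W$. Multiplying these gives
$$N(W)\lesssim \lambda(W)\,\Delta_W^{-d/\beta-2}\log\big(\Delta_W^{-d/\beta}/\delta\big)\asymp \lambda(W)\,H(x)\log(H(x)/\delta)$$
for every $x\in W$, since $\Delta$ is comparable across $W$. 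The samples used for $\hF_0$ number one per round, so they are dominated by $\max_W t_W$. This is bounded by the integral because the set achieving the smallest $\Delta$ has positive measure near a minimiser, by smoothness.

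\textbf{Step 3 (summing and handling the failure event).} We partition $[0,1]^d$ into dyadic level sets $G_n=\{x:\Delta(x)\in(2^{-n-1},2^{-n}]\}$, sum the bounds of Step 2 over $n$, and compare the sum with the integral $\int H(x)\log(c''H(x)/\delta)\,dx$, in the same way as Proposition \ref{prop:fin}.

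Outside $\cE$ the stopping time is unbounded a priori, so a bound on $\cE$ alone does not control the expectation. We instead follow the peeling of Proposition \ref{prop:probxi}. For rounds $t_{W,k}$ at which $\Delta_{(t)}\le 2^{-k}c\Delta_W$, the probability that $W$ is still active is at most $2^{-k}t_{W,k}^{-3}$, by DKW and a union bound over the current grid. Summing $\P(\text{still active at }t_{W,k})\cdot t_{W,k+1}\cdot\#\text{points}$ over $k$ gives a convergent series, because the $t^{-3}$ decay beats the polynomial growth in the number of points and rounds.

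\textbf{Main obstacle.} I expect Step 1 to be the hardest part: converting the discrete count $|U_{i,t}|$ into the Lebesgue measure appearing in $\Delta(x)$, with constants that are consistent in both directions (the same constants must also serve the PAC lemma). The peeling in Step 3 is routine by comparison.
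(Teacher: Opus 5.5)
Your plan follows the same route as the paper. The paper's proof of this lemma reruns the argument of Lemma~\ref{lem:exp} with DKW-based times $t_i$ and good events $\xi_{W,t}$: per-region elimination time, a grid count on an enlargement of $W$, peeling of the failure event, and summation over dyadic level sets of $\Delta$. Partitioning by $\Delta$ alone, as you do, is legitimate here because $H=\Delta^{-d/\beta-2}$.

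\textbf{Gap in Step 1.} You claim that the gate $\Delta_{(t)}\le \hat F_0^t/4$ is open as soon as $\Delta_{(t)}\le c\,\Delta_W$, and this fails. On $\mathcal{E}$ you have $\hat F_0^t\le p+\Delta_{(t)}$, so no point is eliminated before $\Delta_{(t)}\le p/3$. Meanwhile $\Delta_W$ can be arbitrarily larger than $p$, so no choice of $c$ rescues the claim. For example, take $d=\beta=1$ and $\eta(x)=(h-|x-1/2|)_+$. Then $p=h^2$, while $\lambda(\{y:|h-\eta(y)|\le z\})=2z$ for $z\le h$. Hence $\Delta(1/2)\approx h\sqrt{\epsilon/2}\gg p$ for $h\ll\sqrt{\epsilon}$, and $\Delta$ stays of this order on a neighbourhood of positive measure.

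So $t_W$ must be the first round with $\Delta_{(t)}\le c\min(\Delta_W,p)$. Step 2 then only gives $N(W)\lesssim\lambda(\bar W)\min(\Delta_W,p)^{-d/\beta-2}\log(\cdot)$, which is not bounded by $\int_W H\log(H/\delta)$ region by region.

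The missing ingredient is a global comparison.
\begin{itemize}
\item All regions with $\Delta_W>p$ are gone by the first round with $\Delta_{(t)}\le cp$. At that round the grid has $O(p^{-d/\beta})$ points, so together these regions cost at most of order $p^{-d/\beta-2}\log(p^{-1}/\delta)$.
\item This is dominated by the integral. By Markov's inequality, $\lambda(\{\eta\le 2p\})\ge 1/2$.
\item Every $x$ in that set satisfies $2p\,\lambda(\{y:|\eta(x)-\eta(y)|\le 2p\})\ge p\ge \epsilon p(1-\eta(x))$, so $\Delta(x)\le 2p$.
\item Hence $\int H\ge \tfrac12(2p)^{-d/\beta-2}$.
\end{itemize}
The paper's Propositions~\ref{prop:probxi} and~\ref{prop:xi} skip the same check: they assert $|\hat p-p|\le p/2$ and elimination at $T_i$ without verifying the gate.

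\textbf{Two smaller points in Step 3.}

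First, the rate $2^{-k}t_{W,k}^{-3}$ does not by itself make your series converge. At stage $k$ the grid on $\bar W$ has $\asymp 2^{kd/\beta}\lambda(\bar W)\Delta_W^{-d/\beta}$ points, and $t_{W,k+1}\asymp 4^k t_{W,0}$. The $k$-th term is therefore of order $2^{k(d/\beta-5)}$, up to factors polynomial in $k$, and the series diverges once $d/\beta\ge 5$. Use the DKW tail itself instead. At $t=t_{W,k}$, accuracy $c\Delta_W\ge 2^k\Delta_{(t)}$ fails at a given point with probability at most $2\exp(-2\cdot 4^k t\Delta_{(t)}^2)\le 2(\delta\Delta_{(t)}^{d/\beta}/t^2)^{2\cdot 4^k}$. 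The factor $\Delta_{(t)}^{d/\beta}$ coming from the exploration rate cancels the union bound over the grid, and the exponent $4^k$ gives convergence for every $d/\beta$.

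Second, that union bound presumes every active point has $t$ samples at round $t$. Lemma~\ref{lem:goodevT}, whose ``shared deterministic width'' you rely on, makes the same presumption. But a point inserted at the last refinement round $s$ has only $t-s$ samples. One fix is to take the peeling rounds just before a refinement. Refinements are spaced by a factor of roughly $4^\beta$ in $t$, so every active point then has at least a fraction $1-4^{-\beta}$ of $t$ samples, at the cost of $\beta$-dependent constants. The alternative is to track per-point sample counts.
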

\begin{proof}
    
To upper bound the expected sampling time of \kltcrank it is straightforward to adapt the proof of \ref{lem:exp}, with some slight modifications. First we redefine the $t_i$s as follows, 

$$t_i = \argmin\left\{t : t \geq \frac{\log\left(2^i t^2\Delta_{(t)}^{-1/\beta}/\delta\right)}{\Delta_W^2}\right\}$$
and then consider the slightly reformulated "good event",
$$
\xi_{W,t} := \left\{\forall j \in \cX_{t}\cap S_{t}, |\hF_{j}^{t}(\rho) - F_j(\rho)| \leq \Delta_{W}/120  \right\}\cap \{|\P(Y\geq \rho) - \hat F_0^t(\rho)| \leq p/2\}\;.
$$
For $t \geq t_i$, via the DKW inequality we have that for $i \in \cX_t$,
$$\P\left(|\hF_{i}^{t}(\rho) - F_i(\rho)| \leq \Delta_{W}/120 \right) \leq \exp\left(-ct\Delta_{W}^2\right)\leq \delta t^{-2} 2^{-i}\Delta_{(t)}^{-1/\beta}\;.$$
and thus via a union bound, $\P(\xi_{W,t}^c) \leq \delta t^{-2} 2^{-i}$. This matches the result of Proposition \ref{prop:probxi}. What remains is to upper bound $t_0$ similarly and the proof then follows as in the proof of Lemma \ref{lem:exp}. 
\end{proof}
\section{Proof that \klcrank is PAC$(\epsilon,\delta)$}\label{app:pac}
In this section we will prove the following Lemma which demonstrates the PAC$(\epsilon,\delta)$ guarantee for our algorithm \klcrank. 
\begin{lemma}\label{lem:pac}
For $\epsilon, \delta > 0$, with $\beta(t,i,\delta) = c\log(t^2\hDelta_{i,t}^{-d/\beta}/\delta)$ where $c$ is an absolute constant, on all problems $\nu \in \cB$, we have that \klcrank is PAC$(\epsilon,\delta)$. 
\end{lemma}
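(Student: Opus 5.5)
We argue on the event $\cE$ of Lemma \ref{lem:goodev}, which has probability at least $1-\delta$. It then suffices to show that on $\cE$ the algorithm terminates and outputs $\heta$ with $d_\infty(\heta,\eta)\le\epsilon$. This splits into three steps: a local correctness statement (Lemma \ref{lem:welldist}), a deterministic ROC computation converting local correctness into the sup-norm bound, and a check of the elimination conditions that feed into the first step.

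\textbf{Step 1: probability of $\cE$.} For a fixed grid $\cE_d(2^{-n})$, the exploration rate $\beta(t,i,\delta)=c\log(t^2\hDelta_{i,t}^{-d/\beta}/\delta)$ combined with a time-uniform KL deviation bound gives a failure probability of order $\delta t^{-2}2^{-nd}$ per point and time. A point $i$ only enters $\cX_t$ at level $n$ when $\Delta_{(t)}\approx 2^{-n\beta}$, so that $\hDelta_{i,t}^{-d/\beta}\gtrsim 2^{nd}$. Summing over the $2^{nd}$ points of level $n$, then over $t$ and $n$, gives a total of at most $\delta$ once $c$ is large enough. The same argument covers the estimate $\hp_t$.

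\textbf{Step 2: local correctness.} Fix $x$, let $i_x$ be its nearest point in the final $\cX$, and let $t_x$ be the time at which the cell of $i_x$ is eliminated.
\begin{itemize}
\item At time $t_x$ the grid has mesh $\lesssim\Delta_{(t_x)}^{1/\beta}$. By Assumption \ref{ass:1}, this gives $|\eta(x)-\eta(i_x)|\lesssim\Delta_{(t_x)}$.
\item On $\cE$, every active point has $|\hmu_j-\eta(j)|\le\Delta_{(t_x)}$.
\item The elimination rule \eqref{eq:elim} then yields $\Delta_{(t_x)}\lesssim\Delta(x)$. To see this, note that $\Delta_{(t_x)}^{d/\beta}|U_{i,t}(6\Delta_{(t_x)})|$ is, up to constants, the Lebesgue measure of $\{y:|\eta(y)-\eta(x)|\le c\Delta_{(t_x)}\}$. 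Also, on $\cE$, $\hp_t\approx p$ because $\hDelta_{0,t}\le\Delta_{(t)}\le\hp_t/4$. Since the defining inequality for $\Delta(x)$ is monotone in $z$, comparing it with the elimination condition gives the bound.
\end{itemize}
We then split into two cases for $y$ with $|\eta(x)-\eta(y)|\le\Delta(x)$.
\begin{itemize}
\item If $y$'s cell was still active when $x$'s cell was eliminated, their empirical means were estimated to precision $\Delta_{(t_x)}$.
\item Otherwise, swap the roles of $x$ and $y$.
\end{itemize}
Once the constants ($6$, $120$) are arranged, this should give correct relative ordering, or at least that any misranking stays within the tolerance. This is Propositions \ref{prop:St} and \ref{prop:St2}.

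\textbf{Step 3: from local correctness to the ROC bound.} We follow \cite{cheshire2023active}. For each $\alpha$, let $t^*$ be the threshold level of $\eta$ corresponding to $\alpha$. The loss $\roc^*(\alpha)-\roc(\heta,\alpha)$ is at most $\frac{1}{p}\int|\eta(y)-\eta(t^*)|$ over the misranked mass. By Step 2, that mass consists of points $y$ with $|\eta(y)-\eta(t^*)|\le\Delta$, where $\Delta$ is the gap at the relevant point. The definition of $\Delta(\cdot)$, namely $z\lambda(\{|\eta-\eta(x)|\le z\})\le\epsilon p(1-\eta(x))$ at the threshold, together with the normalisation by the negative mass $(1-\eta)$ in the false positive rate, gives a loss of at most $\epsilon$. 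Using the broken-line convention handles ties.

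\textbf{Termination.} $\Delta_{(t)}\to0$ on $\cE$ while $\Delta(x)$ is bounded below by $\epsilon p(1-\eta(x))\wedge(1-\eta(x))$, so every cell eventually satisfies \eqref{eq:elim}. Points with $\eta$ near $1$ need a separate check, where the $\wedge(1-\eta(x))$ cap handles them.

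\textbf{Main obstacle.} The main obstacle is Step 2. The cardinality $|U_{i,t}(6\Delta_{(t)})|$ counts only active points at the current (finest) discretisation, whereas $\lambda(\{y:|\eta(x)-\eta(y)|\le z\})$ includes already-eliminated regions that were discretised more coarsely. I would try to show that eliminated regions near $\eta(x)$ have already been correctly separated from $x$, so they need not be counted, and then track constants carefully.
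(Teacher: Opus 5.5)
Your outline follows the paper's route: the good event $\cE$ via a peeling union bound over grid levels, $\hp_t$ within a constant factor of $p$ once $\Delta_{(t)}\le \hp_t/4$ (Proposition \ref{prop:phat}), local correctness (Propositions \ref{prop:St} and \ref{prop:St2}), then the ROC computation of \cite{cheshire2023active}. The gap is in Step 2, which carries all the content.

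First, its target is stated the wrong way round. What your Step 3 uses, and what Propositions \ref{prop:St} and \ref{prop:St2} assert, is that every $y$ with $|\eta(x)-\eta(y)|\ge\Delta(x)$ is ranked correctly against $x$. The $\le$ in the printed Lemma \ref{lem:welldist} should be read as $\ge$. So ``any misranking stays within the tolerance'' is the very claim to be proved, not a fallback.

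More seriously, ``swap the roles of $x$ and $y$'' fails because the tolerance is asymmetric. If $y$'s cell is eliminated first, at time $t_y$, the pair is only resolved to precision of order $\Delta_{(t_y)}$. The elimination rule applied at $y$ gives $\Delta_{(t_y)}\lesssim\Delta(y)$. But you need $\Delta_{(t_y)}\lesssim\Delta(x)$, and $\Delta(x)$ may be much smaller than $\Delta(y)$.

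The paper's device is that a single elimination certifies a whole $\eta$-neighbourhood. If $i\in\cQ_t$, then $\Delta_{(t)}\lesssim\Delta(x)$ for \emph{every} $x$ with $|\eta(x)-\hmu_i^t|\le 4\Delta_{(t)}$, via an inclusion of the form $U_{x,t}(4\Delta_{(t)})\subset U_{i,t}(6\Delta_{(t)})$. This is what the slack between $6$ and $4$ in \eqref{eq:elim} is for. Proposition \ref{prop:St2} then splits on whether $x$ lies in this neighbourhood of $j_y$. If it does, apply Proposition \ref{prop:St} with $i=j_y$. If not, then $|\eta(j_y)-\eta(x)|\ge 2\Delta_{(t)}$, and estimation plus discretisation error cannot flip the order.

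Alternatively, you can keep your swap by adding a comparison lemma: $|\eta(x)-\eta(y)|\le\Delta(y)/2$ implies $\Delta(x)\ge\Delta(y)/4$. This follows from three facts:
\begin{itemize}
\item $\{w:|\eta(w)-\eta(x)|\le z\}\subset\{w:|\eta(w)-\eta(y)|\le z+|\eta(x)-\eta(y)|\}$;
\item $z\lambda(\{w:|\eta(w)-\eta(y)|\le z\})<\epsilon p(1-\eta(y))$ for $z<\Delta(y)$;
\item $1-\eta(x)\ge(1-\eta(y))/2$.
\end{itemize}
In the unresolved regime $|\eta(x)-\eta(y)|\lesssim\Delta_{(t_y)}$, this places the pair inside $x$'s tolerance, provided the elimination rule yields $\Delta_{(t_y)}\le\Delta(y)/C$ for a large enough constant $C$.

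The obstacle you flag is genuine, and your suggested repair does not work as stated. An eliminated point $y$ with $|\eta(y)-\eta(x)|$ below the scale $\Delta_{(t_y)}$ at which it was removed has \emph{not} been separated from $x$. Such points are precisely the misranked mass in your Step 3, so they cannot simply be dropped from the count that certifies $\Delta_{(t_x)}\lesssim\Delta(x)$. What controls them is the certificate issued when they were eliminated, at which time $x$'s cell was still active and was counted.

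Turning this into the single-scale bound of Step 3 needs an additional argument. One option is an induction over elimination times showing that the first elimination touching a given $\eta$-band (made while the whole band is active) controls all later eliminations in that band. That requires comparing the scales $\Delta_{(t)}$ at different elimination times. This is not automatic, since newly added points have wide intervals and $\Delta_{(t)}$ is not monotone.

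Do not expect to find this in the paper. Proposition \ref{prop:St} simply assumes $\{x:|\eta(i)-\eta(x)|\le\Delta(i)\}\subset S_t$, and Proposition \ref{prop:St2} invokes it without checking that hypothesis.
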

The proof of Lemma \ref{lem:pac} will follow from several sub lemmas and propositions. Firstly we bound the probability of a good event, on which all empirical means generated across the run time of the algorithm, are close to their true means. 
\begin{lemma}\label{lem:goodev}
We have that the event,
    $$\mathcal{E}= \bigcap_{t \in \mathbb{N}}\bigcap_{i \in [\cX_t]} \{\mu_i \in [\lcb(t,i),\ucb(t,i)]\} \cap \{p \in [\lcb(t,0),\ucb(t,0)]\;,$$
    occurs with probability greater than $1-\delta$. 
\end{lemma}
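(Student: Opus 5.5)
We need to show $\P(\cE^c)\le\delta$. The plan is a time-uniform KL concentration bound for each point, followed by a union bound organised by the dyadic discretisation levels at which points enter $\cX_t$, plus one more term for the estimate $\hp_t$.

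\textbf{Single point, single time.} Fix a point $i$ that is ever added to $\cX_t$, and a time $t$. By the Chernoff bound for Bernoulli means, applied on each side, if $N_t(i)=n$ then
$$\P\big(\eta(i)\notin[\lcb(i,t),\ucb(i,t)]\big)\le \P\big(n\,\kl(\hmu_i^t,\eta(i))>\beta(t,i,\delta)\big)\le 2\exp(-\beta(t,i,\delta)/c_0).$$
Since $N_t(i)$ is random, I would either take a union over $n\le t$, or use the self-normalised / peeling KL deviation bound of Garivier–Cappé type (as in Lemma 3.1 of \cite{cheshire2023active}). Either costs at most an extra factor $t$, or $\log t$, which is absorbed by choosing $c$ large. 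With $\beta(t,i,\delta)=c\log(t^2\hDelta_{i,t}^{-d/\beta}/\delta)$, each failure probability is at most $\delta\,\hDelta_{i,t}^{d/\beta}t^{-2}$, up to constants.

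\textbf{Union bound over points.} The number of points is unbounded, so I would stratify by grid level. Line \ref{alg:disc} only ever adds points of dyadic nets $\cE_d(2^{-n})$, so every point of $\cX_t$ lies in $\bigcup_n \cE_d(2^{-n})$, and $|\cE_d(2^{-n})|\le c_d 2^{nd}$. The key claim is that a point $i$ first added at level $n$ has, at all later times, $\hDelta_{i,t}\lesssim 2^{-n\beta}$. Indeed it was added only when $\Delta_{(s)}^{1/\beta}\le 2^{-n+1}$. It is then sampled only when it attains the maximum width, so its width stays comparable to $\Delta_{(s)}\le 2^{-(n-1)\beta}$; it starts unsampled with width $1$ but is immediately the argmax, and after it is sampled once its width falls. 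This gives $\hDelta_{i,t}^{d/\beta}\lesssim 2^{-nd}$. Summing over level $n$, point $i\in\cE_d(2^{-n})$, and time $t$ then yields
$$\sum_n c_d2^{nd}\cdot 2^{-nd}\,\delta\sum_t t^{-2}\cdot(\text{const}).$$
Because $2^{nd}\cdot 2^{-nd}=1$, this is not summable in $n$. I would fix it by noting that level $n$ is reached only at times $t\gtrsim 2^{2n\beta}$, since widths shrink at most like $t^{-1/2}$, which confines the $t$-sum to $t\ge 2^{2n\beta}$. Alternatively, one could put an extra $t$ in the log (i.e.\ a $t^3$) inside $\beta$. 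Either way the double sum becomes $\lesssim\delta/2$ once $c$ is large.

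\textbf{The $p$ term.} The estimate $\hp_t$ is a mean of i.i.d.\ $\Ber(p)$ samples, so the same time-uniform bound with the $t^2$ term gives failure probability at most $\delta/2$. A final union bound over these two parts gives $\P(\cE^c)\le\delta$.

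\textbf{Main obstacle.} The difficult step is the second one. The exploration rate depends on the random width $\hDelta_{i,t}$, which in turn depends on the data. So the deviation probability must be attached to a deterministic grid level rather than to the random width, and the stratification must rest on a width–level comparison (or on coarse lower bounds for $t$) that holds on every sample path.
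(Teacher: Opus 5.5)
\textbf{There is a genuine gap.} Your plan has the same architecture as the paper's proof. You stratify the unboundedly many points dyadically; you do it by the deterministic nets $\cE_d(2^{-n})$, which is cleaner than the paper's random blocks $2^i\le|\cX_t|\le 2^{i+1}$. You then let the $\hDelta_{i,t}^{-d/\beta}$ term of $\beta(t,i,\delta)$ pay for the number of points per block. The argument breaks at the width--grid comparison. Your claim that a point added at level $n$ keeps $\hDelta_{i,t}\lesssim 2^{-n\beta}$ at all later times is false, because $\Delta_{(t)}$ is not monotone:
when line \ref{alg:disc} adds the level-$n$ points, they have no samples and interval $[0,1]$, so $\Delta_{(t)}$ jumps back to $1$. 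After one sample the KL interval is still essentially $[0,1]$, since $\beta(t,i,\delta)\ge c\log(t^2/\delta)$. Because KL widths are at least of order $\min(1,\beta(t,i,\delta)/N_t(i))$, the width reaches $2^{-n\beta}$ only after order $2^{n\beta}$ samples. Throughout this burn-in, which is exactly when the new point is the argmax and is being sampled, $\hDelta_{i,t}^{-d/\beta}=O(1)$ buys nothing.

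The paper's proof has the same hole. It states $\hDelta_{i,t}^{d/\beta}\le 2^i$, which is trivially true. But the bound $2^{-ci}t^{-2c}\delta$ it then uses needs $\hDelta_{i,t}^{d/\beta}\le 2^{-i}$, which fails for freshly added points for the same reason. The comparison would be automatic if the rate used the creation level of $i$ rather than its current random width.

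\textbf{Your fixes do not close this.} The divergence $2^{nd}\cdot 2^{-nd}=1$ is an artefact of normalising the exponent. Chernoff gives $2e^{-\beta(t,i,\delta)}=2(\delta\hDelta_{i,t}^{d/\beta}t^{-2})^{c}$, and the paper keeps the power $c$; that is what makes its sum $\sum_i 2^{i+1}2^{-ci}$ geometric. Your time-threshold fix still multiplies by the false factor $2^{-nd}$, and its rate is off. KL widths can decay like $1/N$ (for $\hmu=0$, $\ucb=1-e^{-\beta(t,i,\delta)/N}$), so the correct threshold is $t\gtrsim 2^{n\beta}$, not $2^{2n\beta}$. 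Using only $\hDelta_{i,t}\le 1$, it yields $\sum_n 2^{nd}2^{-n\beta(2c-1)}$, which forces $c>(1+d/\beta)/2$ rather than an absolute constant. Replacing $t^2$ by $t^3$ changes the algorithm under analysis, and by itself does nothing for the sum over $n$.

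\textbf{A repair that needs no width--grid comparison.} Use only $\beta(t,i,\delta)\ge c\log(t^2/\delta)$; this also sidesteps the circular dependence of $\beta(t,i,\delta)$ on $\hDelta_{i,t}$. Index points by the order of their first sample. The $m$-th sample of the $k$-th point is taken at global time at least $k+m-1$. Its location is fixed by the history before its first sample, so its labels are conditionally i.i.d.\ Bernoulli. Unsampled points have interval $[0,1]$ and never fail. Chernoff and a union bound over $(k,m)$ give
\[
\sum_{k,m\ge 1}2\Big(\frac{\delta}{(k+m-1)^2}\Big)^{c}=2\delta^{c}\sum_{s\ge 1}s^{1-2c}.
\]
The $p$-term is the same sum without $k$. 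The total is at most $\delta$ for a suitable absolute $c$, once $\delta$ is below an absolute constant.
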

\begin{proof}
Let $T_{(i)}$ be the set of time points for which the set of points $\cX_t$ is of size between $2^i$ and $2^{i+1}$, that is,
$$T_{(i)} = \{t : 2^i\leq |\cX_t| \leq 2^{i+1}  \}\;,$$

let $\cX_{(i)}$ be the set of points, that are sampled on points of time in $T_{(i)}$, that is, 

$$\cX_{(i)} := \{ j : \exists t \in T_{(i)} : a_t = j\}\;.$$ 
We can rewrite event $\cE$ as, 
$$\bigcap_{i\in \mathbb{N}}\bigcap_{j\in \cX_{(i)} }\bigcap_{t\in\{ \mathbb{N}:\exists s \in T_{(i)}: N_j(s)=t\}} \{\mu_k \in [\lcb(t,i),\ucb(t,i)]\} \;.$$

The sets $\cX_{(i)}$ are of course themselves random variables, however as everything that follows holds for any possible realisation of the $\cX_{(i)}$s, we can treat them as deterministic. For all $i\in \cX_{(i)}, t \in T_{(i)}$, we have that,  $\hDelta_{i,t}^{d/\beta} \leq 2^i$, and thus, via Chernoff and the choice of exploration parameter, $\beta(t,i,\delta) = c\log(t^2\hDelta_{i,t}^{-d/\beta}/\delta)$, for $i\in \mathbb{N}$, $j\in \cX_{(i)}$, $t\in \{\mathbb{N}:\exists s \in T_{(i)}: N_j(s)=t\}$ we have that, $$\P(\mu_i \notin [\lcb(t,i),\ucb(t,i)]) \leq 2^{-ci}t^{-2c}\delta\;.$$
then, via a union bound
\begin{align}
\P(\cE^c) &\leq \sum_{i\in N}\sum_{j\in\cX_{(i)}}\sum_{t\in\{ \mathbb{N}:\exists s \in T_{(i)}: N_j(s)=t\}} 2^{-ci}t^{-2c}\delta\\
&\leq  \frac{\delta\pi^2}{6}\sum_{i\in N}\sum_{j\in\cX_{(i)}} 2^{-ci}\\
&\leq  \frac{\delta\pi^2}{6}\sum_{i\in N} 2^{i+1}2^{-ci}\\
&\leq  \frac{2\delta\pi^2}{6}\sum_{i\in N} 2^{-i(c-1)}\;,   
\end{align}
as $|\cX_{(i)}| \leq 2^{i+1}$, it remains to choose $c = 2 + \log_2(2\pi^2/6)$ and the result follows.

\end{proof}
Our algorithm \klcrank also maintains an estimate $\hp_t$ of the true proportion $p$ across its run time. The algorithm only eliminates points, when the confidence intervals around $\hp_t$ for $p$ are sufficiently tight, see line \ref{alg:p}. In the following proposition, we demonstrate that whenever the algorithm eliminates an arm at some time $t$, our estimate $\hp_t$ is sufficiently close to the true $p$.  
\begin{proposition}\label{prop:phat}
 On event $\cE$ we have that, for all times $t$ such that a point is removed from the active set, $2p /3\leq \hat p_t \leq 4p/3$.
\end{proposition}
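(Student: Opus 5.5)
The argument combines the two gating conditions that must hold at any elimination time with the fact that, on $\cE$, $p$ lies in its confidence interval. Fix a time $t$ at which some point is removed from $S_t$ (line \ref{alg:elim}). The elimination condition requires
\[
\Delta_{(t)}=\max_{i\in\cX_t\cap S_t}\hDelta_{i,t}\leq \hp_t/4 .
\]
I would then argue from the structure of the loop that at such a time $\hDelta_{0,t}\leq \Delta_{(t)}$. If $\hDelta_{0,t}\geq\Delta_{(t)}$ at the start of the round, the inner repeat loop draws uniform samples until $\hDelta_{0,t}\leq\max_{i}\hDelta_{i,t}=\Delta_{(t)}$. Otherwise $\hDelta_{0,t}<\Delta_{(t)}$ already holds. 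The only thing changing between that check and the elimination step is at most one sample at a point of $\cX_t\cap S_t$, which can change $\Delta_{(t)}$. This is the step I expect to need care. I would handle it either by observing that the check is made with the current quantities at the same time index, or by defining $\Delta_{(t)}$ at the time of the check and noting that the elimination test uses the same maximum. In either case, chaining gives $\hDelta_{0,t}\leq \hp_t/4$.

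On $\cE$ we have $p\in[\lcb(0,t),\ucb(0,t)]$, and $\hp_t$ lies in the same interval by construction of the KL bounds. Hence
\[
|\hp_t-p|\leq \ucb(0,t)-\lcb(0,t)=\hDelta_{0,t}\leq \hp_t/4 .
\]
Rearranging gives $\tfrac{3}{4}\hp_t\leq p\leq\tfrac54\hp_t$, which yields the two bounds:
\begin{itemize}
\item from $p\leq \tfrac54\hp_t$, we get $\hp_t\geq \tfrac45 p\geq \tfrac23 p$;
\item from $\tfrac34\hp_t\leq p$, we get $\hp_t\leq\tfrac43 p$.
\end{itemize}
Together these give the claimed $2p/3\leq\hp_t\leq 4p/3$. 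The only substantive obstacle is the bookkeeping in the first paragraph: justifying that the condition $\hDelta_{0,t}\le\Delta_{(t)}$ is in force at the moment of elimination, despite the interleaving of the $p$-sampling loop and the point-sampling step.
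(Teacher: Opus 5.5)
Your proof is correct and is essentially the paper's argument: both combine $|\hat p_t-p|\le\hDelta_{0,t}\le\Delta_{(t)}$ on $\cE$ with the gating condition $\Delta_{(t)}\le\hat p_t/4$. The paper routes the algebra through $\Delta_{(t)}\le p/3$ (using $\hat p_t\le p+\Delta_{(t)}$), whereas you rearrange directly in $\hat p_t$, which even gives the sharper lower bound $\hat p_t\ge 4p/5$. The paper uses $\hDelta_{0,t}\le\Delta_{(t)}$ at elimination time without comment, so your explicit bookkeeping is extra care rather than a deviation. Reading the check and the elimination test at the same round index, as the pseudocode's indexing does, is what makes this go through in the else-branch, where a sample is drawn between the check and the elimination.
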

\begin{proof}
If a section is removed from the active set at time $t$, we have that $\Delta_{(t)} \leq \hat p /4$. Now, on event $\cE$ we have $|p - \hat p| \leq \Delta_{( t)}$ which, in combination with the fact $\Delta_{(t)} \leq \hat p /4$, implies $\Delta_{(t)} \leq (\hp + \Delta_{(t)})/3$ and thus $\Delta_{(t)} \leq p/3$.

\end{proof}
In the following two Propositions, we show that on our favorable event $\cE$, the \klcrank only removes sections of the active set $S_t$, when our ranking on them is corrrect. 
\begin{proposition}\label{prop:St}
On event $\cE$, at time $t$, let $i \in \cX_t$ be such that $i \in \cQ_t$ and $\{x : |\eta(i) - \eta(x)| \leq \Delta(i)\} \subset S_t$. For all $x : |\eta(x) - \hmu_i^t| \leq 4\Delta_{(t)}$, we have that, $\forall y \in S_t : |\eta(y) - \eta(x)| \geq \Delta(x),$
$$ \mathrm{sign}(\heta(x) - \heta(y)) = \mathrm{sign}(\eta(x) - \eta(y))\;.$$
\end{proposition}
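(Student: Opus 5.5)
Throughout, work on the event $\cE$ of Lemma \ref{lem:goodev}. Let $x$ satisfy $|\eta(x)-\hmu_i^t|\le 4\Delta_{(t)}$ and let $y\in S_t$ with $|\eta(y)-\eta(x)|\ge\Delta(x)$. The plan is to show that, once the confidence widths at time $t$ are small compared with $\Delta(x)$, the empirical means of the grid points representing $x$ and $y$ are ordered as $\eta(x)$ and $\eta(y)$ are. Set $j_x=\argmin_{j\in\cX_t}\lVert j-x\rVert_d$ and $j_y=\argmin_{j\in\cX_t}\lVert j-y\rVert_d$. Since $\heta(x)=\hsigma(j_x)$ and $\heta(y)=\hsigma(j_y)$, it suffices to prove $\mathrm{sign}(\hmu_{j_x}^t-\hmu_{j_y}^t)=\mathrm{sign}(\eta(x)-\eta(y))$.

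\textbf{Step 1 (discretisation error).} By line \ref{alg:disc}, $\cX_t$ covers $S_t$ at level $2^{-\tilde n_t}\le\Delta_{(t)}^{1/\beta}$. Assumption \ref{ass:1} with $C=1$, applied coordinatewise (possibly losing a dimension constant absorbed into the factor $6$ of the elimination rule), then gives
\[
|\eta(y)-\eta(j_y)|\lesssim\Delta_{(t)}.
\]
The same bound holds for $x$ provided $x$ lies in the region still covered at the current level. I would obtain this from the hypothesis that $\{z:|\eta(i)-\eta(z)|\le\Delta(i)\}\subset S_t$, combined with $|\eta(x)-\eta(i)|\le 5\Delta_{(t)}\le\Delta(i)$; the latter follows once Step 3 shows $\Delta_{(t)}$ is small relative to $\Delta(i)$. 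On $\cE$, each of $j_x$ and $j_y$ has $|\hmu_j^t-\eta(j)|\le\hDelta_{j,t}\le\Delta_{(t)}$, since it is active. Hence
\[
|\hmu_{j_x}^t-\eta(x)|\le c_0\Delta_{(t)},\qquad |\hmu_{j_y}^t-\eta(y)|\le c_0\Delta_{(t)},
\]
for a small constant $c_0$ (essentially $2$).

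\textbf{Step 2 (reduction).} The desired sign agreement holds as soon as $\Delta(x)>2c_0\Delta_{(t)}$. So the statement reduces to the claim $\Delta(x)\ge c_1\Delta_{(t)}$ with $c_1>2c_0$.

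\textbf{Step 3 (lower bound on $\Delta(x)$ from $i\in\cQ_t$; the main obstacle).} Here the aim is to convert the empirical elimination condition into the population inequality defining $\Delta(x)$. Suppose for contradiction that $z:=\Delta(x)<c_1\Delta_{(t)}$. By definition of $\Delta(x)$ (together with the $\wedge(1-\eta(x))$ clause),
\[
z\,\lambda\bigl(\{w:|\eta(w)-\eta(x)|\le z\}\bigr)\ge\epsilon p\,(1-\eta(x))\quad\text{or}\quad z=1-\eta(x).
\]
Every $w$ in this set satisfies $|\eta(w)-\hmu_i^t|\le(4+c_1)\Delta_{(t)}$. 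Those $w$ lying in $S_t$ have nearest grid points $j_w$ with $|\hmu_{j_w}^t-\hmu_i^t|\le 6\Delta_{(t)}$, after choosing the constants so that $4+c_1+c_0\le 6$. Each Voronoi cell has volume $\lesssim\Delta_{(t)}^{d/\beta}$, which gives
\[
\lambda(\{\cdot\})\lesssim\Delta_{(t)}^{d/\beta}\,|U_{i,t}(6\Delta_{(t)})|.
\]
Substituting this into $i\in\cQ_t$, and using Proposition \ref{prop:phat} ($\hat p_t\le 4p/3$) and $1-\hmu_i^t\le 1-\eta(x)+5\Delta_{(t)}$, I expect a contradiction with $z<c_1\Delta_{(t)}$ up to constants. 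In the case $z=1-\eta(x)$, the contradiction comes from the factor $(1-\hmu_i^t)$ in $\cQ_t$.

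The delicate parts are two. First, points $w$ in the level set but outside $S_t$ do not count toward $U_{i,t}$. I would handle these using the hypothesis $\{|\eta(i)-\eta(\cdot)|\le\Delta(i)\}\subset S_t$, which guarantees that the relevant level set is entirely active. Second, the constants $4$, $6$ and the Hölder/dimension factors must close up consistently. I would fix $c_1$ first and tune these constants to fit.
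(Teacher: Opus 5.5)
Your plan follows the paper's route. On $\mathcal{E}$, the covering from line~\ref{alg:disc} reduces the sign claim to a lower bound $\Delta(x)\ge c_1\Delta_{(t)}$ with $c_1>2c_0$. That bound is then extracted from $i\in\mathcal{Q}_t$ via Proposition~\ref{prop:phat}, by comparing $\lambda(L_z)$, where $L_z=\{w:|\eta(w)-\eta(x)|\le z\}$, with $\Delta_{(t)}^{d/\beta}|U_{i,t}(6\Delta_{(t)})|$; the paper argues directly, you by contradiction. The step you defer as ``tuning the constants'' is exactly where the argument fails, and there is nothing to tune. The $4$ in the hypothesis, the window $6$, and the coefficient $1$ on $\Delta_{(t)}$ in $\mathcal{Q}_t$ are all fixed, and on $\mathcal{E}$ you cannot take $c_0<1$, because the confidence width alone contributes up to $\Delta_{(t)}$. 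This breaks Step~3 in three places:
\begin{enumerate}
\item Your requirements $c_1>2c_0$ and $4+c_1+c_0\le 6$ are incompatible. For $x$ with $|\eta(x)-\hat\mu_i^t|=4\Delta_{(t)}$, membership in $U_{i,t}(6\Delta_{(t)})$ controls only a level set of radius $(2-c_0)\Delta_{(t)}\le\Delta_{(t)}$ around $\eta(x)$, whereas you need radius $c_1\Delta_{(t)}>2\Delta_{(t)}$.
\item Even granting that inclusion, no contradiction arises. Combining $z\lambda(L_z)\ge\epsilon p(1-\eta(x))$ for $z<c_1\Delta_{(t)}$ with $\Delta_{(t)}^{1+d/\beta}|U_{i,t}(6\Delta_{(t)})|\le\frac43\epsilon p(1-\hat\mu_i^t)$ gives only $1-\eta(x)<\frac43 c_1K(1-\hat\mu_i^t)$, with $K$ the covering constant. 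That is no contradiction once $c_1\ge 1$, so the rule certifies at best $\Delta(x)\ge c\Delta_{(t)}$ with $c<1$.
\item In the branch $\Delta(x)=1-\eta(x)$, the factor $(1-\hat\mu_i^t)$ in $\mathcal{Q}_t$ gives only $\Delta_{(t)}\le 1-\hat\mu_i^t$. Hence $1-\eta(x)\ge-3\Delta_{(t)}$, which contradicts nothing.
\end{enumerate}
Likewise, using the hypothesis on $\{w:|\eta(i)-\eta(w)|\le\Delta(i)\}$ to keep $L_z$ inside $S_t$ requires $\Delta(i)\ge(c_1+5)\Delta_{(t)}$. Step~3 applied at $x=i$ would not supply that even if it worked.

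This is not mere bookkeeping. Grid points whose empirical means lie just above $\hat\mu_i^t+6\Delta_{(t)}$ are invisible to $\mathcal{Q}_t$. Yet on $\mathcal{E}$ they can have $\eta$ only about $\Delta_{(t)}$ above $\eta(x)=\hat\mu_i^t+4\Delta_{(t)}$. A large region of such points can therefore push $\Delta(x)$ below $2c_0\Delta_{(t)}$ while $i\in\mathcal{Q}_t$ still holds.

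The paper's proof has the same hole and passes over it:
\begin{itemize}
\item Its first display asserts $6\Delta_{(t)}\le\epsilon\hat p_t(1-\hat\mu_i^t)/(\Delta_{(t)}^{d/\beta}|U_{i,t}(6\Delta_{(t)})|)$, whereas $i\in\mathcal{Q}_t$ gives this only with $\Delta_{(t)}$ on the left.
\item It claims $U_{x,t}(4\Delta_{(t)})\subset U_{i,t}(6\Delta_{(t)})$, which would need $4+4\le 6$.
\item It derives $4\Delta_{(t)}\le\Delta(x)$ but then uses $\Delta(x)\ge 6\Delta_{(t)}$ in the final step.
\end{itemize}
So you are not missing an idea the paper has; with these constants neither argument closes. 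To make your Step~3 rigorous you must change the elimination rule, and the statement with it:
\begin{itemize}
\item Use a window $U_{i,t}(R\Delta_{(t)})$ with $R>r+c_1+c_0$, where $r$ is the radius in the hypothesis on $x$ (here $4$).
\item Put a factor $A$ of order $K(c_1+r)$ in front of $\Delta_{(t)}$ on the left of $\mathcal{Q}_t$, including the $\wedge 1$ branch.
\item Take these constants large enough that Step~3 also applies at $x=i$ with $c_1+r+1$ in place of $c_1$.
\end{itemize}
With those changes, your contradiction argument goes through essentially as you sketched it.
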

\begin{proof}

Let us fix an  $x \in S_t$ such that $x : |\eta(x) - \hmu_i^t| \leq 4\Delta_{(t)}$. The first step of the proof will be to show that $\Delta_{(t)} \leq \Delta(x)$.


Now as as $i \in \cQ_t$ at time $t$, we have that,

\begin{align}
    6\Delta_{(t)} &\leq \frac{\epsilon \hp_t (1 - \hmu_{i}^t)}{ \Delta_{(t)}^{d/\beta}|U_{i,t}(6\Delta_{(t)})| }\nonumber \\
    &\leq  \frac{4\epsilon p (1 - \hmu_{i}^t)}{3\Delta_{(t)}^{d/\beta}|U_{i,t}(6\Delta_{(t)})|}\nonumber  \\ 
    &\leq  \frac{4\epsilon p (1 - \eta(x)) + 5\epsilon\Delta_{(t)}}{3\Delta_{(t)}^{d/\beta}|U_{i,t}(6\Delta_{(t)})|}\;.\label{eq:Uboundx}
\end{align}
where the second line comes from Proposition \ref{prop:phat}. We then have that 
\begin{align}
    4\Delta_{(t)} &\leq  \frac{4\epsilon p (1 - \eta(x)) }{3\Delta_{(t)}^{d/\beta}|U_{i,t}(6\Delta_{(t)})|}\\
    &\leq  \frac{4\epsilon p (1 - \eta(x)) }{3\Delta_{(t)}^{d/\beta}|U_{x,t}(4\Delta_{(t)})|}\;.
\end{align}
where the second line comes from the fact that $U_{x,t}(4\Delta_{(t)}) \subset U_{i,t}(6\Delta_{(t)})$. We now note that, via line \ref{alg:disc} of \klcrank, $\Delta_{(t)}^{d/\beta}|U_{x,t}(4\Delta_{(t)}))| \geq \lambda(\{z : |\eta(x) - \eta(z)| \leq 4\Delta_{(t)}\})$ and so we have
$$ \Delta_{(t)} \leq  \frac{4\epsilon p (1 - \eta(x)) }{3\lambda(\{z : |\eta(x) - \eta(z)| \leq 4\Delta_{(t)}\})}$$
and thus $4\Delta_{(t)} \leq \Delta(x)$. Now let us fix a $y \in S_t$ and $j_y \in \cX_t \cap S_t$, such that, $|\eta(y) - \eta(x)| \geq \Delta(x)$ and let $j_y = \argmin_{j\in \cX_t\cap S_t}\left(\lVert y-j \rVert_d\right)$ similarly let $j_x = \argmin_{j\in \cX_t\cap S_t}\left(\lVert x-j \rVert_d\right)$. Showing that 
$$ \mathrm{sign}(\heta(x) - \heta(y)) = \mathrm{sign}(\eta(x) - \eta(y))\;,$$
amounts to demonstrating that
$$ \mathrm{sign}(\heta(j_x) - \heta(j_y)) = \mathrm{sign}(\eta(x) - \eta(y))\;.$$
We see that via line \ref{alg:disc} of \klcrank, $|\eta(j_x) - \eta(x)| \vee |\eta(j_y) - \eta(y)| \leq \Delta_{(t)}$ and then as we have shown $\Delta(x) \geq 6\Delta_{(t)}$, this then implies via the triangle inequality, $|\eta(j_x) - \eta(j_y)| \geq 5\Delta_{(t)}$ and then on event $\cE$,
$$\sign(\hmu_{j_y}^t - \hmu_{j_x}^t) = \sign(\eta(j_x) - \eta(j_y))\;,$$ 
which provides the result.

\end{proof}

\begin{proposition}\label{prop:St2}
On event $\cE$, at time $t$, let $ x \in S_t$, we have that, $\forall y \in [0,1]^d \setminus S_t: |\eta(x)- \eta(y)| \geq \Delta(x),$
$$\mathrm{sign}(\heta(x) - \heta(y)) = \mathrm{sign}(\eta(x) - \eta(y))\;.$$
\end{proposition}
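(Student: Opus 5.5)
We argue by induction on time, using the structure of the elimination history. Fix $x\in S_t$ and $y\notin S_t$ with $|\eta(x)-\eta(y)|\geq\Delta(x)$. Since $y\notin S_t$, there is a first time $s<t$ at which $y$ was removed from the active set. At that time some $i\in\cQ_s$ satisfied $y\in\{z:\argmin_{j\in\cX_s}\lVert j-z\rVert_d=i\}$, and $x\in S_s$ because $S_t\subset S_s$. The cell of $y$ is frozen from time $s$ on. No further samples are drawn from $i$, so $\hmu_i$ stays constant, and since new points are only added inside $S_t$, the nearest active-grid point to $y$ remains $i$ (or a point eliminated simultaneously with the same frozen estimate). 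Therefore $\heta(y)$ at the final time is determined by $\hmu_i^s$. Meanwhile $x$'s representative $j_x$ at time $t$ (and at output time) lies in $S_t$ and has a confidence interval containing $\eta(j_x)$ on $\cE$.

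The key step is a separation inequality: the frozen estimate $\hmu_i^s$ and the estimate of $x$'s representative must be on the correct sides. On $\cE$, $|\hmu_i^s-\eta(i)|\leq\hDelta_{i,s}\leq\Delta_{(s)}$, and by line~\ref{alg:disc} the discretisation error satisfies $|\eta(i)-\eta(y)|\leq\Delta_{(s)}$. Hence $|\hmu_i^s-\eta(y)|\leq2\Delta_{(s)}$. I split into two cases according to whether $|\eta(x)-\hmu_i^s|\leq4\Delta_{(s)}$.

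In the first case, Proposition~\ref{prop:St} applied at time $s$ with the eliminated point $i$ gives $4\Delta_{(s)}\leq\Delta(x)$. In the second case, $x$ is already far from $\hmu_i^s$ relative to $\Delta_{(s)}$, so no gap bound is needed. Either way we get $|\eta(x)-\eta(y)|\gtrsim4\Delta_{(s)}$, while $|\hmu_i^s-\eta(y)|\leq2\Delta_{(s)}$. For $x$'s side, $\Delta_{(t')}$ is nonincreasing in the relevant sense for points still active, since points that remain active keep shrinking intervals. Consequently, at any later time $t'\geq s$ we have $|\hmu_{j_x}^{t'}-\eta(x)|\leq2\Delta_{(t')}\leq2\Delta_{(s)}$ (using $\cE$ and the covering at level $\Delta_{(t')}^{1/\beta}$). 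The triangle inequality then yields $\sign(\hmu_{j_x}-\hmu_i^s)=\sign(\eta(x)-\eta(y))$, as in the last step of Proposition~\ref{prop:St}.

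The main obstacle is the monotonicity claim $\Delta_{(t')}\leq\Delta_{(s)}$ for $t'\geq s$. The algorithm samples the argmax of $\hDelta$, so the maximum width over the active set should not increase. However, the exploration term $\log(t^2\hDelta^{-d/\beta}/\delta)$ grows with $t$, and newly added points enter with wide intervals. To handle this, I would instead compare directly with $\hDelta_{j_x,t'}$. A newly added point $j_x$ is unsampled or wide, but the sampling rule always reduces the largest width first. By the time the output is produced or the inequality is needed, all active points have width at most the current maximum. I would argue that the maximum at elimination time $s$ already upper bounds widths of points relevant to $x$ at subsequent elimination times. If this fails, I would fall back on using the case-two threshold with $\Delta_{(t')}$ and re-invoke Proposition~\ref{prop:St} at the time $x$ itself is eliminated.
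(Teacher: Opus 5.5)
Your case split at the elimination time $s$ of $y$ (Proposition~\ref{prop:St} when $\eta(x)$ lies within $4\Delta_{(s)}$ of $\hmu_i^s$, a triangle inequality otherwise) is exactly the paper's inductive step. The paper performs it at the round where $y$ leaves $S$ and then carries it forward ``via the inductive assumption''. The gap is in that forward propagation.

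You assert $|\hmu_{j_x}^{t'}-\eta(x)|\le 2\Delta_{(t')}\le 2\Delta_{(s)}$ for all $t'\ge s$, and in your last paragraph you concede you cannot prove it. In fact it is false:
\begin{itemize}
\item Points added at line~\ref{alg:disc} enter unsampled, with confidence interval $[0,1]$.
\item The exploration term $c\log(t^2\hDelta_{i,t}^{-d/\beta}/\delta)$ grows with $t$.
\end{itemize}
So $\Delta_{(t)}$ is not monotone. At intermediate times the current representative of $x$ can be a fresh point whose estimate carries no information. The paper's induction hides the same issue: its hypothesis concerns the estimates at time $t-1$, whereas the conclusion at time $t$ involves $\hmu^t_{j_x}$, which may have moved, and $j_x$ may have been replaced by a finer grid point. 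Hence no bound uniform in $t'$ exists, and the claim can only be established for the final output.

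Your fallback is the right repair, but it has to be carried out. For the output, $x$'s estimate is frozen at its own elimination time $s'\ge s$. There $j_x\in\cQ_{s'}$ and, on $\cE$, $|\hmu^{s'}_{j_x}-\eta(x)|\le 2\Delta_{(s')}$. Two cases then arise.
\begin{itemize}
\item If $\Delta_{(s')}\le\Delta_{(s)}$, your two cases at time $s$ close the argument, up to the constant slack the paper also has. In the far case you must pivot on $\hmu_i^s$, since there you only know $|\eta(x)-\eta(y)|>2\Delta_{(s)}$, not $\gtrsim 4\Delta_{(s)}$ as you wrote.
\item If $\Delta_{(s')}>\Delta_{(s)}$, apply the first step of the proof of Proposition~\ref{prop:St} at time $s'$, with eliminated point $j_x$ and the point $x$ itself, to get $\Delta(x)\ge 4\Delta_{(s')}$. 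Then $|\eta(x)-\eta(y)|\ge 4\Delta_{(s')}>2\Delta_{(s')}+2\Delta_{(s)}$, so $\hmu^{s'}_{j_x}-\hmu^s_i$ has the sign of $\eta(x)-\eta(y)$.
\end{itemize}

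Separately, your claim that $y$'s nearest point of $\cX$ remains $i$ does not follow from new points lying in $S$. A grid point added later just across the boundary of $i$'s Voronoi cell can be closer to $y$ than $i$ is. In that case the output $\heta(y)$ is read off a point that is still being sampled. The paper is silent on this too, but your argument rests explicitly on this freezing. You therefore need either to read the output off the cells fixed at elimination time, or to treat such $y$ through their new, still-active representative.
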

\begin{proof}
The proof will follow by induction. Assume at time $t$ that $\forall x \in S_{t-1}$ we have that, $\forall y \in [S_t] \setminus S_{t-1} : |\eta(x) - \eta(y)| \geq \Delta(x)$, 
$$\mathrm{sign}(\heta(x) - \heta(y)) = \mathrm{sign}(\eta(x) - \eta(y))\;.$$
Now let $x \in S_t$, and $y\in [0,1]^d \setminus S_t: |\eta(x)- \eta(y)| \geq \Delta(x)$, we must show that, 
$$\mathrm{sign}(\heta(\tilde x) - \heta(y)) = \mathrm{sign}(\eta(x) - \eta(y))\;,$$
which amounts to demonstrating that $\sign(\eta(j_x) - \eta(j_y)) = \sign(\hmu_{j_y}^t - \hmu_{j_x}^t)$ where $j_x =\argmin_{j\in \cX_{t-1}\cap S_{t-1}}\left(\lVert x-j \rVert_d\right)$ and $j_y =\argmin_{j\in \cX_{t-1}\cap S_{t-1}}\left(\lVert y-j \rVert_d\right)$.

If $y \notin S_{t-1}$ we are done, via the inductive assumption, thus assume $y \in S_{t-1}$. If $x \in U_{j_y,t-1}(4\Delta_{(t)})$ the proof then follows via Proposition \ref{prop:St}, thus assume $x \notin U_{j_y,t-1}(4\Delta_{(t)})$. In this case, as on event $\cE$, 
$$|\eta(j_y) - \hmu_{j_y}^t|\leq \hDelta_{j_y,t} \leq \Delta_{(t)}\;,$$
via the triangle inequality,
\begin{equation}\label{eq:tri1}
    |\eta(j_y) - \eta(x)| \geq 2\Delta_{(t)}
\end{equation}
we also have via line \ref{alg:disc} of \klcrank, that 
\begin{equation}\label{eq:tri2}
|\eta(y) - \eta(j_y)| \vee |\eta(x) - \eta(j_x)| \leq \Delta_{(t)}\;,
\end{equation}
and thus by combination of Equations \eqref{eq:tri1} and \eqref{eq:tri2},
$$\sign(\eta(x)- \eta(y)) = \sign(\eta(j_x) - \eta(j_y))\;,$$ 
and then on event $\cE$ that, $\sign(\eta(j_x) - \eta(j_y)) = \sign(\hmu_{j_y}^t - \hmu_{j_x}^t)$ and the proof follows. .

\end{proof}
The following Lemma now follows as an immediate corollary of Proposition \ref{prop:St2}. 
\begin{lemma}\label{lem:welldist}
 Upon execution of \klcrank, on event $\cE$, for all $x \in [0,1]$, we have that, $\forall y : |\eta(x) - \eta(y)|\leq \Delta(x),$
$$\mathrm{sign}(\heta(x) - \heta(y)) = \mathrm{sign}(\eta(x) - \eta(y))\;.$$
\end{lemma}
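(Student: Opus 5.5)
The plan is to fix $x\in[0,1]^d$ and look at the round $t_x$ at which the cell $\{z:\argmin_{j\in\cX_t}\lVert j-z\rVert_d=j_x\}$ containing $x$ leaves the active set. Since \klcrank stops only when $S_t=\emptyset$, such a round exists. Here $j_x$ is the representative of $x$ in $\cX_{t_x}$, and at that round $j_x\in\cQ_{t_x}$. At the end of the run, $\heta(x)-\heta(y)$ has the sign of $\hmu_{j_x}-\hmu_{j_y}$, so everything reduces to comparing empirical means of representatives. By line \ref{alg:disc} and Assumption \ref{ass:1}, each representative satisfies $|\eta(j_z)-\eta(z)|\le\Delta_{(t)}$ at the relevant time. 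On $\cE$ (Lemma \ref{lem:goodev}), $|\hmu_j^t-\eta(j)|\le\hDelta_{j,t}\le\Delta_{(t)}$ for active $j$. Combining these, sign agreement holds for any pair with $|\eta(x)-\eta(y)|>4\Delta_{(t)}$.

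Next I take $y$ with $|\eta(x)-\eta(y)|\le\Delta(x)$ and split on where $y$ sits relative to $S_{t_x}$. If $y$ is still active at $t_x$, then $|\eta(y)-\hmu^{t_x}_{j_x}|\le\Delta(x)+2\Delta_{(t_x)}$. The first step of the proof of Proposition \ref{prop:St} gives $4\Delta_{(t_x)}\le\Delta(x)$, using membership in $\cQ_{t_x}$, Proposition \ref{prop:phat}, and the covering bound $\Delta_{(t)}^{d/\beta}|U|\ge\lambda(\cdot)$. So $j_y\in U_{j_x,t_x}(6\Delta_{(t_x)})$ after rescaling the constant, and I would apply Proposition \ref{prop:St} with the roles of $x$ and $y$ exchanged wherever the pair is separated by more than $4\Delta_{(t_x)}$. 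If instead $y$ was eliminated at an earlier round $t_y<t_x$, I would run the same argument at $t_y$ using $\Delta(y)$, and use Proposition \ref{prop:St2} (whose inductive proof carries the comparison forward) to keep the ordering stable once both cells are frozen. Frozen cells are never resampled, so their empirical means no longer change.

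The main obstacle is the pairs that are genuinely close, with $|\eta(x)-\eta(y)|$ below the final confidence width. For these, neither Proposition \ref{prop:St} nor Proposition \ref{prop:St2} controls the sign, and on $\cE$ one cannot in general certify the order; for instance, when $\eta(x)=\eta(y)$ but $j_x\neq j_y$, strict signs may disagree. My first attempt would be to show that the hypothesis on $y$, together with $4\Delta_{(t_x)}\le\Delta(x)$, forces the relevant separation to exceed $4\Delta_{(t)}$ at the last round at which both cells were active, so that the $\cE$ argument above applies. If that fails, I would read the claim in the sense in which it is used downstream, as in \cite{cheshire2023active}: it is only needed to hold outside a band of $\eta$-width $O(\Delta_{(t_x)})\le\Delta(x)$ around $x$. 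The misranked mass inside that band is then absorbed by the definition of $\Delta(x)$ when bounding $d_\infty(\heta,\eta)$.
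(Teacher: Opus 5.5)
You are right that the lemma cannot hold as printed. If $\eta(x)=\eta(y)$ but $j_x\neq j_y$, then $\heta(x)\neq\heta(y)$ because $\heta$ is built from a permutation. More generally, pairs with $|\eta(x)-\eta(y)|$ below the final confidence widths may be swapped on $\cE$. The inequality is a typo for $|\eta(x)-\eta(y)|\geq\Delta(x)$. That is the hypothesis of Propositions \ref{prop:St} and \ref{prop:St2}, from which the paper deduces the lemma. It is also the form needed in the proof of Lemma \ref{lem:pac}, where the orderings induced by $\eta$ and $\heta$ must agree outside the band $\{y:|\eta(y)-\eta(x_\alpha)|\leq\Delta(x_\alpha)\}$.

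The gap is that your proposal never commits to this corrected statement, so it proves neither version. Your second paragraph works on the wrong set of $y$, and its key step fails. From $|\eta(y)-\hmu^{t_x}_{j_x}|\leq\Delta(x)+2\Delta_{(t_x)}$ and the \emph{lower} bound $\Delta(x)\geq4\Delta_{(t_x)}$ you cannot place $j_y$ in $U_{j_x,t_x}(6\Delta_{(t_x)})$ by ``rescaling the constant'', since $\Delta(x)$ may be arbitrarily larger than $\Delta_{(t_x)}$. The ``first attempt'' in your last paragraph also cannot work, because an upper bound on $|\eta(x)-\eta(y)|$ never forces a lower bound on the separation.

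Your fallback, agreement outside an $\eta$-band of width $O(\Delta_{(t_x)})$, is not guaranteed either. If the cell of $y$ was frozen at an earlier round $t_y<t_x$, then $\hmu_{j_y}$ is only accurate to order $\Delta_{(t_y)}$, which can be much larger than $\Delta_{(t_x)}$. What is missing is the first step of Proposition \ref{prop:St} used at full strength. Combining $i\in\cQ_t$ with Proposition \ref{prop:phat}, the inclusion $U_{z,t}(4\Delta_{(t)})\subset U_{i,t}(6\Delta_{(t)})$ and the covering of line \ref{alg:disc} gives $4\Delta_{(t)}\leq\Delta(z)$ for \emph{every} $z$ with $|\eta(z)-\hmu_i^t|\leq4\Delta_{(t)}$, not only for $z$ in the cell of $i$.

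The corrected lemma then follows from a dichotomy at the first round $t$ at which the cell of $j_x$ or of $j_y$ is removed; say it is the cell of $i\in\cQ_t$.
\begin{itemize}
\item If $|\eta(x)-\hmu_i^t|\leq4\Delta_{(t)}$ (automatic when $i=j_x$), then $4\Delta_{(t)}\leq\Delta(x)\leq|\eta(x)-\eta(y)|$, and your separation argument applies up to constants.
\item Otherwise $i=j_y$. On $\cE$ and by the covering, $\hmu_{j_x}$ stays within $2\Delta_{(t)}$ of $\eta(x)$, and $\eta(y)$ lies within $2\Delta_{(t)}$ of the frozen $\hmu^t_{j_y}$. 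Hence both $\hmu_{j_x}-\hmu^t_{j_y}$ and $\eta(x)-\eta(y)$ have the sign of $\eta(x)-\hmu^t_{j_y}$.
\end{itemize}
This is exactly the split on whether $x\in U_{j_y,t-1}(4\Delta_{(t)})$ in the proof of Proposition \ref{prop:St2}, with Proposition \ref{prop:St} handling the first branch.

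Your suggestion to rerun the argument at $t_y$ with $\Delta(y)$ controls $\Delta_{(t_y)}$ through the wrong gap, since the hypothesis only gives $|\eta(x)-\eta(y)|\geq\Delta(x)$. You would need an additional comparison of $\Delta(x)$ with $\Delta(y)$, which the proposal does not supply.
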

With the above Lemma in hand, it remains to show that our choice of $\Delta(x)$ is appropriate, i.e. we must demonstrate that with the guarantee of Lemma \ref{lem:welldist}, it is implied that $d_\infty(\heta,\eta)\leq \epsilon$. This result will follow  as in the proof of Lemma A.3, \cite{cheshire2023active}. For completeness we include it here, slightly adapted to or setting.
\begin{proof}[Proof of Lemma \ref{lem:pac}]

We reuse the notation of \cite{cheshire2023active}, where for a set $C \subset [0,1]^d$, define, 
$$\kappa(C) := \frac{1}{\lambda(C)}\int_C \eta(x) \;,dx \;.$$
 Let $\alpha \in [0,1]$, define the subset $Z_\alpha \subset [0,1]^d$ such that, $\P(X \in Z_\alpha | Y = -1) = \alpha$ , that is,
$$\frac{\lambda(Z_\alpha)(1 - \kappa(Z_\alpha))}{1-p} =  \alpha\;,$$ 
and such that, for some $x_\alpha \in [0,1]$,
\begin{equation}\label{eq:i}
    \forall y : \eta(y) > \eta(x_\alpha), y \in Z_\alpha\;,\qquad \forall j :  \eta(y) < \eta(x_\alpha), y \notin Z_\alpha\;.
\end{equation}
We then have $\roc^*(\alpha) = \P(X \in Z_\alpha | Y =
 +1) = \frac{\lambda(Z_\alpha)(\kappa(Z_\alpha))}{p}$. The choice of $Z_\alpha$ is not necessarily unique, and as $\eta$ me be constant across sections of the feature space, likewise $x_\alpha$ is also not necessarily unique, in this case we take arbitrary $Z_\alpha$, $x_\alpha$. Now define the subset $\hat Z_\alpha \in [0,1]$ such that, 
$$\frac{\lambda(\hat Z_\alpha)(1 - \kappa(\hat Z_\alpha))}{1-p} =  \alpha\;,$$
and,
$$\forall x \in \hat Z_\alpha, y \notin \hat Z_\alpha, \heta(x) \geq  \heta(y)\;,$$
so $\roc(\heta,\alpha) = \frac{\lambda(\hat Z_\alpha)(\kappa(\hat Z_\alpha))}{p}$. Again $\hat Z_\alpha$ is not necessarily unique, in which case we choose arbitrarily. 
Via Lemma \ref{lem:welldist}, we have that, $\forall y \in [0,1] : |\eta(y)- \eta(x_\alpha)| \leq \Delta(x_\alpha)$,
\begin{equation}\label{eq:allk}
\mathrm{sign}(\heta(y) - \heta(x_\alpha)) = \mathrm{sign}(\eta(y)- \eta(x_\alpha))\;.
\end{equation}

Let 
$$Z_\alpha ' = \{y \in Z_\alpha : |\eta(y) - \eta(x_\alpha)| \leq \Delta(x)\}\;,\qquad \hat Z_\alpha ' = \{y \in \hat Z_\alpha : |\eta(y) - \eta(x_\alpha)|  \leq \Delta(x)\}\;.$$ 
 Via Equation \ref{eq:allk}, we have that, 

 $$\roc(\alpha, \eta) - \roc(\alpha, s_{\hat \cP})  = \frac{\lambda(Z_\alpha ')\kappa(Z_\alpha ')}{p} - \frac{\lambda(\hat Z_\alpha ')\kappa(\hat Z_\alpha ')}{p}\;.$$
Before finalising the proof we must lower bound $\lambda(\hat Z_\alpha ')$ and $\kappa(\hat Z_\alpha ')$. We first lower bound $\kappa(\hat Z_\alpha ')$.

\begin{equation}\label{eq:tec6}
    \kappa(\hat Z_\alpha ')\geq \eta(x_\alpha) - \Delta_{i_\alpha}\;, \qquad \kappa(Z_\alpha '), \leq \eta(x_\alpha) + \Delta_{i_\alpha}\;.
\end{equation}

We will now lower bound $\lambda(\hat Z_\alpha ')$
\begin{equation}\label{eq:tec7}
   \frac{\lambda(\hat Z_\alpha ')}{\lambda(Z_\alpha ')} = \frac{1-\kappa(Z_\alpha ')}{1-\kappa( \hat Z_\alpha ')}\leq \frac{1 - \eta(x_\alpha) + \Delta_{i_\alpha}}{1 - \eta(x_\alpha) - \Delta_{i_\alpha}}\;. 
\end{equation}

Via combinations of Equations \eqref{eq:tec6} and \eqref{eq:tec7}, we have, 
\begin{align}
\frac{\lambda(Z_\alpha ')\kappa(Z_\alpha ')}{p} - \frac{\lambda(\hat Z_\alpha ')\kappa(\hat Z_\alpha ')}{p}&\leq \frac{1}{p}\left(\lambda(Z_\alpha ')(\eta(x_\alpha)  + \Delta(x_\alpha)) - \lambda(\hat Z_\alpha ')(\eta(x_\alpha) - \Delta(x_\alpha) )\right)\\
&\leq \frac{1}{p}\left(\lambda(Z_\alpha ')(\eta(x_\alpha)  + \Delta(x_\alpha) ) - \lambda(Z_\alpha ')\frac{(\eta(x_\alpha)  - \Delta(x_\alpha) )(1-\eta(x_\alpha)  + \Delta(x_\alpha) ))}{1 -\eta(x_\alpha) - \Delta(x_\alpha)}\right)\\
&\leq \frac{2\lambda(Z_\alpha ')\Delta(x_\alpha) }{p(1-\eta(x_\alpha)  - \Delta(x_\alpha) )} \leq \frac{2\lambda(Z_\alpha ')\Delta(x_\alpha) }{p(1-\eta(x_\alpha) )} 
\end{align}
It remains to remark that, $\Delta(x_\alpha) \leq \frac{\epsilon p}{\lambda(Z_\alpha ')}$, by definition, and thus, 
$$\mathrm{ROC}\left(\alpha,\eta\right) - \mathrm{ROC}\left(\alpha,s_{\hat \eta}\right)\leq \epsilon\;.$$
As we chose $\alpha$ w.l.o.g the proof then follows.

\end{proof}
\section{Proof of expected sampling time for \klcrank}\label{app:exp}

In this section we will upper bound the expected sampling time of \klcrank, as stated in the following Lemma. 
\begin{lemma}\label{lem:exp}
For $\epsilon, \delta > 0$, with $\beta(t,i,\delta) = c\log(t^2\hDelta_{i,t}^{-d/\beta}/\delta)$ where $c$ is an absolute constant, on all problems $\nu \in \cB$,  the expected sampling time of \klcrank is upper bounded by, 
$$c '\int_{x\in [0,1]^d} H(x) \log\left(c''  H(x)/\delta\right)\;dx\;,$$
 where $c '$, $c''$ are absolute constants. 
\end{lemma}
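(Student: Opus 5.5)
We follow the localisation strategy outlined in the sketch of Theorem \ref{thm:packlcrank}. The first step is to bound the expected number of samples spent on a fixed subset $W\subset[0,1]^d$ on which $\Delta$ and $H$ vary by at most a constant factor. The second step is to sum over a dyadic partition of $[0,1]^d$ by the level sets $G_{n,k}$. A separate, easier term accounts for the samples used to estimate $\hp_t$ at line \ref{alg:p}. By the stopping rule of that loop, $\hDelta_{0,t}$ is only refined while it exceeds $\Delta_{(t)}$. The number of uniform samples is therefore of order $\log(\cdot/\delta)/\kl(p,p+\Delta_{(T)})$ at the final time $T$, and we expect this to be dominated by the main integral, since $\Delta_{(T)}\gtrsim \min_x\Delta(x)$ on the good event.

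For fixed $W$, define $t_i$, $T_i$ and $\xi_{W,i}$ as in the sketch, and work on $\xi_{W,i}$. First we bound $\P(\xi_{W,i}^c)\le c2^{-i}t_i^{-3}$ (Proposition \ref{prop:probxi}). At time $T_i$ the point in $W$ that has been sampled $t_i$ times was the argmax of $\hDelta_{\cdot,t}$. Since sampling the maximal width means every active point has a confidence width at most comparable to it, all active points have been sampled enough to be within $\Delta_W/120$ of their means, except with the stated probability. We obtain this from a Chernoff bound together with a union bound over the at most $(2^{-i}\Delta_W)^{-d/\beta}$ active points; the factor $\Delta_W^{-d/\beta}$ inside the logarithm defining $t_i$ pays for this union bound. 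Next we show $S_{T_i}\cap W=\emptyset$ on $\xi_{W,i}$ (Proposition \ref{prop:xi}). Accurate means give $|U_{j,t}(6\Delta_{(t)})|\,\Delta_{(t)}^{d/\beta}\lesssim\lambda(\{y:|\eta(y)-\eta(j)|\le 7\Delta_{(t)}\})$, using the covering of level $\Delta_{(t)}^{1/\beta}$ and Hölder smoothness. Combining this with Proposition \ref{prop:phat} and the definition of $\Delta(j)$, the rule \eqref{eq:elim} is satisfied once $\Delta_{(t)}\le c\Delta_W$.

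We then show $\Delta_{(T_i)}\gtrsim 2^{-i}\Delta_W$ (Proposition \ref{prop:sizeD}). This follows because the maximal-width sampling rule never pushes a width much below the current maximum. It yields $|\cX_{T_i}\cap W|\lesssim\lambda(W)(2^{-i}\Delta_W)^{-d/\beta}$, with an enlargement of $W$ to absorb boundary cells. Hence on $\xi_{W,i}\setminus\xi_{W,i-1}$ we get $N(W)\le\lambda(W)(2^{-i}\Delta_W)^{-d/\beta}t_i$. Summing over $i$ gives
$$\E[N(W)]\le\sum_i\lambda(W)(2^{-i}\Delta_W)^{-d/\beta}t_i\,c2^{-i+1}t_{i-1}^{-3}.$$
Because $t_i$ grows like $4^i/\kl$ up to logarithmic factors, the series converges to $\lambda(W)\,t_0\,\Delta_W^{-d/\beta}$ up to constants, which is \eqref{eq:10} (Proposition \ref{prop:Wbound}). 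Here we use the KL comparison $\kl(\eta,\eta+\Delta/120)\gtrsim\kl(\eta-\Delta,\eta+\Delta)$, valid up to constants for Bernoulli divergences. This gives $t_0\lesssim H(x)\Delta(x)^{d/\beta}\log(H(x)/\delta)$.

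Finally we apply this bound with $W=G_{n,k}$ and sum over $n,k$. On $G_{n,k}$ we have $\max_x H(x)\le 2\min_x H(x)$ and the analogous relation for $\Delta$, so each term is at most a constant times $\int_{G_{n,k}}H(x)\log(c''H(x)/\delta)\,dx$, and the sum reassembles the integral in Theorem \ref{thm:packlcrank} (Proposition \ref{prop:fin}). We expect the main obstacle to be Proposition \ref{prop:xi} together with \ref{prop:sizeD}. The elimination rule involves the global quantity $\Delta_{(t)}$ and the counts $|U_{i,t}|$, which depend on points outside $W$ and on the adaptive grid. We must show that points in $W$ are not held back by slower regions. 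We would try to handle this by noting that if $\Delta_{(t)}$ stays large because of other regions, then the sampling goes to those regions rather than $W$, so the samples charged to $W$ remain controlled by $t_i$.
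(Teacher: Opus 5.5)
Your architecture is the paper's: localise to $W$, define $t_i$, $T_i$ and $\xi_{W,i}$, bound $\mathbb{P}(\xi_{W,i}^c)$, eliminate $W$ on the good event, lower-bound $\Delta_{(T_i)}$ to count grid points, use $t_{i+1}\le 4t_i$, and sum over the level sets $G_{n,k}$.

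\textbf{Main gap: the final summation.} The number of grid points placed in $W$ is controlled by $\lambda(\bar W)$, not by $\lambda(W)$. Here $\bar W$ is the union of grid cells of side $\asymp\Delta_W^{1/\beta}$ that meet $W$. You mention ``an enlargement of $W$'', but you then write $\lambda(W)$. You conclude that each term is at most a constant times $\int_{G_{n,k}}H(x)\log(c''H(x)/\delta)\,dx$, because $H$ and $\Delta$ vary by a factor $2$ on $G_{n,k}$. That does not follow. $G_{n,k}$ is a dyadic level set of $(\Delta,H)$ and can be fragmented at scales far below $\Delta_W^{1/\beta}$ (e.g.\ where $\Delta$ oscillates around $2^{-n}$). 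So $\lambda(\bar G_{n,k})$ can be much larger than $\lambda(G_{n,k})$, and the enlarged sets overlap.

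The missing ingredient is stability of the gap at the H\"older scale; the paper supplies it in Propositions~\ref{prop:lD}, \ref{prop:hD}, \ref{prop:bH} and~\ref{prop:fin}. If $x$ is within distance $(c\Delta_W)^{1/\beta}$ of some $z\in W$, then $|\eta(x)-\eta(z)|\le c\Delta_W$. Comparing $\{y:|\eta(y)-\eta(x)|\le w\}$ with $\{y:|\eta(y)-\eta(z)|\le w\pm c\Delta_W\}$ in the definition of $\Delta$ gives $\Delta(x)\asymp\Delta(z)$. A comparison of Bernoulli divergences under this shift of the mean gives $H(x)\asymp H(z)$. Hence $\bar G_{n,k}$ lies in the union of the neighbouring $G_{n',k'}$ with $|n'-n|,|k'-k|\le 1$. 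Bounded overlap then gives $\sum_{n,k}\lambda(\bar G_{n,k})v_{n,k}\lesssim\sum_{n,k}\lambda(G_{n,k})v_{n,k}$ for the per-set bounds $v_{n,k}$. You flag Propositions~\ref{prop:xi} and~\ref{prop:sizeD} as the main obstacle, but this is the step your sketch actually leaves open.

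\textbf{The series does not converge as written.} Once the point count is written correctly as $(2^{-i}\Delta_W)^{-d/\beta}$, the tail bound $c2^{-i}t_i^{-3}$ is too weak. Since $t_{i-1}\le t_i$ and $t_{i-1}\le 4^{i-1}t_0$, the $i$-th term is at least $2^{i(d/\beta-5)}$ times a factor independent of $i$. So the series diverges once $d/\beta\ge 5$. It diverges earlier when $t_i$ only doubles, which happens where the Bernoulli divergence is linear in the gap, so ``$t_i$ grows like $4^i$'' does not rescue it. Keep the real Chernoff exponent instead. By convexity of $q\mapsto\mathrm{kl}(\eta,q)$, $\mathrm{kl}(\eta,\eta+\Delta_W/120)\ge 2^{i}\,\mathrm{kl}(\eta,\eta+2^{-i}\Delta_W/120)$. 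So a deviation of $\Delta_W/120$ after $t_i$ samples has probability at most $(t_i^2\Delta_W^{-d/\beta}/\delta)^{-c2^{i}}$, which absorbs the $2^{id/\beta}$ growth.

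\textbf{The KL comparison is not uniform.} The inequality $\mathrm{kl}(\eta,\eta+\Delta/120)\gtrsim\mathrm{kl}(\eta-\Delta,\eta+\Delta)$ fails near the upper boundary. The right-hand side blows up as $\eta(x)+\Delta(x)\to 1$, which the cap $\Delta(x)\le 1-\eta(x)$ permits. So your bound on $t_0$ in terms of $H(x)$ fails there and needs separate treatment. The paper's own bound on $t_0$ passes over the same point.
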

The proof of the above Lemma \ref{lem:exp} will follow from several sub lemmas and propositions. For the entirety of this section, $c,c',c''$ are absolute constants which vary line by line. Also for clarity, we restrict to the 1 dimensional case, however, \textit{all results generalise immediately to higher dimensions}. To upper bound the expected number of samples the learner draws on the entire feature space, we will first demonstrate an upper bound on a subset of the feature space. Consider a subset $W\subset [0,1]$. Let $\Delta_{W} := \min_W(\Delta(x))$ be the minimum gap across the subset $W$. For our given subset $W$, we will now define a deterministic sequence of integers $t_0,t_1,...$, here for clarity, we suppress dependency on $W$ in the notation. For $i\in \mathbb{N}$, define,

$$t_i := \argmin\left\{s : \frac{\log(s^2\Delta_W^{-1/\beta}/\delta)}{s} \leq \max_{x \in W}\kl\left(\eta(x),\eta(x) + 2^{-i}\Delta_{W}/120\right)\right\}\;.$$

We will show that, once points in $W$ have been sampled $t_0$ times, they will be eliminated from the active set with constant probability, this probability will converge to 1, as points are sampled $t_1,t_2...$ etc. times. As our notations and the actions of the algorithm are indexed by a global time $t$, we define $T_i$ as the global time at which a point in $W$ is first sampled $t_i$ times, that is for $i\in \mathbb{N}$, 

$$T_i := \min\left(s: \exists j \in \cX_s, N_{j}(s) \geq t_{i}\right)\:.$$
We are now ready to define our "good event" $\xi_{W,i}$. Essentially on this event, at time $T_i$, when a point in $W$ is first sampled $t_i$ times, the empirical means of all active points, $\cX_{T_i}\cap S_t$,  will be within a distance $\Delta_W$ multiplied by a small constant, to their true means, specifically, 
$$
\xi_{W,i} := \{\forall j \in \cX_{T_i}\cap S_{T_i}, |\hmu_{j}^{T_i} - \eta(j)| \leq \Delta_{W}/120  \} \cap \{|\hp_{T_i} - p| \leq p/2\}\;.
$$
We will go on to show that on the good event $\xi_{W,i}$, all points within $W$ will be eliminated from the active set, see Proposition \ref{prop:xi}. Before doing so, we first upper bound the probability of $\xi_{W,i}^c$ in the following proposition. 

\begin{proposition}\label{prop:probxi}
For $W\subset [0,1]$, $i\in \mathbb{N}$, we have that,

$$\P(\xi_{W,i}^c) \leq c2^{-i}t^{-3}_i\;.$$
where $c>0$ is an absolute constant. 
\end{proposition}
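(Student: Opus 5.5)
We bound $\P(\xi_{W,i}^c)$ with a union bound over the two components of the event: the deviation of each active empirical mean $\hmu_j^{T_i}$ from $\eta(j)$, and the deviation of $\hp_{T_i}$ from $p$.

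\textbf{Point-wise concentration.} The key observation is that the algorithm always samples $\argmax_{i\in\cX_t\cap S_t}\hDelta_{i,t}$. So at time $T_i$, when some point of $W$ first reaches $t_i$ samples, every other active point has a confidence interval no wider than that of the point currently sampled. The intended consequence is that every $j\in\cX_{T_i}\cap S_{T_i}$ has been sampled about $t_i$ times, or at least has a KL-width no larger than what $t_i$ samples give at the worst point of $W$.

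To make this rigorous I would introduce a deterministic count $n$ and use the Chernoff (KL) bound for a Bernoulli empirical mean after $n$ samples:
$$\P\bigl(|\hmu_{j,n}-\eta(j)|>\Delta_W/120\bigr)\le 2\exp\bigl(-n\,\kl(\eta(j),\eta(j)\pm\Delta_W/120)\bigr).$$
By the definition of $t_i$ we have $t_i\,\kl(\eta(x),\eta(x)+2^{-i}\Delta_W/120)\ge\log(t_i^2\Delta_W^{-1/\beta}/\delta)$, with the max over $x\in W$. Monotonicity of $\kl$ in the gap then gives the factor $2^{-i}$ back with room to spare. This is because $\kl(\eta,\eta+\Delta_W/120)\gtrsim 2^{i}\kl(\eta,\eta+2^{-i}\Delta_W/120)$, by convexity of $\kl$ in its second argument together with $\kl(\eta,\eta)=0$. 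The result is a per-point failure probability of order $(\delta\Delta_W^{1/\beta}t_i^{-2})^{2^i c}$. Enlarging the constant $c$ in the exploration rate lets me absorb the power, giving at most $2^{-i}t_i^{-3}\Delta_W^{1/\beta}$ per point.

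\textbf{Union bound over points and over the random time.} Next I would take a union bound over the active points. Their number at time $T_i$ is controlled by the discretisation in line \ref{alg:disc}: it is at most of order $\Delta_{(T_i)}^{-1/\beta}$. While points of $W$ are still undecided, $\Delta_{(T_i)}$ is at least of order $\Delta_W$ times a power of $2^{-i}$. The factor $\Delta_W^{1/\beta}$ in the per-point bound cancels this count, up to constants. The factor $2^{-i}$ that is lost here is absorbed by the same extra slack in the exponent.

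Because $T_i$ is random, I would also union-bound over the possible sample counts $n\ge t_i$ at which a point could be evaluated. The $t^{-2}$-type decay inside the exploration rate makes $\sum_{n\ge t_i} n^{-4}\lesssim t_i^{-3}$, which is where the $t_i^{-3}$ comes from.

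\textbf{Estimate of $p$.} For the second event, line \ref{alg:p} of \klcrank keeps $\hDelta_{0,t}\le\Delta_{(t)}$. So $\hp_{T_i}$ has been built from at least as many effective samples as the active points. The same Chernoff argument with deviation $p/2$ then gives a bound no worse than the point-wise one. This uses that deviation $p/2$ is coarser than $\Delta_W/120$, since $\Delta_W\lesssim\epsilon p$ by the definition of $\Delta(x)$.

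\textbf{Main obstacle.} The delicate step is the first one: justifying that at time $T_i$ every active point, not only the one in $W$, has a tight enough interval. I would argue via the argmax sampling rule that $\hDelta_{j,T_i}\le\hDelta_{j^\ast,T_i}$ for the sampled point $j^\ast\in W$, then invert the KL-width to compare sample counts. Since widths depend on $\eta(j)$, I would fall back on the Pinsker-free comparison: a width of at most $w$ implies concentration at level $w$ with the stated probability, uniformly in the mean.
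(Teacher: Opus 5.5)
Your outline is the paper's argument almost step for step. It has the same four ingredients:
\begin{itemize}
\item a Chernoff bound at the sampled point $j=a_{T_i}$, which has exactly $t_i$ samples, with slack in the exponent from the definition of $t_i$ and convexity of $\mathrm{kl}$;
\item transfer to the other active points via $\hat\Delta_{k,T_i}\le\hat\Delta_{j,T_i}$;
\item a union bound over $\mathcal{X}_{T_i}\cap S_{T_i}$ sized by the discretisation;
\item the same comparison for $\hat p_{T_i}$.
\end{itemize}

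\textbf{The transfer step.} The step you flag as the main obstacle is a genuine gap, and your fallback does not close it.
\begin{itemize}
\item \emph{Width alone says nothing.} The width of a KL interval is a function of the data, and it is small precisely when the empirical mean is extreme. Writing $b$ for the exploration level, the width is about $b/N_{T_i}(k)$ at $\hat\mu_k^{T_i}=0$ but about $\sqrt{b/N_{T_i}(k)}$ at $\hat\mu_k^{T_i}=1/2$. So for a point with $\eta(k)=1/2$, a narrow interval essentially signals a large deviation.
\item \emph{What the argmax rule actually gives.} The inequality $\hat\Delta_{k,T_i}\le\hat\Delta_{j,T_i}$ yields $|\hat\mu_k^{T_i}-\eta(k)|\le\hat\Delta_{j,T_i}$ only on the event that the interval of $k$ covers $\eta(k)$. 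The probability of that event is governed by the exploration rate of $k$ at its random sample count. It is not governed by the $2^i$-boosted Chernoff computation at $j$.
\item \emph{The paper has the same gap.} Its proof makes the same unjustified move when it writes $\mathbb{P}(\mathrm{LCB}(T_i,k)\le\eta(k)-\Delta_W/120)\le\mathbb{P}(\mathrm{LCB}(T_i,j)\le\eta(j)-\Delta_W/120)$.
\end{itemize}

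To close the gap you need two separate facts. First, that $\hat\Delta_{j,T_i}\le\Delta_W/120$ once $j$ has $t_i$ samples and $\hat\mu_j^{T_i}$ is near $\eta(j)$. Second, that $\mathbb{P}\bigl(\exists t\ge t_i,\ \exists k\in\mathcal{X}_t\cap S_t:\ \eta(k)\notin[\mathrm{LCB}(t,k),\mathrm{UCB}(t,k)]\bigr)\lesssim 2^{-i}t_i^{-3}$.

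The second can be shown along the lines of the proof of Lemma~\ref{lem:goodev}, summing only over global times $t\ge t_i$ (note $T_i\ge t_i$) and taking the exploration constant large enough. This is where enlarging that constant helps; it has no effect on your Chernoff bound at $j$, because $t_i$ does not involve it. It is also where the factor $t_i^{-3}$ must come from. Your sum over sample counts $n\ge t_i$ does not cover the other active points, which may have far fewer than $t_i$ samples at time $T_i$.

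\textbf{Further unproved steps.}
\begin{itemize}
\item \emph{Lower bound on $\Delta_{(T_i)}$.} You take $\Delta_{(T_i)}$ to be at least $\Delta_W$ times a power of $2^{-i}$ ``while points of $W$ are still undecided''. In the paper this is Proposition~\ref{prop:sizeD}, which is proved only on $\xi_{W,i}$. Invoking it to bound $\mathbb{P}(\xi_{W,i}^c)$ is circular; the paper's proof simply conditions on it. Moreover, points are never removed from $\mathcal{X}_t$. The number of active points at $T_i$ is therefore governed by the finest grid added so far, i.e.\ by $\min_{s\le T_i}\Delta_{(s)}$, not by $\Delta_{(T_i)}$.
\item \emph{The max in the definition of $t_i$.} Because $t_i$ is defined with $\max_{x\in W}$, the inequality $t_i\,\mathrm{kl}(\eta(x),\eta(x)+2^{-i}\Delta_W/120)\ge\log(t_i^2\Delta_W^{-1/\beta}/\delta)$ holds only at the maximiser. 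Your per-point Chernoff step at $j$, and the first fact above, need a $\min$ over $W$ there.
\item \emph{The $\hat p$ part.} You use $\Delta_W\lesssim\epsilon p$ ``by the definition of $\Delta(x)$''. Since $\lambda\le 1$, the definition actually gives the reverse inequality $\Delta(x)\ge\epsilon p(1-\eta(x))$ (for $\epsilon\le 1$). For instance, $\eta(y)=y$ has $\Delta(x)$ of order $\sqrt{\epsilon(1-x)}$. So the comparison $\Delta_W/120\le p/2$ needs a different justification.
\end{itemize}
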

\begin{proof}
We fix $i\in \mathbb{N}$ and for clarity of notation, let us set $j= a_{T_i}$, that is the point sampled by the algorithm at time $T_i$, we have that $N_j(T_i) = t_{i}$. Now,

\begin{equation}\label{eq:kl} 
 \P\left(\ucb(t_{i},j) \geq \eta(j) + \Delta_{W}/120\right) \leq \P\left(\kl(\hmu_j^{T_{i}},\eta(j) + \Delta_{W}/120)
 \leq \kl(\eta(j),\eta(j) + \Delta_{W}/120)\right)\;. 
\end{equation}
Let,
$$r(\gamma) = \{z\in (\eta(j), \eta(j) +\Delta_{W}/120) : \kl(z,\eta(j) + \Delta_W/120) = \kl(\eta(j),\eta(j) + \Delta_{W}/120)/\gamma\}\;.$$
Consider the function $\phi(z)= \kl(\eta(j) + z,\eta(j) + \Delta_{W}/120)$, on the interval $[0,\Delta(x) /120]$. We have that $\phi$ is convex and $\phi(\Delta(x)/120) = 0$. Thus for all $z \in \left[0,\frac{\Delta_{W}}{120}\right]$,
\begin{equation}\label{eq:convex}
   \phi(z) \leq (1-z)\frac{120\kl(\eta(j),\eta(j) +\Delta_{W}/120)}{\Delta_{W}}\;. 
\end{equation}
Now consider Equation \eqref{eq:convex} evaluated at $z = r(\gamma) - \eta(j)$, giving,
\begin{equation}\label{eq:eval}
\frac{\kl(r(\gamma),\eta(j) + \Delta_{W}/120)}{\kl(\eta(j),\eta(j) +\Delta_{W}/120)} \leq \frac{120}{\Delta_W}(1+ \eta(j) - r(\gamma))\;.    
\end{equation}
By definition of $r(\gamma)$, we have,
\begin{equation}\label{eq:eval2}
\kl(r(\gamma),\eta(j) + \Delta_{W}/120) = \kl(\eta(j),\eta(j) +\Delta_{W}/120)/\gamma 
\end{equation}
A combination of Equations \eqref{eq:eval} and \eqref{eq:eval2} then leads to,
$$r(\gamma) \geq \eta(j) +\Delta_{W}\left(\frac{1}{120} - \frac{2}{\gamma}\right)\geq  \eta(j)+ \frac{\Delta_{W}}{240}\;,$$ 
for $\gamma > 480$. We now have,
\begin{align}
\nonumber \P\Big(\kl(\hmu_j^{T_{i}},\eta(j) + \Delta_{W}/120) &\leq \frac{\kl(\eta(j),\eta(j) +\Delta_{W}/120)}{\gamma} \Big) \\ \nonumber
&= \P\left(\kl(\hmu_j^{T_{i}},\eta(j) +\Delta_{W}/120)\leq \kl(r(\gamma), \eta(j) + \Delta_{W}/120)\right)\\ \nonumber
&=\P(\hmu_j^{T_{i}} \geq r(\gamma))\\ \nonumber
&\leq c\exp\left(-t_i\kl\left(\eta(j),r(\gamma)\right)\right)\\ \nonumber
&\leq c\exp\left(-\frac{\gamma\kl\left(\eta(j),r(\gamma)\right)}{\kl(\eta(j),\eta(j) + \Delta_{W}/120)}\log(t_i^2\Delta_W^{-1}/\delta)\right)\\ \nonumber
&\leq c\exp\left(-\frac{\gamma\kl\left(\eta(j), \eta(j) + \frac{\Delta_{W}}{240}\right)}{\kl(\eta(j),\eta(j)+ 2^{-i}\Delta_{W}/120)}\log(t_i^2\Delta_W^{-1}/\delta)\right)\\ \nonumber
&\leq c\exp\left(-4^{i}\log(t_i^2\Delta_W^{-1/\beta}/\delta)c_\gamma\right)\\
&\leq c 4^{-i}t_i^{-3}\Delta_{W}^{1/\beta} \label{eq:al}
\end{align}
where $c_\gamma$ is a constant depending only on
 $\gamma$. 
 Thus, via combination of Equations \eqref{eq:kl} and \eqref{eq:al}, 

$$\P\left(\ucb(t_{i} ,j) \geq \eta(j) + \Delta_{W}/120\right) \leq c 4^{-i}t_i^{-3}\Delta_{W}^{-1/\beta}\;,  $$
via similar reasoning we have also that, 
$$\P\left(\lcb(t_{j,i},j) \leq \mu_j - \Delta_{W}/120\right) \leq c 4^{-i}t_i^{-3}\Delta_{W}^{-1/\beta}\;. $$

Now fix some $k\in \cX_{T_i} \cap S_{T_i}$. As point $j$ is sampled at time $T_i$, we have that, $\hDelta_{j,T_i} = \max_{i\in \cX_{T_i} \cap S_{T_i}}\left(\hDelta_{i,T_i}\right)$ and thus $\hDelta_{k,T_i} \leq \hDelta_{j,T_i}$, leading to,
$$\P\left(\lcb(T_i,k) \leq \eta(k) - \Delta_{W}/120\right) \leq \P\left(\lcb(T_i ,j) \leq \eta(j)-\Delta_{W}/120\right)\;,$$
and 
$$\P\left(\ucb(T_i,k) \geq \eta(k) + \Delta_{W}/120\right) \leq \P\left(\ucb(T_i,j) \geq \eta(j) +\Delta_{W}/120\right)\;.$$
And thus,
\begin{equation}\label{eq:new3}
\P\left(  |\hmu_{k}^{T_i} - \eta(k)| \leq \Delta_{W}/120 \right) \leq c 4^{-i}t_i^{-3}\Delta_{W}^{-1/\beta}\;.    
\end{equation}
and similarly as $\hDelta_{0,T_i} \leq \hDelta_{j,T_i}$, thus $|\hp_{T_i} - p| \leq p/2$.

It remains to remark that via the action of the algorithm $|\{\cX_{T_i} \cap S_{T_i}\}| \leq \Delta_{(T_i)}^{-1/\beta} \leq 2^{-i}\Delta_W^{-1/\beta}$,
and thus via a union bound and Equation \eqref{eq:new3}, conditional on $\hDelta_{j,T_i}^{-1/\beta} > 2^{-i}\Delta_W^{-1/\beta}$,

\begin{equation}\label{eq:xi}
  \P(\xi_{W,i}^c)  \leq c 2^{-i}t_i^{-3}
\end{equation} 

 



\end{proof}
To show that all points in $W$ will be eliminated on our good event, we must demonstrate that on event $\xi_{W,i}$, the discretisation level - that is maximum width of the confidence interval, across all active points, $\Delta_{(T_i)}$, is not \textit{too large}. Specifically we need to ensure that $\Delta_{(T_i)}$ is smaller than $\Delta_{W}$ multiplied by some constant. When we later upper bound the expected number of samples the algorithm draws from $W$, it will also be necessary to show the level of discretisation is not \textit{too small}. Thus in the following proposition we both upper and lower bound $\Delta_{(T_i)}$ on the event $\xi_{W,i}$.

\begin{proposition}\label{prop:sizeD}
For $i \in \mathbb{N}$, on the event $\xi_{W,i}$ we have that,
$$ c'2^{-i}\Delta_W\leq \Delta_{(T_i)} \leq c2^{-i}\Delta_W$$

\end{proposition}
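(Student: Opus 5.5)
The key observation is that at time $T_i$ the point $j = a_{T_i}$ that is sampled is, by line \ref{alg:samp} of \klcrank, the maximiser of the empirical gap over the active set. Hence
$$\Delta_{(T_i)} = \hDelta_{j,T_i} = \ucb(j,T_i) - \lcb(j,T_i), \qquad N_j(T_i) = t_i .$$
So both inequalities of Proposition \ref{prop:sizeD} reduce to bounding the width of a single KL confidence interval, built from exactly $t_i$ samples, with exploration level $\beta(T_i,j,\delta) = c\log(T_i^2\hDelta_{j,T_i}^{-1/\beta}/\delta)$. The plan is to compare this width with the quantity $2^{-i}\Delta_W/120$ that defines $t_i$. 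The comparison uses the convexity and local-quadratic behaviour of $q \mapsto \kl(\hmu, q)$, in the same spirit as the $r(\gamma)$ argument of Proposition \ref{prop:probxi}.

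\textbf{Upper bound.} On $\xi_{W,i}$ we have $|\hmu_j^{T_i} - \eta(j)| \leq \Delta_W/120$. I would use this to replace $\kl(\hmu_j^{T_i},\cdot)$ by $\kl(\eta(j),\cdot)$ up to constant factors. This is legitimate because, by the definition of $\Delta(x)$ (which is at most $1-\eta(x)$) and the smoothness assumption, the relevant KL values at nearby centres are comparable. The definition of $t_i$ gives
$$\log(t_i^2\Delta_W^{-1/\beta}/\delta)/t_i \leq \max_{x\in W} \kl\bigl(\eta(x),\eta(x)+2^{-i}\Delta_W/120\bigr).$$
Provided the log factor in $\beta(T_i,j,\delta)$ is within a constant of $\log(t_i^2\Delta_W^{-1/\beta}/\delta)$, the UCB and LCB sit within $c2^{-i}\Delta_W$ of $\hmu_j^{T_i}$, which yields $\Delta_{(T_i)} \leq c2^{-i}\Delta_W$. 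Establishing that comparison requires two things:
\begin{itemize}
\item $T_i \lesssim \mathrm{poly}(t_i)\,\Delta_W^{-1/\beta}$, which follows because each of the at most $\Delta_{(T_i)}^{-1/\beta}$ active points has fewer than $t_i$ samples before $T_i$;
\item $\hDelta_{j,T_i}$ is not too small, which is exactly the lower bound below.
\end{itemize}
This gives a mild circularity, which I would resolve by proving the lower bound first.

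\textbf{Lower bound.} At the previous step $t_i-1$, the interval was, by minimality of $t_i$, still wide: $\log(\cdot)/(t_i-1) > \max_{x\in W}\kl(\eta(x),\eta(x)+2^{-i}\Delta_W/120)$. Since $t_i-1 \geq t_i/2$, the KL radius at $t_i$ is at least half the threshold. A KL ball of radius $\rho$ around $\hmu$ has width at least $c\sqrt{\rho\,\hmu(1-\hmu)} \wedge \rho$. Transferring from $\eta(x)$ to $\hmu_j^{T_i}$ again uses $\xi_{W,i}$, and then the interval width is at least $c'2^{-i}\Delta_W$. Here the log term in $\beta$ only helps, since it is at least $\log(t_i^2\Delta_W^{-1/\beta}/\delta)$ once $\hDelta_{j,T_i} \leq 1$.

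\textbf{Main obstacle.} The hardest step is that $t_i$ is defined through the \emph{maximum} over $x \in W$ of the KL term, whereas the sampled point $j$ need not lie in $W$. It is only the point with widest interval, while some point of $W$ has reached $t_i$ samples. I would handle this by arguing that on $\xi_{W,i}$ the point of $W$ with $t_i$ samples has interval width at most $\hDelta_{j,T_i}$. This lets me bound $\Delta_{(T_i)}$ from below through that point, for which the above computation applies directly. For the upper bound, I would use that $j$ has been sampled at least as often as needed to shrink its width below those of the $W$-points, since the widest interval is always the one sampled. Making the KL comparisons uniform across different centres, particularly near $0$ or $1$, is where I expect the constants $c, c'$ to require the most care.
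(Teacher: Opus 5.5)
Your plan follows the paper's proof. You take $j=a_{T_i}$, identify $\Delta_{(T_i)}$ with $\hDelta_{j,T_i}$, and compare the KL radius $\rho=\beta(T_i,j,\delta)/t_i$ with the threshold $\max_{x\in W}\kl(\eta(x),\eta(x)+h)$, $h:=2^{-i}\Delta_W/120$, that defines $t_i$. You then use $\xi_{W,i}$ to pass between $\hmu_j^{T_i}$ and $\eta(j)$. That passage is where the argument breaks, and your justification for it does not work. On $\xi_{W,i}$ you only know $|\hmu_j^{T_i}-\eta(j)|\le \Delta_W/120=2^i h$, while the widths to be controlled are of order $h$. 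A KL interval of radius $\rho$ centred at $\mu$ has width of order $\sqrt{\rho\,\mu(1-\mu)}\vee\rho$. (It is $\vee$, not the $\wedge$ you wrote; $\wedge$ is useless in the bulk, where $\rho\asymp h^2\ll h$.) So the transfer requires $\hmu_j^{T_i}(1-\hmu_j^{T_i})\asymp\eta(j)(1-\eta(j))$.

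The cap $\Delta_W\le 1-\eta(j)$ secures this at the top end. H\"older smoothness is irrelevant, since everything happens at the single point $j$. But nothing prevents $\eta(j)\ll\Delta_W$, and then the transfer fails in both directions:
\begin{itemize}
\item If $\eta\equiv\Delta_W/200$ on $W$, then $\rho\asymp\kl(\eta,\eta+h)\asymp 2^{-i}h$. The value $\hmu_j^{T_i}=0$ is compatible with $\xi_{W,i}$ and gives width $\asymp 2^{-i}h$, contradicting your lower bound.
\item If $\eta\equiv a\ll h$ on $W$, then $\rho\asymp h$. The value $\hmu_j^{T_i}=\Delta_W/240$ is compatible with $\xi_{W,i}$ and gives width $\asymp 2^{i/2}h$, contradicting the upper bound.
\end{itemize}
The discrepancy is unbounded (it grows with $i$ and with $\Delta_W/\eta(j)$), so this is not a matter of tuning constants.

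The upper bound has a second, independent hole. With $T_i$ as defined in the main text, $j$ is the $W$-point that has just reached $t_i$ samples, so membership of $j$ in $W$ is not the real obstacle. The obstacle is that $t_i$ is calibrated to the maximiser of $x\mapsto\kl(\eta(x),\eta(x)+h)$ over $W$, i.e.\ to the point with the narrowest intervals. The widest-interval sampling rule instead tends to make points with wide intervals reach $t_i$ samples first. Bounding $\hDelta_{j,T_i}$ by $c\,h$ therefore needs $\kl(\eta(j),\eta(j)+h)\gtrsim\max_{x\in W}\kl(\eta(x),\eta(x)+h)$, which you never obtain. Your suggested fix runs the wrong way: $j$ is the maximiser of the widths at $T_i$, so narrow intervals elsewhere in $W$ say nothing about $\hDelta_{j,T_i}$. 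A smaller slip: $\hDelta_{j,T_i}\le 1$ only gives $\beta(T_i,j,\delta)\ge c\log(t_i^2/\delta)$. Reaching $\log(t_i^2\Delta_W^{-1/\beta}/\delta)$ additionally needs $t_i\gtrsim 2^i\Delta_W^{-1}$ and a $\beta$-dependent constant.

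You have not missed an ingredient the paper provides. Its chain \eqref{eq:sD1} shifts the centre by $\Delta_W$ inside KL terms at scale $2^{-i}\Delta_W/120$, and treats the maximum over $W$ as if it were attained at $j$. So your proposal reproduces the paper's argument together with its gaps. Closing it requires extra structure. One option is $\eta\ge\Delta_W/60$ on $W$ together with $\eta(x)(1-\eta(x))$ comparable across $W$, under which both width comparisons go through. Another is a good event whose accuracy is matched to the scale $2^{-i}\Delta_W$ rather than fixed at $\Delta_W/120$.
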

\begin{proof}

Let $j= a_{T(i)}$. We have the following,
\begin{align*}
\nonumber
\kl(\hmu_j^{T_i},\hmu_j^{T_i} + c\Delta_W) &\leq \kl(\eta(j) + c\Delta_W,\eta(j) + 2c\Delta_W)\\
&\leq 2\kl(\eta(j),\eta(j) + c\Delta_W)
\;.    
\end{align*}

Now we consider the definition of $t_i$, and apply the above inequality,
\begin{align}
t_i&\geq \log(t_i^2\Delta_W^{-1}/\delta)\max_{j \in W}\kl\left(\eta(j),\eta(j)+ 2^{-i}\Delta_{W}/120\right)\nonumber \\
&\geq \log(t_i^2\Delta_W^{-1}/\delta)\max_{j\in W}\kl\left(\hmu_j^{T_i} + \Delta_W,\hmu_j^{T_i}+ \Delta_W+ 2^{-i}\Delta_{W}/120\right) \nonumber\\
&\leq c\log(t_i^2\Delta_W^{-1}/\delta)\max_{j \in W}\kl\left(\hmu_j^{T_i} ,\hmu_j^{T_i}+ 2^{-i}\Delta_{W}/120\right) \;. \label{eq:sD1}
\end{align}
By Definition of $\hDelta_{j,T_i}$ we also have $\forall z \leq \hDelta_{j,T_i}$, 
\begin{equation}\label{eq:sD2}
t_i\leq \log(t_i^2z^2/\delta)\max_{x \in W}\kl\left(\hmu_j^{T_i},\hmu_j^{T_i}+ z/120\right)        
\end{equation}

Thus via combination of Equations \eqref{eq:sD1} and \eqref{eq:sD2} the proof follows. 
\end{proof}
We are now ready to show that on our good event, we will eliminate all points within $W$. For technical reasons, we are required to condition on the intersection of events $\xi_{W,i-1}$ and $\xi_{W,i}$, to ensure all points are eliminated from $W$ at time $T_{i}$. 
\begin{proposition}\label{prop:xi}
For $i\in \mathbb{N}$, $W\subset [0,1]$, $x \in W \cap S_{T_i}$ with $j_x = \argmin_{j\in \cX_{T_i} \cap S_{T_i}}(|j - x|)$, we have that, on event $\xi_{W,i-1} \cap \xi_{W,i}$, 
$$ j_x \in \cQ_{T_i}\;.$$
\end{proposition}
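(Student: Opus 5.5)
To show $j_x\in\cQ_{T_i}$ we have to check directly the defining inequality of $\cQ_{T_i}$ in \eqref{eq:elim}:
$$\Delta_{(T_i)}\leq \left(\frac{\epsilon \hp_{T_i}}{\Delta_{(T_i)}^{1/\beta}|U_{j_x,T_i}(6\Delta_{(T_i)})|}\wedge 1\right)(1-\hmu^{T_i}_{j_x}).$$
We will work on $\xi_{W,i-1}\cap\xi_{W,i}$ throughout.

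\textbf{Step 1: control the discretisation level.} By Proposition \ref{prop:sizeD} on $\xi_{W,i}$, we have $\Delta_{(T_i)}\le c2^{-i}\Delta_W$. Taking the constant $120$ into account, this should give $\Delta_{(T_i)}\le \Delta_W/20$, say. The event $\xi_{W,i-1}$ enters only to provide the matching lower bound $\Delta_{(T_i)}\ge c'2^{-i}\Delta_W$, via $\Delta_{(T_{i-1})}$ and the monotonicity of the grid. That lower bound lets us compare the cardinality of $U$ with a Lebesgue measure, as in the next step.

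\textbf{Step 2: relate the empirical set $U$ to the true level set.} On $\xi_{W,i}$ every active point $k$ satisfies $|\hmu_k^{T_i}-\eta(k)|\le\Delta_W/120$. Consider $k\in U_{j_x,T_i}(6\Delta_{(T_i)})$. Its true value satisfies
$$|\eta(k)-\eta(x)|\le 6\Delta_{(T_i)}+\Delta_W/60+\Delta_{(T_i)}\le\Delta_W\le\Delta(x),$$
where the term $\Delta_{(T_i)}$ comes from the Hölder covering used in line \ref{alg:disc} to pass from $j_x$ to $x$. Each grid point of $\cX_{T_i}$ owns a Voronoi cell of volume of order $\Delta_{(T_i)}^{1/\beta}$, since the grid is $\cE(2^{-\tilde n})$ with $2^{-\tilde n}\asymp\Delta_{(T_i)}^{1/\beta}$. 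Moreover, the whole cell of $k$ lies in $\{y:|\eta(y)-\eta(x)|\le \Delta(x)+\Delta_{(T_i)}\}$. We therefore get
$$\Delta_{(T_i)}^{1/\beta}|U_{j_x,T_i}(6\Delta_{(T_i)})|\le c\,\lambda(\{y:|\eta(x)-\eta(y)|\le 2\Delta(x)\}).$$
Points that the algorithm has already eliminated do not count, since $U$ only ranges over $S_{T_i}$, so the inequality goes in the right direction.

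\textbf{Step 3: use the definition of $\Delta(x)$.} By minimality of $\Delta(x)$, together with the monotonicity of $z\mapsto z\lambda(\{|\eta(x)-\eta(y)|\le z\})$, we get
$$\Delta(x)\,\lambda(\{y:|\eta(x)-\eta(y)|\le \Delta(x)\})\lesssim \epsilon p(1-\eta(x)).$$
Handling the factor $2\Delta(x)$ versus $\Delta(x)$ is where constant bookkeeping is needed. If it cannot be absorbed, I will instead shrink the radius in Step 2 to $\Delta(x)$ by using $\Delta_{(T_i)}\le\Delta_W/20$ more carefully.

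Next, since $|\hp_{T_i}-p|\le p/2$ on $\xi_{W,i}$, we have $\hp_{T_i}\ge p/2$. Also, $1-\hmu^{T_i}_{j_x}\ge 1-\eta(x)-\Delta_W/60-\Delta_{(T_i)}\ge(1-\eta(x))/2$, using that $\Delta(x)\le 1-\eta(x)$ by the truncation $\wedge(1-\eta(x))$ in the definition. For the branch where the $\wedge 1$ is active, this same lower bound on $1-\hmu$ combined with $\Delta_{(T_i)}\le\Delta_W/20$ suffices.

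Combining these bounds, the right-hand side of the elimination inequality is at least $c\,\Delta(x)\ge c\,\Delta_W$, while the left-hand side satisfies $\Delta_{(T_i)}\le c2^{-i}\Delta_W$. For $i$ beyond an absolute constant, or with constants chosen via the factor $120$, this closes the argument.

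\textbf{Main obstacle.} I expect the main obstacle to be Step 2: converting the count $|U|$ into a Lebesgue measure requires both the covering granularity and the lower bound on $\Delta_{(T_i)}$. This is exactly why $\xi_{W,i-1}$ is needed. Getting the constants (6, 120, the factor 2 on $\Delta(x)$) to line up may also require a slightly looser threshold.
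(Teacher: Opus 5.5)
Your route is the same as the paper's. You bound $\Delta_{(T_i)}$ with Proposition~\ref{prop:sizeD}, use $\xi_{W,i}$ and the Hölder covering to put the points of $U_{j_x,T_i}(6\Delta_{(T_i)})$ (and their cells) inside a level set of $\eta$ around $\eta(x)$, bound its measure through the definition of $\Delta(x)$, and close with $\hp_{T_i}\ge p/2$ and $1-\hmu^{T_i}_{j_x}\ge(1-\eta(x))/2$. Two steps need repair.

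\textbf{The radius of the level set.} Minimality of $\Delta(x)$ only gives $z\,\lambda(\{y:|\eta(x)-\eta(y)|\le z\})<\epsilon p(1-\eta(x))$ for $z<\Delta(x)$. At $z=\Delta(x)$ this can fail, because that is where the product reaches $\epsilon p(1-\eta(x))$. Beyond $\Delta(x)$ nothing is controlled: $\eta$ may take a value at distance between $\Delta(x)$ and $2\Delta(x)$ from $\eta(x)$ on a set of large measure. So the bound through $\{|\eta(x)-\eta(y)|\le 2\Delta(x)\}$ is not a matter of constant bookkeeping and cannot be absorbed. Your fallback is forced, and your own estimates already give it. The cells of points of $U$ lie in $\{y:|\eta(y)-\eta(x)|\le 8\Delta_{(T_i)}+\Delta_W/60\}\subset\{y:|\eta(y)-\eta(x)|\le\Delta(x)/2\}$. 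This is the analogue of the paper's inclusion in $\{y:|\eta(x)-\eta(y)|\le\Delta(x)/30\}$. Apply minimality at that radius, not at $\Delta(x)$.

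\textbf{The role of $\xi_{W,i-1}$.} You misidentify it. The lower bound $\Delta_{(T_i)}\ge c'2^{-i}\Delta_W$ already follows from Proposition~\ref{prop:sizeD} on $\xi_{W,i}$ alone, so it cannot be why $\xi_{W,i-1}$ is there. The paper uses $\xi_{W,i-1}$ for the upper bound $\Delta_{(T_{i-1})}\le\Delta_W/120$. Line~\ref{alg:disc} refines the grid only at the end of a round, after the elimination test, and points are never removed from $\cX_t$. Hence the covering built at time $T_{i-1}$ is what is guaranteed to be in force when $\cQ_{T_i}$ is evaluated, and it yields $|\eta(y)-\eta(j_y)|\le\Delta(x)/120$. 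Your Step~2 uses a covering of level $\Delta_{(T_i)}^{1/\beta}$ at time $T_i$ without justification. You use it to pass from $j_x$ to $x$, from a grid point to its cell, and in the bound on $1-\hmu^{T_i}_{j_x}$. That is exactly the step $\xi_{W,i-1}$ supplies.

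\textbf{Two smaller points.} On comparing $\Delta_{(T_i)}^{1/\beta}|U|$ with a Lebesgue measure, the paper simply writes $\lambda(U_{j_x,T_i}(6\Delta_{(T_i)}))$. A rigorous version would need the cells not to be much smaller than $\Delta_{(T_i)}^{1/\beta}$. That concerns how far the grid was refined before $T_i$, and a lower bound on $\Delta_{(T_i)}$ does not give it. Finally, the statement is for every $i$, so ``for $i$ beyond an absolute constant'' is not available. As in the paper, you need the constant of Proposition~\ref{prop:sizeD} small enough that $\Delta_{(T_i)}\le\Delta_W/120$ uniformly in $i$.
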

\begin{proof}
Via Proposition \ref{prop:sizeD} we have that, on event $\xi_{W,i-1}$, for all $x \in W$, $\Delta_{(T_{i-1})} \leq \Delta_W/120  \leq \Delta(x)/120$. Thus  via the discretisation action of the algorithm, see line \ref{alg:disc} of \klcrank 
\begin{equation}\label{eq:disc}
  \max_{x,y\in \cX_{T_{i-1}} \cap S_{T_{i-1}}}|x-y| \leq (\Delta(x)/120)^{1/\beta}\;.  
\end{equation}
Now take $y \in U_{j_x,T_i}(6\Delta_{(T_i)})$ with $j_y = \argmin_{j\in \cX_{T_i} \cap S_{T_i}}(\lVert j - y\rVert)$, under event $\xi_{W,i}$, we have via Equation \eqref{eq:disc},
$$|y - j_y| \leq (\Delta(x)/120)^{1/\beta}\;,$$
and then via Assumption \ref{ass:1}, 
\begin{equation}\label{eq:etaD}
  |\eta(y) - \eta(j_y)| \leq \Delta(x)/120\;,  
\end{equation}
via definition of  $U_{j_x,T_i}(6\Delta_{(T_i)})$, on event $\xi_{W,i}$ we have 
\begin{equation}\label{eq:kappaD}
|\eta(j_x) - \eta(j_y)| \leq \Delta(x)/120\;,
\end{equation}
and thus via combination of Equations \eqref{eq:etaD} and \eqref{eq:kappaD},
$$|\eta(j_x) - \eta(y)| \leq \Delta(x)/120 + \Delta(x)/120 \leq \Delta(x)/60\;,$$ 
giving,
\begin{equation}\label{eq:uleq}
U_{j_x,T_i}(6\Delta_{(T_i)}) \subset \{y : |\eta(j_x) - \eta(y)| \leq \Delta(x)/60\}\subset  \{y : |\eta(x) - \eta(y)| \leq \Delta(x)/30\}\;.
\end{equation} 
Now, via definition of $\Delta(x)$, 
\begin{equation}\label{eq:maxU}
   \lambda\left(\{y : |\eta(x) - \eta(y)| \leq \Delta(x)\}\right) \leq\frac{p\epsilon(1 - \eta(x))}{\Delta(x)}\;,
\end{equation}
and so, via the fact that $\Delta_{(T_i)} \leq \Delta(x) /120$ and combination of Equations \eqref{eq:uleq} and \eqref{eq:maxU},

\begin{align*}
\Delta_{(T_i)} \leq 3\Delta(x)/160 &\leq \frac{3\epsilon p (1 - \eta(x))}{ 160\lambda(U_{j_x,T_i}(6\Delta_{(T_i)}))}\\
&\leq \frac{3\epsilon p ((1 - \hmu^t_{j_x})+ \Delta_{(T_i)})} { 80\lambda(U_{j_x,T_i}(6\Delta_{(T_i)}))}  \\
&\leq  \left(\frac{\epsilon \hat p_{T_i}}{\lambda(U_{j_x,T_i}(6\Delta_{(T_i)}))}\wedge 1\right)  (1 -\hmu^t_{j_x})\;, 
\end{align*}
where the third inequality comes from the fact that on event $\xi_{W,i}$, $|\hp_{T_i} - p|\leq p/2$.

\end{proof}

For the expected number of samples on a given subset $W$, we can expect our upper bound to depend upon the size of $W$, however, this relationship is not completely straightforward. The reason being, that the algorithm maintains a set of active points $\{\cX_t \cap S_t\}$, from which it samples. A disjoint set $W$ can be arbitrarily small but contain a large number of points in the active set $\{\cX_t \cap S_t\}$. Thus we introduce the following, 
For $W \in [0,1]$, let, $\tilde n = \argmin\left(n : 2^{-n} \leq c\Delta_{W}^{-1/\beta}\right)$ and consider the set of cells,
\begin{equation}\label{eq:barW}
\bar W := \bigcup_{x \in \cE_1(2^{-\tilde n}): \exists z \in W : |x-z|\leq 2^{-\tilde n} }\{y : |x-y| \leq 2^{-\tilde n} \}\;.    
\end{equation}
The set $\bar W$ is essentially the smallest possible extension of $W$, to a set comprised of the union of cells, on the dyadic grid of level smaller than $\Delta_{W}^{-1/\beta}$. 
\begin{proposition}\label{prop:barW}
For $W \in [0,1]$, such that $\forall x \in W$, $i\in \mathbb{N}$, on the event $\xi_{W,i-1} \cap \xi_{W,i}$ we have that 

$$N(W) \leq \lambda(\bar W)2^{i}\Delta_W^{-d/\beta} t_{i}\;,$$
where $N(W)$ is the total number of samples drawn by the learner on $W$. 
\end{proposition}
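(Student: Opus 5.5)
The plan is to split $N(W)$ into two factors: the number of distinct active points that can ever receive a sample on $W$, and the number of samples each such point can receive. Both factors will be controlled through time $T_i$. On $\xi_{W,i-1}\cap\xi_{W,i}$, Proposition \ref{prop:xi} gives that every $x\in W\cap S_{T_i}$ has its nearest active point $j_x$ in $\cQ_{T_i}$. Proposition \ref{prop:sizeD} gives $\Delta_{(T_i)}\le c2^{-i}\Delta_W\le \hat p_{T_i}/4$, using $|\hat p_{T_i}-p|\le p/2$ and reducing the constant if necessary. Hence the condition at line \ref{alg:elim} holds, and after round $T_i$ we have $W\cap S_{T_i+1}=\emptyset$. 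So only samples drawn at times $s\le T_i$ count toward $N(W)$.

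\textbf{Per-point bound.} By the definition of $T_i$, before time $T_i$ no point has been sampled $t_i$ times, and at $T_i$ exactly one point reaches $t_i$. Hence every point of $\cX_{T_i}$ has at most $t_i$ samples. Points never leave $\cX_t$ once added, and samples are only taken from $\cX_t\cap S_t$. The samples attributed to $W$ are those drawn at points $j\in\cX_{T_i}$ whose Voronoi cells meet $W$. Since such a point lies within one grid cell of $W$, it belongs to $\bar W$ as defined in \eqref{eq:barW}.

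\textbf{Counting points.} Line \ref{alg:disc} adds points of the dyadic grid $\cE_d(2^{-\tilde n_s})$ at level $\tilde n_s$ determined by $\Delta_{(s)}^{1/\beta}$. For $s\le T_i$, the finest grid used is bounded below by $\Delta_{(T_i)}$, because $\Delta_{(s)}$ is nonincreasing along the points relevant to $W$. More carefully, I would use the lower bound $\Delta_{(T_i)}\ge c'2^{-i}\Delta_W$ from Proposition \ref{prop:sizeD} for the final level. I would then argue that the grids are nested dyadic grids, so their union is contained in the finest one. This gives
\[
|\cX_{T_i}\cap \bar W| \le c\,\lambda(\bar W)\,(2^{-i}\Delta_W)^{-d/\beta}\le c\,\lambda(\bar W)\,2^{i d/\beta}\Delta_W^{-d/\beta}.
\]
The cell enlargement to $\bar W$ is exactly what makes this volume count valid even when $W$ is tiny. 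Each dyadic cell of side at least $(2^{-i}\Delta_W)^{1/\beta}$ intersecting $W$ is contained in $\bar W$, since the level $\tilde n$ defining $\bar W$ is coarser. Multiplying by $t_i$ gives $N(W)\le \lambda(\bar W)2^{i}\Delta_W^{-d/\beta}t_i$, with the factor $2^{id/\beta}$ written as $2^i$ in the paper's one-dimensional, $\beta$-absorbed convention (constants absorbed).

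\textbf{Main obstacle.} The hard step is justifying that the discretisation level used near $W$ before $T_i$ is never finer than order $2^{-i}\Delta_W$. The quantity $\Delta_{(s)}$ is a global maximum over active points, and before time $T_i$ it could in principle dip because of points outside $W$. I would handle this as follows. $\Delta_{(s)}$ is the maximal confidence width, and the algorithm always samples the argmax. Widths of unsampled points do not shrink, so $\Delta_{(s)}$ is essentially nonincreasing in $s$, up to the slowly growing $\log t$ in $\beta(t,i,\delta)$, which I would absorb into constants. Monotonicity then gives $\Delta_{(s)}\ge\Delta_{(T_i)}\ge c'2^{-i}\Delta_W$ for all $s\le T_i$, and this closes the count.
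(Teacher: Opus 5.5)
Your decomposition is the paper's own: Proposition \ref{prop:xi} gives $W\cap S_t=\emptyset$ for $t>T_i$, each point has at most $t_i$ samples, and the lower bound of Proposition \ref{prop:sizeD} leaves at most one point in each dyadic cell of $\bar W$. In three places you try to fill in what the paper's terse proof leaves implicit, and there the justifications do not go through.

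First, $\Delta_{(s)}$ is not essentially nonincreasing. Each refinement at line \ref{alg:disc} adds points with $N=0$, whose KL interval is all of $[0,1]$, so $\Delta_{(s)}$ jumps back to $1$ right after every refinement. Also, ``widths of unsampled points do not shrink'' points the wrong way: you would need them not to grow, and the $\log t$ in the exploration rate makes them grow. What your count actually needs is a lower bound on the finest refinement level used before $T_i$, i.e.\ on $\min_{s\le T_i}\Delta_{(s)}$. This has to be argued directly, e.g.\ by comparing, at each refinement time, an active point near $a_{T_i}$ with $a_{T_i}$ at time $T_i$; such a point has fewer than $t_i$ samples and, by Assumption \ref{ass:1}, a nearby mean. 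The paper's proof silently assumes this too.

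Second, ``$c2^{-i}\Delta_W\le \hat p_{T_i}/4$ after reducing the constant'' requires $\Delta_W\lesssim p$, which fails in general. Take $d=\beta=1$ and $\eta(y)=(h-|y-1/2|)_+$: then $p=h^2$ but $\Delta(1/2)=h\sqrt{\epsilon(1-h)/2}\asymp\sqrt{\epsilon}\,h$, so $\Delta_W/p$ is unbounded for $W$ a small neighbourhood of $1/2$. For such $W$ and small $i$, the guard $\Delta_{(t)}\le\hat p_t/4$ in the elimination step blocks elimination at $T_i$. So $W\cap S_{T_i+1}=\emptyset$ needs an extra hypothesis on $W$ or on $i$; the statement's ``such that $\forall x\in W$'' is visibly truncated, presumably where such a hypothesis belongs. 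Proposition \ref{prop:xi} never checks this guard either.

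Third, your count honestly yields $N(W)\lesssim\lambda(\bar W)(2^{-i}\Delta_W)^{-d/\beta}t_i$, and $2^{id/\beta}$ is not $O(2^i)$ unless $d\le\beta$. Since $\beta$-Hölder functions with $\beta>1$ are constant, that covers only the Lipschitz one-dimensional case; otherwise no absorption of constants bridges the gap. The $(2^{-i}\Delta_W)^{-d/\beta}$ form is exactly what the paper writes in its sketch, \eqref{eq:he}. The appendix proof avoids the $2^{id/\beta}$ only by asserting that cells of side $\Delta_W^{1/\beta}$ (it writes volume $\Delta_W^{-1/\beta}$) contain at most one point of $\mathcal{X}_{T_i}$. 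Proposition \ref{prop:sizeD} only guarantees this for cells of side of order $(2^{-i}\Delta_W)^{1/\beta}$. You should therefore state the conclusion with the factor $2^{id/\beta}$ rather than assert that it coincides with $2^i$.
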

\begin{proof}
Via Proposition \ref{prop:xi} we have that on event $\xi_{W,i-1} \cap \xi_{W,i}$, 
$$\forall j \in \{W\cap\cX_{T_{i}}\}, j \in \cQ_{(T_{i})}\;,$$
as a result we have that for all 
$$t > T_{i}, W\cap S_{t} = \emptyset\;.$$ 
Thus,
$$N(W) \leq t_i |\{\cX_{T_i} \cap W\}|\;.$$
From Proposition \ref{prop:sizeD} we have that on the event $\xi_{W,i-1} \cap \xi_{W,i}$, $\Delta_{(T_i)} \geq c\Delta_{W}^{-1/\beta}2^{-i}$, and therefore, each cell of the dyadic grid of level smaller than $c\Delta_{W}^{-1/\beta}2^{-i}$ contains at most one point of $\{\cX_{T_i}\cap S_t\}$. Thus, the set $\bar W$ is a union of cells, each of volume $\Delta_W^{-1/{\beta}}$ and containing  at most one point of the set $\{\cX_{T_i} \cap W\}$, thus,
$$|\{\cX_{T_i} \cap W\}| \leq \lambda(\bar W) \Delta_W^{-1/\beta}\;,$$
and the result follows. 

\end{proof}
To bound in expectation the expected sampling time of the algorithm, on a given subset $W$, we will make use of the following equality $\E[N(W)] \leq \int_{z=0}^\infty\P\left(N(W)\geq z\right)\; dz$. A key step of our proof will be to replace the aforementioned integral, with a summation, $c\sum_{i=0}^\infty \P\left(N(W)\geq t_i\right)$. However, to do this we need to ensure the $t_i$ grow at most geometrically. We show just that in the following Proposition. 
\begin{proposition}\label{prop:ti}
For all $i\in \mathbb{N}$, we have that $4t_i \geq t_{i+1}$.
\end{proposition}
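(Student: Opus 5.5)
The plan is to reduce $t_{i+1}\le 4t_i$ to a comparison of the two kl thresholds in the definition of the $t_i$, combined with the monotonicity of the function $s\mapsto \log(s^2\Delta_W^{-1/\beta}/\delta)/s$. I should say up front that I do not expect the constant $4$ to come out exactly. The step I can actually justify gives only $K_{i+1}\ge K_i/(4\kappa)$ for some $\kappa\ge 1$, which yields $4\kappa t_i$ rather than $4t_i$. For the exact constant $4$, the plan rests on one unverified step, described in the third paragraph.

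\emph{Notation and reduction.} Write $D:=\Delta_W^{-1/\beta}/\delta$ and $\psi(s):=\log(s^2D)/s$. Once $s^2D\ge e^2$, $\psi$ is strictly decreasing, and I assume (or arrange by taking the constant in the definition large enough) that all $t_i$ lie in this region. Write $u_i:=2^{-i}\Delta_W/120$ and $K_i:=\max_{x\in W}\kl(\eta(x),\eta(x)+u_i)$. Then $t_i$ is the smallest $s$ with $\psi(s)\le K_i$, so $t_{i+1}\le 4t_i$ holds as soon as $\psi(4t_i)\le K_{i+1}$. Now $\psi(4t_i)=\tfrac14\psi(t_i)+\tfrac{\log 16}{4t_i}\le \tfrac14 K_i+\tfrac{\log 16}{4t_i}$. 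So it suffices to show
$$K_{i+1}\;\ge\;\tfrac14 K_i+\tfrac{\log 16}{4t_i}.$$

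\emph{Comparison of the kl terms.} Fix the $x$ attaining $K_i$, set $a:=\eta(x)$ and $g(u):=\kl(a,a+u)$, so $g(0)=g'(0)=0$ and $g(u)=\int_0^u (u-v)g''(v)\,dv$ with $g''(v)=\frac{a}{(a+v)^2}+\frac{1-a}{(1-a-v)^2}$. The change of variables $v=w/2$ gives
$$g(u/2)=\tfrac14\int_0^u (u-w)\,g''(w/2)\,dw.$$
Since $u_i\le\Delta_W/120\le(1-a)/120$ (using $\Delta(x)\le 1-\eta(x)$), we have $1-a-v\ge \tfrac{119}{120}(1-a)$ on the whole range, and likewise $a+v\le a+u_i$. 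These bounds give $g''(w/2)\ge g''(w)/\kappa$ with $\kappa\le(120/119)^2$ for the second term. The first term needs no constant, since $a/(a+v)^2$ is decreasing in $v$. Hence $K_{i+1}\ge g(u_i/2)\ge K_i/(4\kappa)$.

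\emph{Main obstacle.} The difficulty is removing both the slack $\kappa>1$ and the additive term $\tfrac{\log 16}{4t_i}$ to land exactly on $4$. Pointwise in $a$ this is not possible: at $a=1/2$, $g(u)=2u^2+4u^4+\cdots$, so $g(u/2)<g(u)/4$ strictly. The argument must therefore exploit the maximum over $x\in W$ together with the integer minimality of $t_i$. My first attempt would use minimality, $\psi(t_i-1)>K_i$, to show that $K_i$ sits strictly inside the window $(\psi(t_i),\psi(t_i-1)]$. The aim is to use this to absorb both the $\tfrac{\log 16}{4t_i}$ term and the factor $\kappa$, as quantities of order $u_i^4$ and $1/t_i$ that are dominated by the gap $\psi(t_i-1)-\psi(t_i)\asymp \log(t_i^2D)/t_i^2$. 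I do not expect this to succeed for all $i$. If it fails, the honest fallback is the constant $4\kappa$ proved above, and this is enough for the later summation $\sum_i\P(N(W)\ge t_i)$: that step only needs the $t_i$ to grow at most geometrically, not with ratio exactly $4$.
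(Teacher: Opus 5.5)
Your reduction and the $\mathrm{kl}$ comparison follow the paper's route: bound $\eta$ away from $1$ on $W$ via $\Delta(x)\le 1-\eta(x)$, compare the thresholds at $2^{-i}$ and $2^{-i-1}$, then contradict the minimality of $t_i$. Your bound $K_{i+1}\ge K_i/(4\kappa)$ is a correct version of what the paper attempts. The paper's own steps fail exactly where you were worried:
\begin{itemize}
\item $\eta(x)+120\Delta_W\le 1$ only holds in the form $\eta(x)+\Delta_W\le 1$;
\item \eqref{eq:kl0} is written with the inequality reversed;
\item its last display would require $K_{i+1}>4K_i$.
\end{itemize}

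\textbf{The exact constant $4$ is out of reach.} Your plan cannot be completed because the proposition is false in general, and integer minimality works against you rather than for you.
\begin{itemize}
\item Minimality gives $\psi(t_i-1)>K_i$ whenever $t_i\ge 2$.
\item Every $K_i\le 1/119$, so $\log(t_i^2D)\le K_it_i\le (t_i-1)\log 16$. Comparing directly then gives $\psi(4t_i)\ge \tfrac14\psi(t_i-1)>K_i/4$.
\item Since $\psi$ is decreasing in your region, $t_{i+1}\le 4t_i$ would force $K_{i+1}\ge\psi(t_{i+1})\ge\psi(4t_i)>K_i/4$.
\item Now suppose $\eta\ge 1/2$ on $W$. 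Then
\[
g'''(v)=\frac{2(1-a)}{(1-a-v)^3}-\frac{2a}{(a+v)^3}\ge \frac{2}{(1-a)^2}-\frac{2}{a^2}\ge 0 ,
\]
so $g''$ is nondecreasing. Your own change of variables then gives $g(u/2)\le g(u)/4$ for every $x\in W$, that is, $K_{i+1}\le K_i/4$.
\item Hence $t_{i+1}>4t_i$. The maximum over $W$ does not help here.
\end{itemize}
Concretely, take $\eta\equiv 1/2$ on $W$, $u_0=10^{-3}$ and $D=e^{10}$. This gives $t_0\approx 2.19\cdot 10^{7}$ and $t_1\approx 9.34\cdot 10^{7}$, a ratio of about $4.26$.

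\textbf{Your fallback is not valid as stated either.} That ratio also exceeds $4\kappa\approx 4.07$, so $t_{i+1}\le 4\kappa t_i$ is false in general, not just unproved. The bound $K_{i+1}\ge K_i/(4\kappa)$ still leaves the additive term $\log(16\kappa^2)/(4\kappa t_i)$ in $\psi(4\kappa t_i)$ with nothing to absorb it.

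\textbf{Fix.} The missing step is to control $1/t_i$ by $K_i$ and accept a larger constant. In your region $\log(t_i^2D)\ge 2$, so $\psi(t_i)\le K_i$ gives $1/t_i\le K_i/2$. Therefore
\[
\psi(Ct_i)=\frac{\psi(t_i)}{C}+\frac{2\log C}{C\,t_i}\le \frac{1+\log C}{C}\,K_i .
\]
For $C=16$ the right-hand side is at most $0.236\,K_i<K_i/(4\kappa)\le K_{i+1}$. Hence $t_{i+1}\le 16\,t_i$ by minimality of $t_{i+1}$; this direction needs no monotonicity of $\psi$.

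As you note, Proposition \ref{prop:Wbound} only uses that the $t_i$ grow at most geometrically. The constant $16$ is therefore simply absorbed into its absolute constant. The statement to prove and use is $t_{i+1}\le C\,t_i$ for an absolute $C$, not $C=4$.
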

\begin{proof}
For all $x \in W$, via definition of $\Delta(x)$, we have that, $\Delta(x)\leq (1-\eta(x))$, thus,
\begin{equation}\label{eq:klD}
 \forall x \in W, \eta(x) +  120\Delta_W \leq 1 \;.  
\end{equation}
From Equation \eqref{eq:klD}, we have the following,
\begin{equation}\label{eq:kl0}
\max_{x\in W} \kl(\eta(x),\eta(x) + 2^{-i}\Delta_W /120) \leq   \kl(\eta(x),\eta(x) + 2^{-i-1}\Delta_W /120)/2\;,
\end{equation}
Now assume $t_{i+1} \geq 4 t_i$, under this assumption we would have,
$$4 t_{i} \leq \frac{\log(16t_i^2\Delta_W^{-a/\beta}/\delta)}{\kl(\eta(x),\eta(x) + 2^{-i-1}\Delta_W /120)} \leq  \frac{\log(t_i^2\Delta_W^{-a/\beta}/\delta)}{4\kl(\eta(x),\eta(x) + 2^{-i}\Delta_W /120)}$$
where the second inequality comes from application of Equation \eqref{eq:kl0}. Which is a contradiction, according to the definition of $t_i$.
\end{proof}

We are now ready to upper bound the expected number of samples on a given subset $W$.
\begin{proposition}\label{prop:Wbound}
For $W\subset [0,1]$, such that $\forall x \in W$, let $N(W)$ denote the total number of samples drawn on $W$. We have the following upper bound on, $N(W)$,
$$\frac{\E[N(W)]}{\lambda(\bar W)}\leq c\Delta^{-1/\beta}_{W}\max_{x\in W}  H(x)\log(c'  H(x)/\delta) \;.$$

\end{proposition}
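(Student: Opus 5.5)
The plan is to combine Proposition \ref{prop:barW}, the tail bound of Proposition \ref{prop:probxi} and the geometric growth of Proposition \ref{prop:ti} through a layer-cake argument.

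Write $M_i := \lambda(\bar W)2^{i}\Delta_W^{-1/\beta}t_i$. Proposition \ref{prop:barW} says that on $\xi_{W,i-1}\cap\xi_{W,i}$ we have $N(W)\le M_i$. Hence, for every $i\ge 1$,
$$\P\big(N(W)> M_i\big)\le \P(\xi_{W,i-1}^c)+\P(\xi_{W,i}^c)\le c\,2^{-i}t_{i-1}^{-3},$$
using Proposition \ref{prop:probxi} and the fact that the $t_i$ are nondecreasing.

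Next we write $\E[N(W)]=\int_0^\infty\P(N(W)>z)\,dz$. We split the half-line into $[0,M_1]$ and the intervals $(M_i,M_{i+1}]$. This gives
$$\E[N(W)]\le M_1+\sum_{i\ge1}(M_{i+1}-M_i)\,\P\big(N(W)>M_i\big)\le M_1+\sum_{i\ge 1}M_{i+1}\,c\,2^{-i}t_{i-1}^{-3}.$$
By Proposition \ref{prop:ti}, $t_{i+1}\le 4t_i\le 16t_{i-1}$, so $M_{i+1}\le 32\cdot 2^{i}\lambda(\bar W)\Delta_W^{-1/\beta}t_{i-1}$. Each summand is therefore at most $c\lambda(\bar W)\Delta_W^{-1/\beta}t_{i-1}^{-2}$, and the series converges, since $t_i\ge1$ grows at least geometrically until saturation, or trivially $\sum t_{i-1}^{-2}\le\sum 1$ once we keep an extra $2^{-i}$ factor. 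To keep that factor, I would use the full $4^{-i}$ decay that Proposition \ref{prop:probxi}'s proof actually produces, which yields a summable $2^{-i}$. The upshot is
$$\E[N(W)]\le c\,\lambda(\bar W)\Delta_W^{-1/\beta}\,t_1 .$$

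It remains to bound $t_1$ (equivalently $t_0$, via Proposition \ref{prop:ti}) by $c\max_{x\in W}H(x)\log(c'H(x)/\delta)$, up to the $\Delta_W^{-1/\beta}$ normalisation. By definition, $t_0$ is the smallest $s$ with
$$\frac{\log(s^2\Delta_W^{-1/\beta}/\delta)}{s}\le \max_{x\in W}\kl\big(\eta(x),\eta(x)+\Delta_W/120\big).$$
Write $K:=\kl(\eta(x),\eta(x)+\Delta_W/120)$. The standard inversion of $s\ge \log(s^2A)/K$ gives $t_0\le cK^{-1}\log(cA/K^{2})$ with $A=\Delta_W^{-1/\beta}/\delta$. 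Next we compare $K$ with $\kl(\eta(x)-\Delta(x),\eta(x)+\Delta(x))$. Since $\Delta_W\le\Delta(x)$ and the gap is constant-comparable on $W$ (this is where $W$ must be chosen with $\Delta$ varying by at most a factor $2$, as for the sets $G_{n,k}$), Bernoulli KL scaling shows $K^{-1}\le c/\kl(\eta(x)-\Delta(x),\eta(x)+\Delta(x))$. The scaling fact needed is that $\kl(a,a+u)\asymp u^2/(a(1-a))$ for $u\lesssim a(1-a)$, together with the constraint $\Delta(x)\le 1-\eta(x)$. Combining, $t_0\Delta_W^{-1/\beta}\le c\max_{x\in W}H(x)$ up to the extra $\Delta_W^{-1/\beta}$ factor stated in the proposition, and the logarithm becomes $\log(c'H(x)/\delta)$.

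The main obstacle is this last KL comparison: showing that shrinking the radius by the constant $120$ and recentring from $[\eta-\Delta,\eta+\Delta]$ to $[\eta,\eta+\Delta_W/120]$ costs only an absolute constant uniformly in $\eta(x)$ near $0$ or $1$. I would handle it by the convexity argument already used in Proposition \ref{prop:probxi}, namely the convexity of $\kl$ in its second argument, together with $\eta(x)+120\Delta_W\le1$ from Proposition \ref{prop:ti}.
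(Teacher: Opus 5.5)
This is essentially the paper's own proof: the same layer-cake split at the thresholds $\lambda(\bar W)2^{i}\Delta_W^{-1/\beta}t_i$, with Proposition~\ref{prop:barW} on $\xi_{W,i-1}\cap\xi_{W,i}$ and Propositions~\ref{prop:probxi} and~\ref{prop:ti} reducing everything to $c\,\Delta_W^{-1/\beta}t_0$, and your bookkeeping is slightly cleaner than the paper's (you keep the $\xi_{W,i-1}^c$ term, start the split at $M_1$, and rightly flag the missing comparability hypothesis on $W$); however, the summability of $\sum_i t_i^{-2}$ comes from the geometric growth of the $t_i$ (convexity gives $\kl(a,a+u/2)\le\frac12\kl(a,a+u)$), not from the $4^{-i}$ in the proof of Proposition~\ref{prop:probxi}, which the union bound over $\lesssim 2^{i}\Delta_W^{-1/\beta}$ points already uses up. The one caveat is your final comparison $\kl(\eta(x)-\Delta(x),\eta(x)+\Delta(x))\lesssim K$, which the paper simply asserts: it breaks down as $\eta(x)+\Delta(x)\to 1$ (the left side blows up while $K$ stays bounded), and the inequality $\eta(x)+120\Delta_W\le 1$ you take from the proof of Proposition~\ref{prop:ti} does not follow from $\Delta(x)\le 1-\eta(x)$ (only $\eta(x)+\Delta_W\le 1$ does), so this step inherits the paper's gap rather than closing it.
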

\begin{proof}

\begin{align}
\frac{\E[N(W)]}{ \lambda(\bar W)} &\leq \int_{z=0}^\infty\P\left(N(W)/ \lambda(\bar W)\geq z\right)\; dz\\
&\leq t_0 \Delta^{-1/\beta}_{W} +  \sum_{i=1}^\infty \int_{z=2^i t_i\Delta_{W}^{-1/\beta}}^{2^{i+1} t_{i+1}\Delta_{W}^{-1/\beta}}\P\left(N(W)/ \lambda(\bar W) \geq z\right)\; dz \\
&\leq t_0 \Delta^{-1/\beta}_{W} +   c\sum_{i=1}^\infty t_i 2^i\Delta_{W}^{-1/\beta}\P\left(N(W)/ \lambda(\bar W)\geq 2^i t_i\Delta_{W}^{-1/\beta}\right)\label{eq:ti}\\
&\leq´t_0 \Delta^{-1/\beta}_{W} +  c\Delta_{W}^{-1/\beta}\sum_{i=1}^\infty t_i 2^i\P\left(\xi_{W,i-1}^c \cup \xi_{W,i}^c\right)\label{eq:pxi}\\
&\leq´t_0 \Delta^{-1/\beta}_{W} +  c\Delta_{W}^{-1/\beta}\sum_{i=1}^\infty t_i 2^i\P\left(\xi_{W,i}^c \right)\label{eq:fin}
\end{align}
Where the inequality of Equation \eqref{eq:ti} follows from Proposition \ref{prop:barW} and the inequality of Equation \eqref{eq:pxi} follows from Proposition \ref{prop:ti}. We now need to upper bound $\Delta_{W}^{-1/\beta}\sum_{i=1}^\infty t_i 2^i\P\left(\xi_{W,t_i}^c\right)$. Via Proposition \ref{prop:probxi} we have, 
\begin{equation}\label{eq:sumb}
 \Delta_{W}^{-1/\beta}\sum_{i=1}^\infty t_i 2^i\P\left(\xi_{W,i}^c\right) \leq \Delta_{W}^{-1/\beta}\sum_{t=1}^\infty t_i 2^i 2^{-i}t_i^{-3}\leq c \Delta^{-1/\beta}_{W}\;.
\end{equation}
Combination of Equations \eqref{eq:sumb} and \eqref{eq:fin} then leads to,
$$\frac{\E[N(W)]}{\lambda(\bar W)}\leq ct_0 \Delta^{-1/\beta}_{W}\;.$$
It now remains to upper bound $t_0$, set
$H_{W} = \max_{x \in W}\kl(\eta(x),\eta(x) +\Delta_{W})$,
$$ t_0 \leq  H_{W} \log(c'  H_{W}/\delta \Delta_W) \leq \max_{x \in W} H(x)\log(c'  H(x)/\delta \Delta_W) $$
\end{proof}
Now for a given subset $W$, we have a upper bound on the expected number of samples, which depends upon the minimum gap and maximum sample complexity across $W$. What remains is to then divide our feature space $\cX$ into areas of similar sample complexity and gap. With this in mind, consider the sequence of sets $G_{1,1},G_{1,2},...$ where,

$$G_{n,k} =\left\{x : \Delta(x) \in [2^{-n},2^{-n-1}], H(x) \in[2^{-k},2^{-k-1}] \right\}\;.$$
The final step of the proof will be to apply Proposition \ref{prop:Wbound}, to achieve a tight bound on the expected sampling time on each of the subsets $G_{n,k}$, individually. We can then upper bound the total expected sampling time by a summation across all $n,k \in \mathbb{N}$. However, as the reader will recall from Propisiton \ref{prop:Wbound}, for a given subset $W$ our upper bound on the expected sampling time across $W$, depends upon the expanded subset $\bar W$, see equation \eqref{eq:barW}. Therefore, we must ensure that, for a given $n,k$, the sample complexity and gap across $\bar G_{n,k}$ do not differ to greatly from $2^{-n},2^{-k}$ respectively. This is shown in the following Propositions. 
\begin{proposition}\label{prop:lD}
For a subset $W\subset [0,1]$, for all $x\in \bar W$, $\Delta(x) \leq 3\Delta_W$
\end{proposition}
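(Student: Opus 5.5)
The plan is a continuity argument for the gap function $\Delta(\cdot)$. I will show that $\Delta$ can increase by at most a small multiple of $\Delta_W$ when we move from a point of $W$ to a point of $\bar W$. Two inputs are needed. The first is that every $x\in\bar W$ lies at distance at most $2\cdot 2^{-\tilde n}$ from some $z\in W$; by the choice of $\tilde n$ in \eqref{eq:barW} this distance is at most $2c\,\Delta_W^{1/\beta}$. (I read the exponent $-1/\beta$ there as the typo for $1/\beta$, consistent with line \ref{alg:disc} of \klcrank.) The second is that, along this step, Assumption \ref{ass:1} gives
\[
|\eta(x)-\eta(z)|\leq (2c)^\beta\Delta_W=:a,
\]
and choosing the absolute constant $c$ small makes $a\leq \Delta_W/2$.

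The core step is a one-sided Lipschitz bound $\Delta(x)\leq \Delta(z)+a$. Set $u=\Delta(z)+a$; the aim is to show that $u$ is feasible in the minimisation defining $\Delta(x)$. Since $|\eta(x)-\eta(z)|\le a$, the triangle inequality gives the inclusion
\[
\{y:|\eta(z)-\eta(y)|\leq \Delta(z)\}\subset\{y:|\eta(x)-\eta(y)|\leq u\}.
\]
Multiplying by $u\geq\Delta(z)$, we get
\[
u\,\lambda(\{y:|\eta(x)-\eta(y)|\leq u\})\geq \Delta(z)\,\lambda(\{y:|\eta(z)-\eta(y)|\le\Delta(z)\})\geq \epsilon p(1-\eta(z)).
\]
It remains to pass from the factor $(1-\eta(z))$ to $(1-\eta(x))$ and to handle the cap $\wedge(1-\eta(\cdot))$. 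If $\eta(x)\geq\eta(z)$ there is nothing to do, since $1-\eta(x)\leq 1-\eta(z)$. Otherwise $1-\eta(x)\leq 1-\eta(z)+a$, and I will absorb the extra term by enlarging $u$ by a further constant multiple of $a$. For this I use that $\Delta(z)\leq 1-\eta(z)$ and that the product $v\mapsto v\lambda(\{|\eta(x)-\eta(y)|\le v\})$ is nondecreasing, so the additive $\epsilon p a$ slack is paid for by an extra $O(a)$ in $u$ (recall $\epsilon p\le 1$). For the cap, $\Delta(x)\le 1-\eta(x)\le 1-\eta(z)+a$ handles the case where $\Delta(z)$ was attained by its cap. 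The outcome is $\Delta(x)\leq\Delta(z)+c'a$, and $c'a\leq\Delta_W$ once $c$ is taken small enough.

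The last step, and the main obstacle, is converting $\Delta(z)$ for the particular nearby $z\in W$ into the minimum $\Delta_W$. The Lipschitz bound alone gives $\Delta(x)\leq\Delta(z)+\Delta_W$, so to conclude $\Delta(x)\leq 3\Delta_W$ I need $\Delta(z)\leq 2\Delta_W$ for all $z\in W$. I will take this from the hypothesis on $W$ that is carried (in truncated form, ``such that $\forall x\in W$'') in Propositions \ref{prop:barW} and \ref{prop:Wbound}. I read that hypothesis as $\Delta(x)\le 2\Delta_W$ for all $x\in W$, which is exactly what holds for the sets $G_{n,k}$ to which the result is applied, since $\Delta$ varies by at most a factor $2$ on each $G_{n,k}$. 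This reading is essential: for arbitrary $W$ (for instance two far-apart points with very different gaps) the inequality fails, so it cannot be avoided. Under this hypothesis the chain
\[
\Delta(x)\leq\Delta(z)+\Delta_W\leq 2\Delta_W+\Delta_W=3\Delta_W
\]
gives the Proposition.
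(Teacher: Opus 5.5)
Your argument is correct once the added hypothesis $\max_W\Delta\le 2\Delta_W$ is granted. You are also right that some such hypothesis is unavoidable: since $W\subset\bar W$, the statement itself forces $\max_W\Delta\le 3\Delta_W$.

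The paper uses the same ball-inclusion mechanism, but anchors it at the level $\Delta_W$ rather than at $\Delta(z)$. It starts from $\lambda(\{y:|\eta(y)-\eta(z)|\le\Delta_W\})\ge p\epsilon(1-\eta(z))/\Delta_W$ for the point $z\in W$ near $x$. It then compares $1-\eta(z)$ with $1-\eta(x)$ up to a factor $2$, and uses $\{y:|\eta(y)-\eta(z)|\le 2\Delta_W\}\subset\{y:|\eta(y)-\eta(x)|\le 3\Delta_W\}$ to conclude that $3\Delta_W$ is a feasible radius at $x$. That opening inequality asserts that $\Delta_W$ is already feasible at $z$, i.e.\ $\Delta(z)\le\Delta_W$. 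This holds only when this particular $z$ attains the minimum of $\Delta$ on $W$, and the cap $\wedge(1-\eta(z))$ is ignored. It is exactly the step your example of far-apart points with very different gaps breaks.

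Your version instead proves a local additive continuity estimate $\Delta(x)\le\Delta(z)+O(a)$, with the cap handled separately. You then move all dependence on $W$ into an explicit ratio hypothesis, which the sets $G_{n,k}$ satisfy. The paper's route is shorter and works with the coarser closeness $|\eta(x)-\eta(z)|\le\Delta_W$. Yours needs the constant in $\tilde n$ to be small, but it is sound and isolates a genuine continuity property of $\Delta$.

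One step should be stated more precisely. What absorbs the extra $\epsilon p a$ is not monotonicity of $v\mapsto v\lambda(\{y:|\eta(x)-\eta(y)|\le v\})$ but monotonicity of the measure factor. In the non-capped case this gives $v\lambda(\{y:|\eta(x)-\eta(y)|\le v\})\ge \frac{v}{u}\,\epsilon p(1-\eta(z))$ for $v\ge u$. From $a\le\Delta_W/2\le(1-\eta(z))/2$ you get $u=\Delta(z)+a\le\frac32(1-\eta(z))$, so the choice $v=u+\frac32 a$ works. Hence $\Delta(x)\le\Delta(z)+\frac52 a$, and $(2c)^\beta\le\frac25$ suffices. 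The remark $\epsilon p\le 1$ plays no role.
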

\begin{proof}
As $x\in \bar W$, we have that $\exists z \in W, |x - y| \leq \Delta^{-1/\beta}_W$, and via Assumption \ref{ass:1}, $|\eta(x) - \eta(z)| \leq \Delta_W$. Now, 
\begin{align*}
\lambda\left(\{y : |\eta(y) - \eta(z)| \leq \Delta_W\}\right) &\geq \frac{p\epsilon(1 - \eta(z))}{\Delta_W} \nonumber\\
&\geq \frac{p\epsilon(1 - \eta(x) + \Delta_W) }{2\Delta_W}\nonumber\\
&\geq \frac{p\epsilon(1 - \eta(x)) }{2\Delta_W}\;.
\end{align*}
Where the second inequality comes from the fact that $\Delta_W \leq 1-\eta(x)$. As, 
$$\{y : |\eta(y) - \eta(z)| \leq 2\Delta_W\} \subset \{y : |\eta(y) - \eta(x)| \leq 3\Delta_W\}\;.$$
we then have 
\begin{equation}\label{eq:diffD}
 \lambda(\{y : |\eta(y) - \eta(x)| \leq 3\Delta_W\}) \geq \frac{p\epsilon(1 - \eta(x)) }{2\Delta_W}   
\end{equation}

Thus from Equation \eqref{eq:diffD} we have that, 
$$\lambda\left(\{y : |\eta(y) - \eta(x)| \leq 3\Delta_W\}\right)\geq \frac{p\epsilon(1 - \eta(x)) }{2\Delta_W}\geq \frac{p\epsilon(1 - \eta(x)) }{3\Delta_W}\;,$$
and as via definition of $\Delta(x)$, we have that, 
\begin{equation}\label{eq:defD}
\forall w \leq \Delta(x),\;\lambda\left(\{y : |\eta(y) - \eta(x)| \leq w\}\right)\leq \frac{p\epsilon(1 - \eta(x)) }{w}\;,
\end{equation}
it follows that $\Delta(x) \leq 3\Delta_W$.

\end{proof}
\begin{proposition}\label{prop:hD}
For a subset $W\subset [0,1]$, for all $x\in \bar W$, $\Delta(x) \geq \Delta_W$.
\end{proposition}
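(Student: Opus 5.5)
The plan is to argue by contradiction, mirroring the proof of Proposition \ref{prop:lD} with the roles of $x$ and the witness point $z\in W$ exchanged. Fix $x\in\bar W$. By the definition of $\bar W$ in Equation \eqref{eq:barW}, there is a $z\in W$ with $|x-z|\leq 2\cdot 2^{-\tilde n}$. The dyadic level $\tilde n$ is chosen so that $2^{-\tilde n}$ is at most $(c\Delta_W)^{1/\beta}$, with $c$ the small absolute constant in that definition. Assumption \ref{ass:1} then gives $|\eta(x)-\eta(z)|\leq c'\Delta_W$ with $c'$ as small as we like, by taking $c$ small. Since $z\in W$, we have $\Delta(z)\geq\Delta_W$. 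This means that for every $w<\Delta_W$,
$$w\,\lambda(\{y:|\eta(y)-\eta(z)|\leq w\})<\epsilon p(1-\eta(z)),$$
and also $\Delta_W\leq 1-\eta(z)$.

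Now suppose, for contradiction, that $\Delta(x)<\Delta_W$. There are two cases according to which term realises the minimum in the definition of $\Delta(x)$. In the first case $\Delta(x)=1-\eta(x)$. Then $1-\eta(x)<\Delta_W\leq 1-\eta(z)$, so $\eta(z)<\eta(x)$. I would handle this case by noting that the capped branch of $\Delta(z)$ is then essentially shared by $x$, since $\eta(x)$ and $\eta(z)$ differ by at most $c'\Delta_W$, and by keeping track of the slack $c'\Delta_W$. In the second case let $w_x=\Delta(x)$. The set $\{w: w\lambda(\cdot)\geq\cdot\}$ is right-closed, because $w\mapsto\lambda(\{y:|\eta(y)-\eta(x)|\leq w\})$ is right-continuous, so the minimum is attained and
$$w_x\lambda(\{y:|\eta(y)-\eta(x)|\leq w_x\})\geq\epsilon p(1-\eta(x)).$$
By the triangle inequality, $\{y:|\eta(y)-\eta(x)|\leq w_x\}\subset\{y:|\eta(y)-\eta(z)|\leq w_x+c'\Delta_W\}$. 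Moreover $1-\eta(x)\geq 1-\eta(z)-c'\Delta_W$. Combining these, the value $w=w_x+c'\Delta_W$ nearly violates the strict inequality for $z$. This should give $\Delta(z)\leq w_x+O(c'\Delta_W)$, and hence $\Delta_W\leq\Delta(x)+O(c'\Delta_W)$.

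The main obstacle is removing this additive slack $O(c'\Delta_W)$ to obtain the exact inequality $\Delta(x)\geq\Delta_W$ rather than a constant-factor version. I would first try to exploit the freedom in the constant $c$ defining $\bar W$. Both $W$ and $\Delta_W$ are fixed, so one can choose the grid level finely enough, depending on how strictly $w\lambda(\cdot)$ falls below $\epsilon p(1-\eta(z))$ on $[0,\Delta_W)$, that the perturbation from $z$ to $x$ cannot push $\Delta(x)$ below $\Delta_W$. Alternatively, I would note that in all later uses, namely Propositions \ref{prop:barW} and \ref{prop:Wbound}, $\Delta_W$ may be replaced by $\min_{\bar W}\Delta$ at the cost of absolute constants. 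Redefining $\Delta_W$ in this way makes the statement hold by definition and leaves the downstream bounds unchanged, using Proposition \ref{prop:lD} for the matching upper bound.
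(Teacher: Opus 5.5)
Your perturbation argument is the right one, and made precise it proves a constant-factor bound, not the stated one. Take $s=c'\Delta_W\le c'(1-\eta(z))$. The inclusion $\{y:|\eta(y)-\eta(x)|\le w_x\}\subset\{y:|\eta(y)-\eta(z)|\le w_x+s\}$ together with $1-\eta(x)\ge(1-c')(1-\eta(z))$ shows that $w'=(w_x+s)/(1-c')$ satisfies the defining inequality at $z$. Hence $\Delta_W\le\Delta(z)\le w'$, i.e.\ $\Delta(x)\ge(1-2c')\Delta_W$. In the capped case, $\Delta(x)=1-\eta(x)\ge 1-\eta(z)-s\ge(1-c')\Delta_W$.

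The gap is your final step, and it cannot be closed, because the exact inequality $\Delta(x)\ge\Delta_W$ is false. Take $d=\beta=1$, $\eta(y)=1/4+y/4$ and $\epsilon$ small. For $x\in[1/4,3/4]$ we have $\lambda(\{y:|\eta(y)-\eta(x)|\le w\})=8w$ in the relevant range, so $\Delta(x)=\sqrt{\epsilon p(1-\eta(x))/8}$, which is strictly decreasing in $x$. For $W=[1/4,1/2]$ this gives $\Delta_W=\Delta(1/2)$, while $\bar W$ contains points $x>1/2$ with $\Delta(x)<\Delta_W$, at any grid level.

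So fix (a) fails. There is no uniform slack below $\epsilon p(1-\eta(z))$ to exploit: here $w\lambda(\cdot)=8w^2$ tends to it as $w\uparrow\Delta_W$. A problem-dependent grid constant would in any case spoil the absolute constants downstream. Fix (b) changes the statement rather than proving it. It is also circular as written, since $\bar W$ is itself built from $\Delta_W$. Finally, ``at the cost of absolute constants'' presupposes $\min_{\bar W}\Delta\gtrsim\min_W\Delta$, which is precisely the constant-factor bound above.

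The paper's proof takes the same route as yours: contradiction, the triangle-inequality inclusion of level sets, and transfer of the defining inequality from $x$ to $z$. But it only reaches $\Delta(z)\le 2\Delta_W$ and then declares a contradiction ``as $\Delta(z)\ge 2\Delta_W$''. That does not follow from $z\in W$; only $\Delta(z)\ge\Delta_W$ does. Along the way it also uses $1-\eta(x)\ge 1-\eta(z)+\Delta_W$ (wrong sign) and ignores the capped branch that you handle.

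The correct conclusion is the one your computation already delivers: state and prove $\Delta(x)\ge\tfrac12\Delta_W$ on $\bar W$, choosing the grid constant so that $c'\le 1/4$. Together with Proposition \ref{prop:lD}, this is the constant-factor comparison of gaps on $\bar W$ that Proposition \ref{prop:fin} actually relies on.
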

\begin{proof}
As $x\in \bar W$, we have that $\exists z \in W, |x - y| \leq \Delta^{-1/\beta}_W$, and via Assumption \ref{ass:1}, $|\eta(x) - \eta(z)| \leq \Delta_W$. We now prove by contradiction, let us assume that $\Delta(x) \leq \Delta_W$, we then have, 
\begin{align}
\lambda\left(\{y : |\eta(y) - \eta(x)| \leq \Delta_W\}\right) &>\frac{p\epsilon(1 - \eta(x))}{\Delta_W} \nonumber\\
&> \frac{p\epsilon(1 - \eta(z) + \Delta_W) }{\Delta_W}\nonumber\\
&> \frac{p\epsilon(1 - \eta(z)) }{2\Delta_W}\;.\label{eq:diffD2}
\end{align}
Where the second inequality comes from the fact that $\Delta_W \leq 1-\eta(z)$. As,
$$\{y : |\eta(y) - \eta(x)| \leq \Delta_W\} \subset \{y : |\eta(y) - \eta(z)| \leq 2\Delta_W\} $$
From Equation \eqref{eq:diffD2} we have directly that, 
$$\lambda\left(\{y : |\eta(y) - \eta(z)| \leq 2\Delta_W\}\right)\geq \frac{p\epsilon(1 - \eta(z)) }{2\Delta_W}\;,$$
which is then a contradiction, via the definition of $\Delta(z)$ - see Equation \eqref{eq:defD} in the proof of Proposition \ref{prop:lD} as $\Delta(z) \geq 2\Delta_W$.
\end{proof}
\begin{proposition}\label{prop:bH}
Let $n,k \in \mathbb{N}$, for all $x \in \bar G_{n,k}$, 
$$2^{k-4}\leq H(x) \leq 2^{k+4}\;.$$

\end{proposition}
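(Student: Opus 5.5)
The plan is to show that on the extended set $\bar G_{n,k}$ both ingredients of $H(x)=\Delta(x)^{-d/\beta}/\kl(\eta(x)-\Delta(x),\eta(x)+\Delta(x))$ change by at most bounded multiplicative factors compared with their values on $G_{n,k}$. The constant $2^{4}$ then absorbs these factors. We work in $d=1$, as in the rest of this section, and read the dyadic intervals in the definition of $G_{n,k}$ as $\Delta(x)\in[2^{-n-1},2^{-n}]$ and $H(x)\in[2^{k},2^{k+1}]$. The statement is presumably meant with these orientations, since $H\geq 1$.

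First, take $W=G_{n,k}$ and apply Propositions \ref{prop:lD} and \ref{prop:hD}. For all $x\in\bar W$ they give $\Delta_W\leq\Delta(x)\leq 3\Delta_W$, where $\Delta_W\in[2^{-n-1},2^{-n}]$. So every $x\in\bar W$ has a gap within a factor $6$ of the gap of any point of $W$. Moreover, each $x\in\bar W$ lies within distance $\Delta_W^{1/\beta}$ of some $z\in W$ (up to the constant $c$ in the definition of $\bar W$). Assumption \ref{ass:1} then gives $|\eta(x)-\eta(z)|\leq\Delta_W\leq\Delta(z)$.

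Next, compare the two factors of $H$ at $x$ and at this nearby $z$. The factor $\Delta(x)^{-1/\beta}$ is within a factor $6^{1/\beta}$ of $\Delta(z)^{-1/\beta}$, which is bounded for $\beta\geq1$. For general $\beta$ this factor is a constant depending on $\beta$, and we would absorb it, as the paper does. For the KL factor we use the standard local equivalence $\kl(a-\Delta,a+\Delta)\asymp \Delta^2/\bigl((a+\Delta)(1-a+\Delta)\bigr)$. We must check that moving the center from $\eta(z)$ to $\eta(x)$ (a shift of at most $\Delta_W$) and the radius from $\Delta(z)$ to $\Delta(x)$ (a factor of at most $6$) changes this quantity by at most a constant. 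The shift is at most the radius, and $\Delta\leq 1-\eta$ by the truncation in the definition of $\Delta$. Hence the denominators $(\eta+\Delta)$ and $(1-\eta+\Delta)$ change by at most a factor of about $3$ each. A convexity and monotonicity argument, like the one used in Proposition \ref{prop:probxi}, makes this rigorous. Combining the two factors gives $H(x)\in[2^{-4}H(z),2^{4}H(z)]$ once constants are tracked. Since $H(z)\in[2^k,2^{k+1}]$, the claim follows.

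The main obstacle is the KL comparison near the boundary, where $\eta(x)+\Delta(x)$ may approach or exceed $1$. The truncation $\Delta(x)\leq 1-\eta(x)$ holds at $x$ and at $z$ separately. However, combining the shift with the factor-$3$ enlargement of the radius could push the interval $[\eta(x)-\Delta(x),\eta(x)+\Delta(x)]$ past $1$. In that case I would first try to bound $1-\eta(x)$ from below by $1-\eta(z)-\Delta_W\geq$ a constant times $\Delta_W$, using Proposition \ref{prop:hD}. If this fails, I would accept a looser constant and enlarge $2^{4}$ accordingly.
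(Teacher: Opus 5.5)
Your route is the paper's: pick $z\in G_{n,k}$ within the $\bar W$-radius of $x$, get $|\eta(x)-\eta(z)|\le\Delta_W$ from Assumption~\ref{ass:1}, compare $\Delta(x)$ with $\Delta(z)$ via Propositions~\ref{prop:lD} and~\ref{prop:hD}, and show the KL factor moves by a bounded factor under a bounded shift of centre and rescaling of radius. You also track the $\Delta^{-1/\beta}$ factor, which the paper's proof silently drops.

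The step that fails is the ``standard local equivalence'' $\kl(a-\Delta,a+\Delta)\asymp\Delta^2/\bigl((a+\Delta)(1-a+\Delta)\bigr)$, of which only the lower bound holds uniformly. For $p<q$ the matching upper bound is the $\chi^2$ bound $(q-p)^2/\bigl(q(1-q)\bigr)$, which involves $1-q=1-a-\Delta$. Indeed $\kl(p,q)\to\infty$ as $q\to1$, while your right-hand side stays of order $\Delta$. The truncation $\Delta(x)\le1-\eta(x)$ does not keep you out of this regime, since it allows equality: at a truncated point $\kl(\eta(x)-\Delta(x),1)=+\infty$ and $H(x)=0$. So the trouble already sits at a single point, not only in the shifted and enlarged interval you worry about. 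Neither fallback helps. A lower bound on $1-\eta(x)$ is beside the point, since what matters is $1-\eta(x)-\Delta(x)$ relative to $\Delta(x)$. And an unbounded ratio cannot be absorbed by enlarging $2^4$.

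For the literal $H$ the lower bound is in fact false. Take $\eta(z)=1-2r$ and $\eta(x)=1-r$, with $r$ small and of order $\Delta_W$. Let a flat piece of $\eta$ at height $1-3r$, of measure $\tfrac43\epsilon p$, carry essentially all level-set mass in $[1-4r,1]$. Then $\Delta(z)\approx\tfrac32r<1-\eta(z)$, so $H(z)\in(0,\infty)$. But no width $w<r$ around $\eta(x)$ reaches the flat piece, so $\Delta(x)=r=1-\eta(x)$ and $H(x)=0$. Any smaller positive shift $\eta(x)-\eta(z)$ works too, after adjusting the heights. The proof of Proposition~\ref{prop:hD}, which your gap comparison relies on, has the same blind spot: it ignores the truncation.

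The missing idea is the one the paper uses implicitly. Its proof, like the definition of $t_i$ and Proposition~\ref{prop:probxi}, works with $\kl(\eta,\eta+c\Delta)$ for a small absolute constant $c$, not with $\kl(\eta-\Delta,\eta+\Delta)$. For this quantity the truncation is exactly what you need: $\Delta\le1-\eta$ gives $1-\eta-c\Delta\ge(1-c)(1-\eta)$. Combining $\kl(p,q)\ge(q-p)^2/\bigl(2\min\{q,1-p\}\bigr)$ with the $\chi^2$ bound then gives, at every point, $\frac{c^2\Delta^2}{4(\eta+c\Delta)(1-\eta)}\le\kl(\eta,\eta+c\Delta)\le\frac{c^2\Delta^2}{(1-c)(\eta+c\Delta)(1-\eta)}$. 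Apply this separately at $x$ and at $z$ and compare the explicit expressions; no shifted interval is ever evaluated. Take the constant in the definition of $\bar W$ small enough that $|\eta(x)-\eta(z)|\le\Delta_W/2$. Then $\Delta_W\le\Delta(z)\le1-\eta(z)$ gives $1-\eta(x)$ and $1-\eta(z)$ within a factor $2$, and $\eta+c\Delta$ and $\Delta$ are also comparable up to constants. This small shift is also what makes a constant-factor version of Proposition~\ref{prop:hD} survive the truncated case. Finally, the result is an absolute constant times your $\beta$-dependent factor, not $2^{\pm4}$: the factor-$6$ gap comparison alone moves $\Delta^{-2}$ by up to $36$. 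So ``$2^{\pm4}$ once constants are tracked'' does not follow. The paper's own chain does not yield $2^4$ either, and only some constant is needed downstream.
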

\begin{proof}
Let $W = G_{n,k}$,  $x \in \bar W$ we have that $\exists z \in W :  |x - z| \leq \Delta^{-1/\beta}_W$, and via Assumption \ref{ass:1}, $|\eta(x) - \eta(z)| \leq \Delta_W$. Now,
\begin{align}
\nonumber
\kl(\eta(x),\eta(x) + c\Delta(x)) &\leq \kl(\eta(z) + \Delta_W ,\eta(z) + \Delta_W + c\Delta(x))\\
&\leq 2\kl\left(\eta(z),\eta(z) + c\Delta(x)\right)\\
&\leq 2\kl\left(\eta(z),\eta(z) + 2c\Delta(z)\right)\\
&\leq 4\kl\left(\eta(z),\eta(z) + c\Delta(z)\right)\;.    
\end{align}
Similarly we have, 
$$\kl(\eta(z),\eta(z) + c\Delta(z)) \leq 4\kl(\eta(x),\eta(x) + c\Delta(x))\;. $$

\end{proof}
Combination of Propositions \ref{prop:bH}, \ref{prop:hD} and \ref{prop:lD} leads to the following Proposition. 
\begin{proposition}\label{prop:fin}
Let $n,k \in \mathbb{N}$, for all $x \in \bar G_{n,k}$, we have that,
$$x \in \bigcup_{i=n-1,j=k-1}^{i=n+1,j=k+1} G_{n,k}\;,$$
and thus, 
$$c\sum_{n,k\in \mathbb{N}}\max_{x\in \bar G_{n,k}}\Delta(x)^{-\beta}H(x) \leq \sum_{n,k\in \mathbb{N}}\max_{x\in G_{n,k}}\Delta(x)^{-\beta}H(x) \;,$$
for some absolute constant $c>0$. 
\end{proposition}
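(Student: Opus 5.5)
The plan is to treat the two claims of Proposition \ref{prop:fin} separately: first the localisation statement that $\bar G_{n,k}$ lies in the union of the neighbouring cells, then the summation inequality, which will follow by a bounded-overlap counting argument.

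\textbf{Localisation.} Fix $n,k$ and set $W=G_{n,k}$, so that $\Delta_W\ge 2^{-n-1}$ and $\Delta(z)\le 2^{-n}$ for every $z\in W$. For any $x\in\bar W$, Proposition \ref{prop:hD} gives the lower bound $\Delta(x)\ge \Delta_W\ge 2^{-n-1}$. Proposition \ref{prop:lD} gives the upper bound $\Delta(x)\le 3\Delta_W\le 3\cdot 2^{-n}$. Hence $\Delta(x)$ lies in a dyadic band whose index is within a constant of $n$.

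Next, Proposition \ref{prop:bH} controls $H(x)$ on $\bar G_{n,k}$. It compares the KL terms at $x$ and at a nearby $z\in W$ up to a factor $4$. The factor $\Delta^{-1/\beta}$ (in general $\Delta^{-d/\beta}$) changes by at most a constant, since $\Delta(x)/\Delta(z)\in[1/2,6]$. So $H(x)$ lies within a bounded number of dyadic levels of $2^{k}$.

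These two facts place $x$ in $\bigcup G_{i,j}$ with $|i-n|\le m$ and $|j-k|\le m$ for an absolute $m$. The statement as written uses $m=1$. If the constants from Propositions \ref{prop:lD}--\ref{prop:bH} force a slightly larger window, I would simply take $m$ to be the needed constant; the rest of the argument only uses that $m$ is absolute.

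\textbf{Summation.} For each $(n,k)$, the localisation step gives
\[
\max_{x\in\bar G_{n,k}}\Delta(x)^{-\beta}H(x)\le \max_{|i-n|\le m,\,|j-k|\le m}\ \max_{x\in G_{i,j}}\Delta(x)^{-\beta}H(x).
\]
Bound the right-hand maximum by the sum over the $(2m+1)^2$ neighbouring cells, then sum over all $(n,k)$. Each cell $G_{i,j}$ is counted at most $(2m+1)^2$ times. This yields the stated inequality with $c=(2m+1)^{-2}$.

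\textbf{Main obstacle.} The delicate step is the $H$ comparison. The KL comparison in Proposition \ref{prop:bH} uses $|\eta(x)-\eta(z)|\le\Delta_W$ together with $\Delta_W\le 1-\eta$. The latter keeps the arguments away from the boundary where $\kl$ blows up, so that shifting the mean by $\Delta_W$ changes $\kl(\eta,\eta+c\Delta)$ by at most a constant factor.

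I would verify this by writing $\kl(a,a+h)\asymp h^2/((a+h)(1-a))$ for $h\le 1-a$. Shifting $a$ by at most $\Delta_W\le h$, with $1-a\ge\Delta_W$, changes both $a+h$ and $1-a$ by bounded factors. Combined with the $\Delta^{-d/\beta}$ ratio bound, this gives the dyadic window in $k$.
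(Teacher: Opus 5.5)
Your structure is the paper's own: its entire justification is that Propositions~\ref{prop:bH}, \ref{prop:hD} and \ref{prop:lD} combine. Your double-counting step, charging each $G_{i,j}$ to at most $(2m+1)^2$ pairs $(n,k)$ with the convention $\max_{\emptyset}=0$, correctly supplies the summation the paper leaves implicit. You are also right that the window must be widened. With bins $\Delta\in[2^{-n-1},2^{-n}]$, the bounds $\Delta(x)\in[2^{-n-1},3\cdot 2^{-n}]$ give indices $n-2,n-1,n$, not $n\pm 1$. Moreover, the factor $(\Delta(z)/\Delta(x))^{d/\beta}\in[6^{-d/\beta},2^{d/\beta}]$ makes your $k$-window of size $O(1+d/\beta)$, so $m$ and $c$ are absolute only once $d,\beta$ are fixed.

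The genuine gap is the $k$-localisation.
\begin{itemize}
\item Proposition~\ref{prop:bH}'s proof, and your check of it, concern one-sided terms $\mathrm{kl}(\eta,\eta+c\Delta)$. But $H$ contains $\mathrm{kl}(\eta-\Delta,\eta+\Delta)$, whose upper argument equals $1$ exactly when $\Delta$ reaches its cap $1-\eta$.
\item Your equivalence $\mathrm{kl}(a,a+h)\asymp h^2/((a+h)(1-a))$ holds only for $h\le(1-a)/2$; there it holds within a factor $4$, via $\mathrm{kl}(a,b)=\int_a^b\frac{\xi-a}{\xi(1-\xi)}\,d\xi$. On the range $h\le 1-a$ that you state, it fails: e.g.\ $\mathrm{kl}(0,h)=\log\frac{1}{1-h}\to\infty$.
\item Writing the term in $H$ as $a=\eta-\Delta$, $h=2\Delta$, the condition $h\le 1-a$ is precisely $\Delta\le 1-\eta$. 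So you apply the equivalence right up to where it breaks.
\end{itemize}

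This is not cosmetic. Take $d=\beta=1$ and $\eta(y)=y$, so $p=1/2$. For $u=1-y\ge\epsilon/4$ one gets $\Delta(y)=\sqrt{\epsilon u}/2$. Hence $s=1-\eta(y)-\Delta(y)\downarrow 0$ as $u\downarrow\epsilon/4$, and $H(y)\approx 8\epsilon^{-2}/(\log\frac{\epsilon}{2s}-1)$. So $H$ runs through unboundedly many dyadic levels on an interval of length $o(\epsilon)$ on which $\Delta\approx\epsilon/4$. Meanwhile, points at distance of order $\Delta_W\asymp\epsilon$ lie inside every such $\bar G_{n,k}$ and have $H\asymp\epsilon^{-2}$. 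Consequences:
\begin{itemize}
\item No absolute window in $k$ exists.
\item Each such $\bar G_{n,k}$ contributes $\max\Delta^{-1}H\gtrsim\epsilon^{-3}$, while the right-hand side is $O(\epsilon^{-3})$. So the summation inequality also fails for every absolute $c$.
\item The same defect sits in Proposition~\ref{prop:bH}.
\end{itemize}

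To make your argument go through, you would have to redefine $H$ with a one-sided term such as $\mathrm{kl}(\eta,\eta+\Delta/2)$, which is the form the $t_i$ actually involve. Then $1-(\eta+\Delta/2)\ge(1-\eta)/2$, so your formula is valid. Since $1-\eta\ge\Delta\gtrsim\Delta_W$ at both $x$ and $z$, while $|\eta(x)-\eta(z)|\lesssim\Delta_W$, the KL terms, and hence the $H$ values, agree up to constants depending only on $d/\beta$.
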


We are now ready to prove Lemma \ref{lem:exp}.
\begin{proof}[Proof of Lemma \ref{lem:exp}]
 Via combination of Propositions \ref{prop:fin} and \ref{prop:Wbound} we have that, 
$$\sum_{n,k=1}^\infty \E[N(G_{n,k})] \leq c \sum_{n,k\in \lambda(G_{n,k})\mathbb{N}}\max_{x\in G_{n,k}}\Delta(x)^{-\beta}H(x)\log(c'  H(x)/\delta)$$

It now remains to lower bound the integral 
$$\int_0^1 \frac{ \Delta(y)^{-\beta}\log(H(y))}{\kl(\eta(y) - \Delta(y), \eta(y) + \Delta(y))}\;dy\;,$$
with the following, 
$$c \sum_{n =1,k=1 }^\infty \lambda(G_{n,k})2^{-n\beta} 2^{-k} \log(c'  H_{W}/\delta)\;,$$
where $c,c'$ are absolute constants.   
\end{proof}

\section{Proof of lower bound}\label{app:lb}

\paragraph{Piece wise $\beta$-Holder regression functions}
The class of problems $\cB$ consists of all problems $\nu$ such that the regression function $\eta$ is $\beta$-Holder. We define a new class of problems $\tilde \cB$, for which $\eta$ is piece wise $\beta$-holder continuous, on some known ordered partition of the interval $[0,1]$. Specifically, the class $\tilde \cB$ consists of the problems, for which there exists a known $M$ sized ordered partition of the interval $\cP = \{C_1,...,C_M\}$, where, for each $m\in  [M]$ $\eta$ is $\beta$-Holder on $C_m$, and furthermore, for all $m,n \in [M] : m>n$, we have that, 
$$\forall x\in C_m, y \in C_n, \eta(x) > \eta(y)\;.$$
Again, the ordered partition $\cP$ is known by the learner. We will now show that problems of the class $\tilde \cB$ are strictly easier than that of $\cB$. As the class of problems $\cB$ allows for any feature space of the form $[a,b]^d$ for $a,b\in \mathbb{R}$ with $a<b$, we immediately have the following result. 

\begin{lemma}\label{lem:piece}
Let $\delta,\epsilon > 0$. If there exists a PAC$(\epsilon,\delta)$ strategy $\pi$ such that on all problems $\nu \in \cB$ the expected sampling time of strategy $\pi$ is upper bounded by, $c \int H_\nu(x) \;dx,$ for some constant $c > 0$, where $H_\nu (x)$ is the complexity of point $x$ on problem $\nu$, then we have that on all problems $\nu \in \tilde \cB$, the expected sampling time of $\pi$ is upper bounded by $c \int H_\nu(x)\;dx.$
\end{lemma}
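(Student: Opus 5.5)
The plan is to transport any problem $\nu\in\tilde\cB$ to a problem in $\cB$ posed on a larger feature space $[a,b]$. This is legitimate because, as noted just before the statement, $\cB$ allows any feature space of that form. The transported problem should look, both to $\pi$ and to the complexity functional, essentially like $\nu$. The hypothesis on $\pi$ then transfers its sampling-time bound back to $\nu$.

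\emph{Construction.} Fix $\nu\in\tilde\cB$ with known ordered partition $C_1,\dots,C_M$. For a stretching parameter $L\ge 1$, map each $C_m$ affinely onto an interval $C_m^L$ of length $L\lambda(C_m)$, keeping the pieces in order. Between $C_m^L$ and $C_{m+1}^L$, insert a gap interval $G_m$ of length $\ell_m$. On $G_m$, let $\eta^L$ interpolate monotonically between the boundary values of $\eta$ on $C_m$ and on $C_{m+1}$. The ordering assumption guarantees these values increase across the boundary. On the stretched pieces, set $\eta^L(x)=\eta(\text{preimage of }x)$.
\begin{itemize}
\item On the stretched pieces, stretching by $L\ge1$ only improves the Hölder constant, to $L^{-\beta}$.
\item On each gap, choose $\ell_m$ so that $\ell_m^\beta$ is at least the jump, which makes the interpolation $\beta$-Hölder with constant $1$.
\item The gap lengths $\ell_m$ do not depend on $L$.
\end{itemize}
The resulting problem $\nu^L$ therefore lies in $\cB$ on $[0,L+\sum_m\ell_m]$.

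\emph{Invariance of the complexity.} Because $X$ is uniform, every quantity in $\Delta(\cdot)$ and $H(\cdot)$ depends on $\lambda$ only through normalised measures. This covers $p$ and the measures of level sets $\{y:|\eta(x)-\eta(y)|\le z\}$. On the stretched pieces these normalised measures agree with those of $\nu$ up to the gap mass, whose proportion is $\sum_m\ell_m/(L+\sum_m\ell_m)\to0$. So the normalised complexity $\int H_{\nu^L}$ should converge to $\int H_\nu$ as $L\to\infty$. The convergence of $\Delta^L(x)$ to $\Delta(x)$ comes from continuity of $z\mapsto z\lambda(\cdot)$ in the defining inequality. The convergence of the KL terms then comes from continuity of $\kl$ away from the boundary.

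\emph{Transfer of the policy.} The partition is known, so $\pi$ run on $\nu$ can be identified with $\pi$ run on $\nu^L$ through the inverse stretching map. The law of observations is identical as long as $\pi$ never queries the gaps, and there is no need to query them, since their rank relative to everything else is forced. More precisely, both the ROC regret and the stopping time of $\pi$ on $\nu$ coincide with those on $\nu^L$ restricted to the image of the pieces. Applying the hypothesis to $\nu^L\in\cB$ gives $\mathbb{E}_{\nu,\pi}[\tau]\le c\int H_{\nu^L}$ for every $L$. Letting $L\to\infty$ then yields $c\int H_\nu$.

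\emph{Main obstacle.} The hardest step is the limit $L\to\infty$ in the complexity integral. At points where $\Delta$ is defined by a minimum that is attained non-continuously, for instance where level sets have atoms, $\Delta^L$ need not converge to $\Delta$. I would first try to bound $\Delta^L(x)\ge\Delta(x)(1-o(1))$, which gives $\limsup_L H_{\nu^L}(x)\le H_\nu(x)$ pointwise, and conclude with Fatou's lemma or dominated convergence on the pieces. The vanishing gap mass handles the rest.
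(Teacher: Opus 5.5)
\textbf{There is a genuine gap: the limit $L\to\infty$.} It rests on your claim that $\Delta$ and $H$ see $\lambda$ only through normalised measures. That is true of $p$ and of the level-set masses. It is not true of the factor $\Delta(x)^{-d/\beta}$ in $H$, nor of the integral $\int H\,dx$. Together these count Hölder-$1$ cells of side $\Delta^{1/\beta}$ in the domain's own length units, which is exactly where Assumption~\ref{ass:1} fails to be scale invariant.

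On the stretched pieces $\eta^L$ is $L^{-\beta}$-Hölder, but the complexity formula does not see this. Integrated over a domain of length about $L$, the complexity of $\nu^L$ is of order $L\int H_\nu$, so the hypothesis only gives a bound that blows up with $L$.

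Dividing the integral by the domain length, as you implicitly do, makes the functional scale-free, while $\cB$ on $[0,L]$ is not. Take any $\nu'\in\cB$ and compose its regression function with the $1$-Lipschitz triangle wave of period $2$. For integer $L$ this gives a problem in $\cB$ on $[0,L]$ whose normalised $p$, level-set masses, $\Delta$ and normalised $\int H$ all equal those of $\nu'$. Yet it has $L$ times as many independent cells, and running the bump-and-dip construction of the lower bound in every copy forces roughly $L$ times as many samples. So under your reading no PAC policy can meet the hypothesis, and the lemma holds only vacuously, which is of no use for Theorem~\ref{thm:lb}.

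Keeping $L$ of order one avoids the blow-up, but then the gaps carry a fixed fraction of the mass. They can enlarge level sets, which shrinks $\Delta$ and increases $H$, and they contribute complexity of their own; the proposal has no comparison argument for this. Separately, Fatou's lemma goes the wrong way for $\limsup_L\int H_{\nu^L}\le\int H_\nu$; you would need a dominating function.

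\textbf{The transfer step has a second, repairable gap.} The hypothesis concerns the given $\pi$, which does not know where the gaps are and will in general query them. That an ideal learner need not query them is a statement about a different policy, and the hypothesis says nothing about that policy. For an arbitrary monotone interpolation, a learner facing $\nu$ cannot produce labels at gap points, because the profile depends on the unknown boundary values of $\eta$.

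The repair: on the gap between the right end $b_m$ of $C_m$ and the left end $a_{m+1}$ of $C_{m+1}$, use the profile $\theta(g)\eta(b_m)+(1-\theta(g))\eta(a_{m+1})$ with $\theta$ affine. A query at $g$ can then be simulated by querying $b_m$ with probability $\theta(g)$ and $a_{m+1}$ otherwise, since a mixture of Bernoulli laws is Bernoulli with the mixed mean. This profile is Hölder once $\ell_m^\beta$ exceeds the jump and $\beta\le 1$, up to a constant at the junctions. Even so, the ROC of the pulled-back score on $\nu$ matches that on $\nu^L$ only up to the gap mass. You therefore get PAC$(\epsilon+o(1),\delta)$ rather than PAC$(\epsilon,\delta)$.

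\textbf{Comparison with the paper.} The paper gives no proof beyond the remark that $\cB$ admits any domain $[a,b]^d$. That remark points to treating each known piece $C_m$ as a $\cB$-instance on its own interval and using the known order across pieces. That route needs no rescaling, so the cell count $\Delta^{-d/\beta}$ is untouched and the scale problem above never arises. It still needs its own bookkeeping: per-piece accuracy levels and a union bound over the pieces.
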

\begin{proof}[Proof of Theorem \ref{thm:lb}]
The proof will now follow by application of a Fano type inequality on a well chosen set of problems. 
\paragraph{Step 1: Constructing our well chosen set of problems}
As is typical in lower bounds, for a given problem $\nu$, we will wish to find a set of alternate problems, carefully chosen such that the gaps $\Delta(x)$ and therefore complexity, are close on the alternate set to $\nu$. In our setting this is tricky, as the gaps $\Delta(x)$ are dependent on the shape of the regression function $\eta$. Modifying $\eta$ locally can potentially have a global effect on the gaps $\Delta(x)$. To overcome this problem we will consider a well chosen $M$ sized partition the interval $[0,1]$, comprised of the sets $D_0,D_1,...,D_M$. For each $m\in [M]$ there is a corresponding representative $U_m \in D_m$. The partition is chosen such that on each set $D_m$ the gaps $\Delta(x)$ do not vary too much from $\Delta(U_m)$. Furthermore, the representatives are chosen to be sufficiently far from one another in terms of $\Delta$. In this fashion we can essentially modify $\eta$ on each set $D_m$ without effecting the value of $\Delta$ on the other sets, and as such construct our alternate set of problems. 

We define a set of points $U_0,U_1,...$ recursively as follows,
$U_0 = \argmin_{x\in [0,1]}(\Delta(x))$, for $m \geq 0$ we then define,
$$U_{m+1} = \argmin_{x\in [0,1]}\{\Delta(x) : \forall j\leq m, \exists k \in [U_j,x] : |\eta(U_j) - \eta(k)| \geq 3\Delta(k) + 3\Delta(U_j) \}\;,$$
note that we adopt the convention that, for all $a,b \in [0,1]$, $[a,b] = [b,a]$, thus $[U_j,x]$ is well defined in the case where $x<U_j$. Let $M$ be the largest integer for which $U_M$ exists. Note that the sequence $\left(\Delta(U_m)
 \right)_{m>M}$ is monotonically increasing and furthermore, for all $x \in [0,1]$, 
\begin{equation}\label{eq:leq2}
|\{m \in [M]:  \nexists y \in [U_m,x] : |\eta(U_m) - \eta(y)| \leq 3\Delta(U_m) + 3\Delta(y)\}| \leq 2\;. 
\end{equation}
We then define the corresponding set of groups, $D_0,D_1,...,D_M$ as follows. For $x\in [0,1]$ let $m,n\in [M]$ be such that, $U_{m} \leq x \leq U_n$ then set $i^+ = m\vee n$ and $i^- = m\wedge n$. If  $\nexists k \in [x,U_{i^+}] : |\eta(U_{i^+}) - \eta(k)| \leq 3\Delta_k + 3\Delta(U_{i^+})$ then $x \in D_{i^+}$, otherwise $i \in D_{i^-}$.

\begin{proposition}\label{prop:biddelta}
For all $m\in [M]$ we have that, $\forall x \in D_m,  \Delta(U_m) \leq \Delta(x)$. 
\end{proposition}

\begin{proof}
{\bf Case:1 Assume} $x \geq U_m$ Let $n$ be such that $U_m\leq x \leq U_n$. Via Equation \eqref{eq:leq2}, we must have that,
$$\exists k \in [n,x] : |\eta(x) - \eta(k)| \leq 3\Delta(k) + 3 \Delta_n\;,$$ 
and therefore, if $\Delta(x) < \Delta(U_m)$, then, 
$$\argmin\{\Delta(i) : \forall j \leq m-1, \exists k \in [U_j,i] : |\eta(U_j) - \eta(k)| \geq 3\Delta_k + 3\Delta(U_j) \} \neq U_m $$
which is a contradiction via the definition of $U_m$. 
\paragraph{Case:2 Assume $x \leq U_m$} Let $n$ be such that $U_n\leq x \leq U_m$. Via Equation \eqref{eq:leq2}, we must have that,
\begin{equation}\label{eq:xm}
\nexists k \in [x,m] : |\eta(x) - \eta(k)| \leq 3\Delta(k) + 3 \Delta(m)\;.
\end{equation}
Now if we also have, 
\begin{equation}\label{eq:xn}
\nexists k \in [n,x] : |\eta(x) - \eta(k)| \leq 3\Delta(k) + 3 \Delta(n)\;,     
\end{equation}
then, via combination of Equations \eqref{eq:xm} and \eqref{eq:xn}, 
$$\nexists k \in [n,m] : |\eta(x) - \eta(k)| \leq 3\Delta(k) + 3 \Delta(m)\wedge\Delta(n)\;.$$
Without loss of generality assume $\Delta(U_m) \leq \Delta(U_n)$. In this case we have that $m < n$, and, 

$$\nexists k \in [U_m,U_n]: |\eta(U_m) - \eta(k)| \geq 3\Delta(k) + 3 \Delta(U_m)\;,$$
a contradiction via the definition of $U_n$.

\end{proof}

\begin{proposition}
For $m \in [M]$, set $W_m =\{x : |\eta(x) - \eta(U_m)|\leq 3\Delta_{U_m}\}$. We have that,
$$\lambda(D_m) \leq 21\lambda(W_m)\;.$$
\end{proposition}
\begin{proof}
Firstly, take $x \in D_m$ and consider $J_x := \{y : |\eta(x) - \eta(y)| \leq \Delta(x)\}$. 
\begin{align}
   \Delta(x) &\leq \frac{\epsilon p (1- \eta(x))}{\lambda(J_x)}  \nonumber\\
   &\leq \frac{7\epsilon p (1- \eta(x))}{\lambda(J_x)} - \frac{6\epsilon p\Delta(x) }{\lambda(J_x)} \nonumber\\
   &\leq  \frac{7\epsilon p\left( (1-\eta(U_m)) +3\Delta(x) +3\Delta(U_m)\right)}{\lambda(R_x)}- \frac{6\epsilon p\Delta(x) }{\lambda(R_x)} \nonumber\\
    &\leq  \frac{7\epsilon p(1-\eta(U_m)) }{\lambda(J_x)} + \frac{6\epsilon p\Delta(x) }{\lambda(J_x)}- \frac{6\epsilon p\Delta(x) }{\lambda(J_x)}\label{eq:Dsets}
\end{align}
where the second inequality comes from the fact that $\Delta(x) \leq 1 - \eta(x)$. From proposition \ref{prop:biddelta}, $\Delta(x) \geq \Delta(U_m)$ \\
Now consider the sets, $R_1,R_2,...$ where for $n \in \mathbb{N}$, 
$$R_n := \{x \in D_m :\eta(U_m) + (3 + 3\cdot2^{n})\Delta(U_m) \leq \eta(x) \leq  \eta(U_m) + (3 + 3\cdot2^{n+1})\Delta(U_m)  \}\;.$$
For $n \in \mathbb{N}$ we will upper bound the size of $R_n$. Note that $\forall x\in R_n$, $\Delta(x) \geq 2^n\Delta(U_m) $,
thus via Equation \eqref{eq:Dsets}, for all $x\in R_n$, $\lambda(J_x) \leq 7\cdot2^{-n}\lambda(W_m)$ and thus $\lambda(R_n) \leq 3\cdot7\cdot2^{-n}\lambda(W_m)$. The result now follows by summing over all $R_n$.

\end{proof}
The proof of the following proposition follows via the same argument as in the  proof of 
Proposition \ref{prop:biddelta}.
\begin{proposition}\label{prop:size0}
For all $m \in [M]$, 
$$ |W_m | \leq c\lambda\{x \in W_m  : |\eta(x)- \eta(U_m)| \leq \Delta(U_m)\} $$  
\end{proposition}

We are now ready to construct our set of problems.
For $m\in [M]$, define the function,
$$f_m(x) = \begin{cases}
\forall x : x \mod 8\Delta(U_m)^{\beta} \leq 4\Delta(U_m)^{\beta}, \qquad f(x) = x^{1/\beta},\\
\forall x : x \mod 8\Delta(U_m)^{\beta} > 4\Delta(U_m)^{\beta}, \qquad f(x) = -(x-1)^{1/\beta} +1\;.
\end{cases}$$
Now let $\cG_m = \lfloor\lambda(W_m)\Delta(U_m)^{-\beta}\rfloor$, and define the points, 
$$(G_{i,m})_{i\in [\cG_m]} = \left(\sum_{n=0 }^{m-1} \lambda(W_n) +  8i\Delta(U_m)^{\beta}\right)_{i \in [\cG_m]}\;.$$ 
Now consider a family of problems $\nu^Q$ indexed by $Q \in \{-1,1\}^{\sum_{m\in [M]} \cG_m}$ and for $m \in [M], i \in [\cG_m]$, let $Q_i^m = Q(\sum_{n=1}^{m-1} \cG_{n} + i)$, where the target function $\eta_Q$ corresponding to $\nu^Q$ is defined as follows, 
$$\eta_Q(x) =\sum_{m\in M} \sum_{i \in [\cG_m]}\Ind(x \in [G_{i,m},G_{i+1,m}))\eta(U_m)  Q_i^m f_m(x)\;.$$ 
We see that the peak of each bump
Essentially the function $\eta_Q$ can be split into $m$ segments, each one being a series of "bumps" and "dips", each one beginning and ending at $\eta(U_m)$. The $i$th feature being a bump if $Q_i =1$, and a dip if  $Q_i =1$. We see that for the $m$th segment the max of each bump is $\eta(U_m) + \Delta(U_m)$ and the minimum of each dip is $\eta(U_m) -\Delta(U_m)$. Further more, by our choice of $f_m(x)$, we see $\eta_Q$ is piecewise Hölder smooth. For $m \in[M]$ we define the set of bumps in the $m$th segment as  
$$W^{Q,+}_m = \bigcup_{i \in [\cG_m] : Q_i = 1} [G_{i,m},G_{i+1,m})\;,$$ 
and similarly the set of dips as, 
$$W^{Q,-}_m = \bigcup_{i \in [\cG_m] : Q_i = -1} [G_{i,m},G_{i+1,m})\;. $$

 The following Lemma shows that, for a problem $\nu \in \cB$, the gaps and complexity across our family problems, $\nu^Q$ indexed by $Q$, does not differ too much from $\nu$.

\begin{lemma}\label{lem:del}
Given, $\nu \in \cB$, write $\Delta_Q (x)$ for the gap of point $x$ on problem $\nu_Q$. We have that for all $x \in [0,1]$, $\Delta(x) \geq c\Delta_Q(x)$, where $c>0$ is an absolute constant. 

\end{lemma}
\begin{proof}
If $\nexists m : x \in W_m$, we have that $\eta_Q(x) = 0$ and thus $\Delta_Q(x) = 1$. Thus assume $\exists m : x \in W_m$. As $x \in W_m$ we have $\Delta(x) \geq \Delta(U_m)$. Via proposition \ref{prop:size0} and identical reasoning to that in Equation \eqref{eq:Dsets} we have that $\Delta_Q(x) \geq c\Delta(U_m)$ and the result follows. 

\end{proof}
\paragraph{Step 2: showing that one suffers $\epsilon$ regret on a well chosen event}

We remind the reader that we denote the scoring function outputted by the learner as $\hat \eta$. For a set $C \subset [0,1]$, define,
$$\kappa(C) := \frac{1}{\lambda(C)}\int_C \eta(x) \;,dx \;,$$
and define

$$z_m := \min\left(z : H_{{\heta}}(z) \geq \frac{\lambda\left(\bigcup_{n=1}^{m-1} W_n\cup W_{m}^{Q,+}\right)(1 - \kappa\left(\bigcup_{n=1}^{m-1} W_n\cup W_{m}^{Q,+}\right)}{(1-p)}\right)\;,$$

Where we remind the reader that we define, $H_{\heta}(t)=\mathbb{P}\left\{ \heta(x)\leq t \mid Y =  0  \right\}$. Let $\hat Z_m \subset [0,1]$ be the largest set such that, $\forall x\in \hat Z_m, y \notin \hat Z_m, \hs(x) \leq \hs(y)$, and, 

$$   \lambda(\hat Z_m)(1- \kappa (\hat Z_m)) \leq \lambda\left(\bigcup_{n=1}^{m-1} W_n\cup W_{m}^{Q,+}\right)(1 - \kappa\left(\bigcup_{n=1}^{m-1} W_n\cup W_{m}^{Q,+}\right)\;.$$
that is, we have that $\hat Z_m = \{x : \hs(x) \geq z_m\}$. Note that $\hat Z_m$ is not necessarily unique, in this case we choose an arbitrary such $\hat Z_m$. Furthermore define, 
$$\hat Z_m^0 = \left\{x \in \hat Z_m : x \in \bigcup_{n=1}^{m-1} W_n \right\}, \qquad \hat Z_m^1 = \left\{x \in \hat Z_m : x \in  W^{Q,+}_{m}\right\}\;,$$
and
$$\hat Z_m^2 = \left\{x \in \hat Z_m : x \in W_{m}^{Q,-} \cup \bigcup_{n=m+1}^M W_n \right\}\;.$$

Now, define the event,
$$\xi_{i,m} := \bigg\{\{x \in [G_{i,m},G_{i+1,m}) : \hs(x) > z_m\} \leq \frac{\Delta(U_m)^\beta}{2} \bigg\}\;.$$

and then the events,
 $$\cE_1^m := \left\{\sum_{i \in \cG_m : Q_i = 1} \mathbf{1}(\xi_{i,m}) \leq \frac{|\cG_m|}{4}\right\},\qquad \cE_0^m :=  \left\{\sum_{i \in \cG_m : Q_i = -1} \mathbf{1}(\xi_{i,m}) \geq \frac{3|\cG_m|}{4}\right\}\;.$$

And let $\hat D_m = \bigcup_{n=1}^{m-1} C_n \setminus \hat Z_m^0$. Note that under event $\cE_1^m$, we have 
\begin{equation}\label{eq:zevent}
\lambda\left(\hat Z_m^1\right) \leq \frac{3\lambda(W_m^{Q,+})}{4}\;.
\end{equation}
Also, via the definition of $\hat Z_m$, we have that,
\begin{equation}\label{eq:zineq}
  \lambda(\hat Z_m^1)(1-\kappa(W_{m}^{Q,+}))  + \lambda(\hat Z_m^2)(1-\kappa(\hat Z_m^2))  =   \lambda(W_{m}^{Q,+}) (1-\kappa(W_{m}^{Q,+})) + \lambda(\hat D_m)(1 - \kappa( \hat D_m))\;.    
\end{equation}
which leads to, 

\begin{equation}\label{eq:new}
 \lambda(\hat Z_m^1)\kappa(\hat Z_m^2) +\lambda(\hat Z_m^2)\kappa(W_{m}^{Q,+}) = \lambda(\hat Z_m^1) + \lambda(\hat Z_m^2) - \lambda(W_{m}^{Q,+}) (1-\kappa(W_{m}^{Q,+})) + \lambda(\hat D_m)(1 - \kappa( \hat D_m))\;.   
\end{equation}

and also in combination with equation \ref{eq:zevent}
\begin{equation}\label{eq:new2}
(\lambda(W_m^{Q,+})/4 + \lambda(\hD_m))(1-\kappa(W_m^{Q,+})) \geq \lambda(\hat Z_m^2)\kappa(1-\hat Z_m^2)\;,
\end{equation}
which also gives $\lambda(\hat Z_m^2) \geq \lambda(W_m^{Q,+})/4$.

To complete Step: 2 we now lower bound $d_\infty(\hs,\eta)$ on event $\cE_1^m$. Firstly note that,

\begin{align*}
\mathrm{ROC}\left(\frac{(1-\hat{Z}_m)\lambda(\hat Z_m)}{1-p}, \eta\right) &= \frac{\lambda\left(\bigcup_{n=1}^{m-1} W_n \cup W_{m}^{Q,+}\right)\kappa\left(\bigcup_{n=1}^{m-1} W_n\cup W_{m}^{Q,+}\right)}{p}\\
&= \frac{\lambda\left(\bigcup_{n=1}^{m-1} W_n \right)\kappa\left(\bigcup_{n=1}^{m-1} W_n\right)}{p} + \frac{\lambda\left(W_{m}^{Q,+}\right)\kappa\left( W_{m}^{Q,+}\right)}{p}
\end{align*}
 therefore, for a problem  $\nu^Q : \sum_{i\in[\cG_m]} Q_i^m \geq \cG_m\lambda(W_m)/2$, on event $\cE_1^m$,
 
 \begin{align*}d_\infty(\hs,\eta)&= 1/p\left( \lambda(\hat D_m)\kappa(\hat D_m) + \lambda\left( W_{m}^{Q,+}\right)\kappa\left(  W_{m}^{Q,+}\right)- \frac{\kappa( W_{m}^{Q,+} )\lambda(\hat Z^1_m)}{p} - \kappa(W_{m}^{Q,-})\lambda(\hat Z^2_m)\right)\\
 &\geq 1/p\left( \lambda(\hat D_m) +  \lambda(W_{m}^{Q,+}) -\lambda(\hat Z_m^1) - \lambda(\hat Z_m^2) \right)  \\
 &\geq 1/p\left(  \lambda(W_{m}^{Q,+})/4 +\lambda(\hat D_m)  - \lambda(\hat Z_m^2) \right)  \\
&\geq 1/p\left( \frac{\lambda(\hZ_m^2)(1-\kappa(\hZ_m^2)}{1-\kappa(\hD_m)} -\lambda(\hZ_m^2) \right)  \\
&\geq 1/p\left( \frac{  \lambda(\hZ_m^2)(\kappa(\hD_m)-\kappa(\hZ_m^2))}{4(1-\kappa(\hD_m)})\right)\\ 
&\geq 1/p\left( \frac{ \lambda(W_{m}^{Q,+})(\kappa(W_{m}^{Q,+})-\kappa(W_{m}^{Q,-}))}{1-\kappa(W_{m}^{Q,+})}\right)\geq  1/p\left( \frac{ \lambda(W_{m}^{Q,+})\Delta(U_m))}{4(1-\kappa(W_{m}^{Q,+}))}\right)\geq \epsilon
\end{align*}
where the first inequality follows from Equation \eqref{eq:new} the second from  \eqref{eq:zevent} and the third from \eqref{eq:new2}.

Let $\P_{Q}$ correspond to the probability under the distribution on all samples collected by strategy $\pi$ on problem $\nu^Q$. Thus, as we assume policy $\pi$ is PAC$(\delta,\epsilon)$, on all problems $\nu^Q$, we must have that, on all problems $\nu^Q : \sum_{i\in[\cG_m] } Q_i^m \geq \cG_m\lambda(W_m)/2$, 
\begin{equation}\label{eq:bigevdel1}
\P_{Q}(\cE_1^m) \leq \delta\;,
\end{equation}
Via similar reasoning we can show that on all problems  $\nu^Q : \sum_{i\in[\cG_m] } Q_i^m \leq \cG_m/2$, we must have that, 
\begin{equation}\label{eq:bigevdel0}
   \P_{Q}(\cE_0^m) \leq \delta\;.
\end{equation}

\paragraph{Step 4: bounding the probability of the sum of $\xi_{i,m}$}
Now, for $m\in [M]$, and $\nu^Q : \sum_{i\in[\cG_m] } Q_i^m \leq \cG_m/2$, via the Azuma hoeffding inequality applied to the martingale,
$$\sum_{i :[\cG_m] \in C_m, Q_i^m = 0}\left[\mathbf{1}(\xi_{i,m}) - \mathbb{P}_Q(\xi_{i,m})\right]\;,$$
we have that,

\begin{equation}\label{eq:mart}
\P_Q\left( \sum_{i \in[\cG_m] : Q_i^m = 0}\left[\mathbf{1}(\xi_{i,m}) - \mathbb{P}_Q(\xi_{i,m})\right] \geq\cG_m\log\left(\frac{1}{1-\delta}\right)\right)  \leq 1-\delta\;.   
\end{equation}
Thus via combination of Equations \eqref{eq:bigevdel0} and \eqref{eq:mart} we must have that, $ \forall Q : \sum_{i\in[\cG_m] } Q_i^m \leq \cG_m/2,$ 

\begin{equation}\label{eq:lbsumxik0}
\sum_{i \in [\cG_m] : Q_i^m = 0}\P_Q(\xi_{i,m}) \leq \frac{\cG_m}{4} -\cG_m\log\left(\frac{1}{1-\delta}\right) \leq \frac{3\cG_m}{8} \;,
\end{equation}
where the second inequality comes from our assumption $\delta \leq 1-\exp(-1/8)$. Via similar reasoning we also have that, $ \forall Q : \sum_{i\in[\cG_m] } Q_i^m \geq \cG_m\lambda(W_m)/2,$ 

\begin{equation}\label{eq:lbsumxik1}
\sum_{i \in [\cG_m] : Q_i^m = 0}\P_Q(\xi_{i,m}) \geq \frac{\cG_m}{2} - \cG_m\log\left(\frac{1}{1-\delta}\right) \geq \frac{5\cG_m}{8}\;.
\end{equation}

\paragraph{Step 5: applying a Fano type inequality}

We write $\tau_{m,i}$ for the total number of samples the learner draws on $[G_{i,m},G_{i+1,m})$ and $\tau_{(m)}$ for the total number of samples the learner draws on the set $W_m$. Let $Q^{(i)}$ be the transformation of $Q$ that flips the $i$th  coordinate,
\[
Q_a^{(k)} = \begin{cases}
Q_a \text{ If }a\neq k,\\
1 - Q_a\text{ If }a = k\,.
\end{cases}\] 
Consider the class of problems,
$$\mathfrak{Q}_0 = \left\{Q :\forall m \in [M], \sum_{i : [\cG_m]}Q_i^m = \frac{\cG_m}{2}-1\right\}\;,$$ 
and 
$$\mathfrak{Q} = \left\{Q :\forall m \in [M], \sum_{i : [\cG_m]}Q_i^m = \frac{\cG_m}{2}\right\}\;,$$ 
and fix $m\in[M]$, we see that, for all $Q \in \mathfrak{Q}_0, i: Q_i = 0$, there exists a unique $\tilde Q \in \mathfrak{Q}$ such that $\tilde Q^{(i)} = Q$, therefore, 
\begin{equation}\label{eq:fano}
\sum_{Q \in \mathfrak{Q}}\sum_{m\in [M]} \; \sum_{i \in [\cG_m] : Q_i  = 1} \P_{Q^{(i)}}(\xi_{i,m})  =\sum_{Q \in \mathfrak{Q}_0} \; \sum_{i \in [\cG_m] : Q_i  = 0} \P_{Q}(\xi_{i,m}) \leq \frac{3\cG_m}{8}\;,    
\end{equation}
where the final inequality follows from Equation \eqref{eq:lbsumxik0}.
Thus, via combination of Equations \eqref{eq:lbsumxik1} and \eqref{eq:fano}, and using the data processing inequality and the convexity of the relative entropy we have, 
\begin{align*}
&\kl\!\left(\underbrace{\frac{1}{|\mathfrak{Q}|}\sum_{Q \in \mathfrak{Q}}\sum_{m\in [M]} \frac{2}{\cG_m}\; \sum_{i \in \cG_m : Q_i  = 1}  \P_{Q^{(i)}}(\xi_{i,m})}_{\leq 6/8} ,\underbrace{\frac{1}{|\mathfrak{Q}|}\sum_{Q \in \mathfrak{Q}} \frac{2}{\cG_m}\; \sum_{i \in \cG_m : Q_i  = 1}  \P_{Q}(\xi_{i,m})}_{\geq 10/8}\right)\\
&\leq \frac{1}{|\mathfrak{Q}|}\sum_{Q \in \mathfrak{Q}} \; \frac{2}{\cG_m}\sum_{i \in \cG_m : Q_i  = 1} \E_Q[\tau_{m,i}] \frac{\kl(\eta(U_m)  - 4/3\Delta(U_m),\eta(U_m) + 4/3\Delta(U_m))}{2}\\
&\leq \frac{1}{|\mathfrak{Q}|}\sum_{Q \in \mathfrak{Q}} \; \frac{\E_Q[\tau_{(m)}]\kl(\eta(U_m)  - 4/3\Delta_{U_m},\eta(U_m)  + 4/3\Delta(U_m))}{\cG_m}\;.
\end{align*}
Then using the Pinsker inequality $\kl(x,y)\geq 2(x-y)^2$, we obtain
\begin{equation*}
\frac{10}{8}\leq \frac{3}{8} + \sqrt{\leq \frac{1}{|\mathfrak{Q}|}\sum_{Q \in \mathfrak{Q}}  \; \frac{\E_Q[\tau_{(m)}]\kl(\eta(U_m) - 4/3\Delta(U_m),\eta(U_m) + 4/3\Delta(U_m))}{\cG_m}}\;,
\end{equation*}
and therefore, 

$$\leq \frac{1}{|\mathfrak{Q}|}\sum_{Q \in \mathfrak{Q}} \; \frac{\E_Q[\tau_{(m)}]\kl(\eta(U_m)
 - \Delta(U_m),\eta(U_m) + \Delta(U_m))}{\cG_m} \geq \frac{9}{64}$$
thus

\begin{align}
\leq \frac{1}{|\mathfrak{Q}|}\sum_{Q \in \mathfrak{Q}} \E_Q[\tau_{(m)}] &\geq \; c'\frac{\cG_m}{\kl(\eta(U_m) - 4/3\Delta(U_m),\eta(U_m) + 4/3\Delta(U_m))}\\
&\geq \; c'\frac{\lambda(W_m) \Delta(U_m)^{\beta}}{\kl(\eta(U_m) - 4/3\Delta(U_m),\eta(U_m) + 4/3\Delta(U_m))}
\end{align}
and thus, 
$$\max_{Q\in \mathfrak{Q}}\sum_{m\in [M]}\E_Q[\tau_{(m)}] \geq \; c'\sum_{m\in [M]}\frac{\lambda(W_m) \Delta(U_m)^{\beta}}{\kl(\eta(U_m) - 4/3\Delta(U_m),\eta(U_m) + 4/3\Delta(U_m))}$$
where $c' > 0$ is an absolute constant changing line by line. The proof now follows, as $\forall m\in [M], x\in W_m$,
$$H(x)\geq \frac{ \Delta(U_m)^{\beta}}{\kl(\eta(U_m) - 4/3\Delta(U_m),\eta(U_m) + 4/3\Delta(U_m))}\;.$$
\end{proof}

\end{document}